\newcommand{\defeq}{\stackrel{\text { def. }}{=}}
\newcommand{\xmark}{}
\newcommand{\calD}{{\cal D}}
\newcommand{\calF}{{\cal F}}
\newcommand{\calX}{{\cal X}}
\newcommand{\calY}{{\cal Y}}
\newcommand{\calZ}{{\cal Z}}
\renewcommand{\hat}{\widehat}
\def\eqref#1{equation~\ref{#1}}
\def\1{\bm{1}}
\def\eps{{\epsilon}}
\DeclareMathAlphabet{\mathsfit}{\encodingdefault}{\sfdefault}{m}{sl}
\SetMathAlphabet{\mathsfit}{bold}{\encodingdefault}{\sfdefault}{bx}{n}
\DeclareMathOperator*{\argmin}{arg\,min}
\newcommand{\methodname}{\texttt{FWC}}
\newtheorem{theorem}{Theorem}[section]
\newtheorem{proposition}[theorem]{Proposition}
\newtheorem{lemma}[theorem]{Lemma}
\title{Fair Wasserstein Coresets}
\author{%
  Zikai Xiong\thanks{Operations Research Center, Massachusetts Institute of Technology, \texttt{zikai@mit.edu}}$\,$ $^\ddagger$
  \And
  Niccol\`o Dalmasso\thanks{J.P.Morgan AI Research, $\{$\texttt{niccolo.dalmasso, shubham.x2.sharma, freddy.lecue, daniele.magazzeni, vamsi.k.potluru, tucker.balch, manuela.veloso}$\}$\texttt{@jpmchase.com}}$\,$ $^\star$
  \And
  Shubham Sharma\footnotemark[2]
  \And
  Freddy LeCue\footnotemark[2]
  \And
  Daniele Magazzeni\footnotemark[2]
  \And
  Vamsi K. Potluru\footnotemark[2]
  \And
  Tucker Balch\footnotemark[2]
  \And
  Manuela Veloso\footnotemark[2]
}
\begin{document}

\maketitle

\begin{abstract}
Data distillation and coresets have emerged as popular approaches to generate a smaller representative set of samples for downstream learning tasks to handle large-scale datasets. At the same time, machine learning is being increasingly applied to decision-making processes at a societal level, making it imperative for modelers to address inherent biases towards subgroups present in the data. While current approaches focus on creating fair synthetic representative samples by optimizing local properties relative to the original samples, their impact on downstream learning processes has yet to be explored.  In this work, we present fair Wasserstein coresets (\texttt{FWC}), a novel coreset approach which generates fair synthetic representative samples along with sample-level weights to be used in downstream learning tasks. \texttt{FWC} uses an efficient majority minimization algorithm to minimize the Wasserstein distance between the original dataset and the weighted synthetic samples while enforcing demographic parity. We show that an unconstrained version of \texttt{FWC} is equivalent to Lloyd's algorithm for k-medians and k-means clustering. Experiments conducted on both synthetic and real datasets show that \texttt{FWC}:  (i) achieves a competitive fairness-utility tradeoff in downstream models compared to existing approaches, (ii) improves downstream fairness when added to the existing training data and (iii) can be used to reduce biases in predictions from large language models (GPT-3.5 and GPT-4).
\end{abstract}

\section{Introduction}

In the last decade, the rapid pace of technological advancement has provided the ability of collecting, storing and processing massive amounts of data from multiple sources \cite{sagiroglu2013big}. As the volume of data continues to surge, it often surpasses both the available computational resources as well as the capacity of machine learning algorithms. In response to this limitation, dataset distillation approaches aim to reduce the amount of data by creating a smaller, yet representative, set of samples; see \cite{yu2023dataset, lei2023comprehensive} for comprehensive reviews on the topic. Among those approaches, coresets provide a weighted subset of the original data that achieve similar performance to the original dataset in (usually) a specific machine learning task, such as clustering \cite{har2004coresets,feldman2020core}, Bayesian inference \cite{campbell2018bayesian}, online learning \cite{borsos2020coresets} and classification \cite{coleman2019selection}, among others. 

In tandem with these developments, the adoption of machine learning techniques has seen a surge in multiple decision-making processes that affect society at large \cite{sloane2022SiliconValleyLove, zhang2022ShiftingMachineLearning}.  This proliferation of machine learning applications has highlighted the need to mitigate inherent biases in the data, as these biases can significantly impact the equity of machine learning models and their decisions \cite{chouldechova2017fair}. Among many definitions of algorithmic fairness, demographic parity is one of the most prominently used metric \cite{hort2022BiasMitigationMachinea}, enforcing the distribution of an outcome of a machine learning model to not differ dramatically across different subgroups in the data.

Current methodologies for generating a smaller set of fair representative samples focus on the local characteristics of these samples with respect to the original dataset. For instance, \cite{chierichetti2017fair, huang2019coresets, backurs2019scalable, ghadiri2021socially} obtain representative points by clustering while enforcing each cluster to include the same proportion of points from each subgroup in the original dataset. In another line of work, \cite{jung2019center, mahabadi2020individual, negahbani2021better, vakilian2022improved, chhaya2022coresets} create representative points by ensuring that points in the original dataset each have at least one representative point within a given distance in the feature space. While these methods can successfully reduce clustering cost and ensure a more evenly spread-out distribution of representative points in the feature space, it is unclear whether such representative samples can positively affect performance or discrimination reduction in downstream learning processes. As the induced distribution of the representative points might be far away from the original dataset distribution, downstream machine learning algorithm might lose significant performance due to this distribution shift, without necessarily reducing biases in the original data (as we also demonstrate empirically in our experiments).

\paragraph{Contributions} In this work, we introduce \textbf{F}air \textbf{W}asserstein \textbf{C}oresets (\methodname), a novel coreset approach that not only generates synthetic representative samples but also assigns sample-level weights to be used in downstream learning tasks. \methodname\ generates synthetic samples by minimizing the Wasserstein distance between the distribution of the original datasets and that of the weighted synthetic samples, while simultaneously enforcing an empirical version of demographic parity. The Wasserstein distance is particularly suitable to this task due to its various connections with downstream learning processes and coresets generation (Section~\ref{sec: background}). Our contributions are as follows:

\begin{enumerate}
    \item we show how the \methodname\ optimization problem can be reduced to a nested minimization problem in which fairness constraints are equivalent to linear constraints (Section~\ref{sec: fair-wasserstein-coresets});
    \item we develop an efficient majority minimization algorithm \cite{ortega2000iterative, lange2016mm} to solve the reformulated problem (Section~\ref{sec: majority-minimization}). We analyze theoretical properties of our proposed algorithm and \methodname\ (Section~\ref{sec: convergence-guarantees}) and show that, in the absence of fairness constraints, our algorithm reduces to an equivalent version of Lloyd's algorithm for k-means and k-medians clustering, extending its applicability beyond fairness applications (Section~\ref{sec: lloyd-algo});
    \item we empirically validate the scalability and effectiveness of \methodname\ by providing experiments on both synthetic and real datasets (Section~\ref{sec: experiments}). In downstream learning tasks, \methodname\ result in competitive fairness-utility tradeoffs against current approaches, even when we enhance the fairness of existing approaches using fair pre-processing techniques, with an average disparity reduction of $53\%$ and $18\%$, respectively. In addition, we show \methodname\ can correct biases in large language models when passing coresets as examples to the LLM, reducing downstream disparities by $75\%$ with GPT-3.5 \cite{GPT35} by $35\%$ with GPT-4 \cite{achiam2023gpt}. Finally, we show \methodname\ can improve downstream fairness-utility tradeoffs in downstream models when added to the training data (via data augmentation, see Appendix~\ref{supp: exp-real-dataset-subsec}).
\end{enumerate}  

Finally, we refer the reader to the Appendix for more details on the optimization problem (Section~\ref{supp: cutting plane}), theoretical proofs (Section~\ref{supp: theory-proofs}) and further experiments and details (Section~\ref{supp: exp-details}).



\section{Background and Related Work} \label{sec: background}

\paragraph{Notation} 
We indicate the original dataset samples $\{Z_i\}_{i=1}^n$, with $Z_i = \left(D_i, X_i, Y_i\right) \in \calZ = (\calD \times \calX \times \calY) \subseteq \mathbb{R}^d$, where $X$ indicates the non-sensitive features, $D$ indicates one or more protected attributes such as ethnicity or gender, and $Y$ is a decision outcome. In this work, we assume $D$ and $Y$ to be discrete features, i.e., to have a finite number of levels so that $|\calD|\ll n$ and $|\calY| \ll n$. For example, $Y$ might indicate a credit card approval decision based on credit history $X$, with $D$ denoting sensitive demographic information. 
Given a set of weights $\{\theta\}_{i=1}^n$, define $p_{Z; \theta}$ the weighted distribution of a dataset $\{Z_i\}_{i=1}^n$ as 
$
p_{Z;\theta} = \frac{1}{n}\sum_{i=1}^n \theta_i \delta_{Z_i},
$
where $\delta_x$ indicates the Dirac delta distribution, i.e., the unit mass distribution at point $x \in \mathcal{X}$. Using this notation, we can express the empirical distribution of the original dataset by setting $\theta_i = e_i = 1$ for any $i$, i.e., $p_{Z;e} = \frac{1}{n} \sum_{i=1}^n e_i \delta_{Z_i}$.
For a matrix $A, A^{\top}$ denotes its transpose. For two vectors (or matrices) $\langle u, v\rangle \stackrel{\text { def. }}{=} \sum_i u_i v_i$ is the canonical inner product (the Frobenius dot-product for matrices). We define $\mathbf{1}_m \stackrel{\text { def. }}{=}(1 , \ldots, 1) \in$ $\mathbb{R}_{+}^m$. 

\paragraph{Wasserstein distance and coresets}
Given two probability distributions $p_1$ and $p_2$ over a metric space $\mathcal{X}$, the Wasserstein distance, or optimal transport metric, quantifies the distance between the two distributions as solution of the following linear program (LP):

\begin{equation}\label{pro original wasserstrein distance}
	\mathcal{W}_c(p_1, p_2) \stackrel{\text{def.}}{=} \min_{\pi \in \Pi(p_1, p_2)} \int_{\mathcal{X} \times \mathcal{X}} c(x_1, x_2) \mathrm{d} \pi(x_1, x_2),
\end{equation}

with $\Pi(p_1, p_2)$ indicating the set of all joint probability distributions over the product space $\mathcal{X} \times \mathcal{X}$ with marginals equal to $(p_1, p_2)$ \cite{kantorovitch1958translocation}. The operator $c(x,y)$ represents the ``cost'' of moving probability mass from $x$ and $y$, and reduces to a matrix $C$ if the underlying metric space $\calX$ is discrete.

The Wasserstein distance has several connections with downstream learning processes and coresets. Firstly, the higher the Wasserstein distance between the weighted representative samples $p_{\hat{Z}; \theta}$ and the original dataset $p_{Z;e}$, the higher the distribution shift, the more degradation we might expect in terms of downstream learning performance \cite{quinonero2022dataset}. Secondly, the Wasserstein distance between two probability distributions can also be used to bound the deviation of functions applied to samples from such distributions. Define the following deviation:
$$
d(p_{\hat{Z}; \theta}, p_{Z;e})\defeq \sup_{f\in\calF} \left|\mathbb{E}_{z\sim p_{\hat{Z}; \theta}} f(z)- \mathbb{E}_{z\sim p_{Z;e}} f(z) \right| \ .
$$

When $\calF$ is the class of Lipschitz-continuous functions with Lipschitz constant equal or less than $1$, the deviation $d(p_{\hat{Z}; \theta}, p_{Z;e})$ is equal to 1-Wasserstein distance $\mathcal{W}_1(p_{\hat{Z}; \theta}, p_{Z;e})$ \cite{santambrogio2015optimal,villani2009optimal}. The connection with learning processes and the downstream deviation $d(p_{(\hat{X}, \hat{D}); \theta}, p_{(X, D);e})$ is immediate when considering Lipschitz continuous classifiers with Lipschitz constant less than 1 (such as logistic regression or Lipschitz-constrained neural networks~\cite{anil2019sorting})\footnote{In such cases, $d(p_{(\hat{X}, \hat{D}); \theta}, p_{(X, D);e}) = \mathcal{W}_1(p_{(\hat{X}, \hat{D}); \theta}, p_{(X, D);e}) \leq \mathcal{W}_1(p_{\hat{Z}; \theta}, p_{Z;e})$, see Lemma~\ref{lemma: wass-dist-ub}.}. For other classifiers, we note that the 1-Wasserstein distance still bounds the downstream discrepancy: in Proposition~\ref{prop: wass-neuralnet} we show that the 1-Wasserstein distance bounds the downstream discrepancy for ReLu-activated multilayer perceptrons (MLPs), which we use in our experiments on real datasets (Section~\ref{sec: experiments}).

\begin{proposition}\label{prop: wass-neuralnet}
Let $g_\psi \in \mathcal{G}^K$ be the class of $K$-layer multilayer perceptrons with ReLu activations. Then, the downstream discrepancy in downstream performance of $g_\psi$ applied to samples from $p_{(\hat{X}, \hat{D}); \theta}$ and $p_{(X, D);e}$ is bounded by the 1-Wasserstein distance :

\begin{align}
    d(p_{(\hat{X}, \hat{D}); \theta}, & p_{(X, D);e}) = \notag \\ \sup_{g_\psi \in \mathcal{G^K}} &\left|\mathbb{E}_{(x, d)\sim p_{(\hat{X}, \hat{D}); \theta}} g_\psi(x, d)- \mathbb{E}_{(x,d) \sim p_{(X, D);e}} g_\psi(x, d) \right| \leq L_k \mathcal{W}_1(p_{\hat{Z}; \theta}, p_{Z;e}), \label{prop: downstream-discrepancy-equation}
\end{align}

where $L_k$ is the MLP Lipschitz constant upper bound defined in \cite[Section 6.1, Equation (8)]{virmaux2018lipschitz}.
\end{proposition}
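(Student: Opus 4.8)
The plan is to reduce the claim to the Kantorovich--Rubinstein dual characterization of the $1$-Wasserstein distance recalled above, combined with the marginalization bound of Lemma~\ref{lemma: wass-dist-ub}. The argument rests on three observations: every network in $\mathcal{G}^K$ is globally Lipschitz with a computable constant; rescaling turns an $L_k$-Lipschitz function into a $1$-Lipschitz one; and dropping the $Y$-coordinate is nonexpansive, so the Wasserstein distance between the $(X,D)$-marginals is dominated by the Wasserstein distance over the full space $\mathcal{Z}$.

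First I would invoke the Lipschitz bound for ReLu MLPs. Since each affine layer $z\mapsto Wz+b$ has Lipschitz constant equal to the spectral norm $\|W\|_2$ and the ReLu activation is $1$-Lipschitz, the $K$-fold composition defining $g_\psi$ is Lipschitz with constant at most the product of the layer spectral norms, which is precisely the upper bound $L_k$ of \cite[Section 6.1, Equation (8)]{virmaux2018lipschitz}. I would emphasize that this bound is \emph{architecture-level}, hence uniform over the class: every $g_\psi\in\mathcal{G}^K$ satisfies $\mathrm{Lip}(g_\psi)\le L_k$ on $\mathcal{X}\times\mathcal{D}$.

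Next I would normalize and dualize. For a fixed $g_\psi$ the function $f\defeq g_\psi/L_k$ is $1$-Lipschitz, so the duality identity relating the deviation $d(\cdot,\cdot)$ over $1$-Lipschitz functions to $\mathcal{W}_1$ gives
\begin{equation*}
\left|\mathbb{E}_{p_{(\hat X,\hat D);\theta}} g_\psi - \mathbb{E}_{p_{(X,D);e}} g_\psi\right| = L_k\left|\mathbb{E}_{p_{(\hat X,\hat D);\theta}} f - \mathbb{E}_{p_{(X,D);e}} f\right| \le L_k\,\mathcal{W}_1\big(p_{(\hat X,\hat D);\theta}, p_{(X,D);e}\big).
\end{equation*}
Since the right-hand side is independent of $\psi$, taking the supremum over $g_\psi\in\mathcal{G}^K$ preserves the inequality and yields $d(p_{(\hat X,\hat D);\theta}, p_{(X,D);e})\le L_k\,\mathcal{W}_1(p_{(\hat X,\hat D);\theta}, p_{(X,D);e})$. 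Finally I would close the chain with Lemma~\ref{lemma: wass-dist-ub}, which bounds $\mathcal{W}_1(p_{(\hat X,\hat D);\theta}, p_{(X,D);e})\le\mathcal{W}_1(p_{\hat Z;\theta}, p_{Z;e})$ by pushing any transport plan on $\mathcal{Z}\times\mathcal{Z}$ forward through the nonexpansive coordinate projection $(d,x,y)\mapsto(x,d)$ without increasing cost; substituting this into the previous bound gives the stated inequality.

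The main obstacle is the supremum-scaling step rather than any single estimate: because $\mathcal{G}^K$ is only a subset of the $L_k$-Lipschitz functions, the Kantorovich--Rubinstein identity must be used as a one-sided inequality (which suffices here), and the Lipschitz constant must be controlled \emph{uniformly} over the class. The latter is exactly why I would take $L_k$ to be the architecture-level product-of-spectral-norms bound of \cite{virmaux2018lipschitz} rather than a network-dependent quantity; with that choice the three steps compose cleanly.
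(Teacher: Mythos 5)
Your proposal is correct and follows essentially the same route as the paper's proof: both arguments normalize the network by its Lipschitz constant, apply the one-sided Kantorovich--Rubinstein bound for $1$-Lipschitz functions, invoke Lemma~\ref{lemma: wass-dist-ub} to pass from the $(X,D)$-marginals to the full joint distributions, and finish with the bound $L_{f_\theta}\le L_k$ from \cite[Section 6.1]{virmaux2018lipschitz}. The only cosmetic difference is that you rescale by the architecture-level constant $L_k$ up front rather than by the network-specific constant first, which changes nothing of substance.
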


We point out that minimizing the Wasserstein distance, hence bounding the downstream performance as in Equation (\ref{prop: downstream-discrepancy-equation}), is equivalent to the definition of a measure coresets proposed by \cite{claici2018wasserstein}. The Wasserstein distance is also connected to coresets in the sense of the best discrete approximation of a continuous distributions. Considering a feature space endowed with a continuous distribution, minimizing the $p$-Wasserstein distance across all the distributions of size $m$ is topologically equivalent to identify the samples of size $m$ that provide the best Voronoi tessellation of the space in $L_p$ sense \cite{pages2015introduction, liu2020characterization}. Although other definitions of coresets using Wasserstein distance have been proposed in the literature, they require either to solve the underlying optimal transport problems or the knowledge of the downstream classifier and loss function \cite{claici2018wasserstein, mirzasoleiman2020coresets, zhao2021dataset, loo2022efficient}. \methodname\ is agnostic to any downstream model or loss function, and uses an efficient implementation that does not actually incur in the usual high cost connected to optimal transport. 
By adapting the approach proposed by \cite{Xiong2023FairWASP}, we solve an equivalent re-formulated linear optimization problem, which is more computationally tractable than classic approaches such as the simplex or the interior point method (see Section~\ref{sec: fairwasp}).

\paragraph{Demographic parity}
Also known as statistical parity, demographic parity (DP) imposes the decision outcome and protected attributes to be independent \cite{dwork2012FairnessAwareness}. Using a credit card approval decision example, demographic parity enforces an automatic decision process to approve similar proportion of applicants across different race demographics. DP is one of the most extensively analyzed fairness criterion; we refer the reader to \cite{hort2022BiasMitigationMachinea} for a review. In this work, we use the demographic parity definition by \cite{calmon2017optimized}, which enforces the ratio between the conditional distribution of the decision outcome across each subgroups $p(y | D=d)$ and the marginal distribution of the decision outcome $p(y)$ to be close to $1$ in a given dataset. We refer to the ``empirical version'' of demographic parity to indicate that the conditional and marginal distributions are quantities estimated from the data. Note that the demographic parity definition we adopt enforces a condition on the weighted average of the conditional distributions across groups, which is different from a more recent approach of using Wasserstein distance, and specifically Wasserstein barycenters, to enforce demographic parity \cite{gordaliza2019obtaining, jiang2020wasserstein, gaucher2023fair, xu2023fair}.




\section{\methodname: Fair Wasserstein Coresets} \label{sec: fair-wasserstein-coresets}


Given a dataset $\{Z_i\}_{i=1}^n$, our goal is to find a set of samples $\{\hat{Z}_j\}_{j=1}^m$ and weights $\{ \theta_j\}_{j=1}^m$ such that $m\ll n$ and that the Wasserstein distance between $p_{Z;e}$ and $p_{\hat{Z};\theta}$ is as small as possible.
In addition, we use the fairness constraints proposed by \cite{calmon2017optimized} to control the demographic parity violation for the $p_{\hat{Z};\theta}$ distribution. Let $p_{\hat{Z};\theta}(y | d)$ indicate the conditional distribution $p_{\hat{Z};\theta}(\hat{Y} = y | \hat{D} = d)$. Imposing a constraint on the demographic disparity violation then reduces to requiring the conditional distribution under the weights $\{\theta_i\}_{i\in[n]}$ to be close to a target distribution $p_{Y_T}$ across all possible values of the protected attributes $D$,
\begin{equation}\label{type1}
	J\left(p_{\hat{Z};\theta}(y | d), p_{Y_T}(y)\right) = \left| \frac{p_{\hat{Z};\theta}(y | d)}{p_{Y_T}(y)} - 1\right| \leq \epsilon, \ \forall \ d \in \mathcal{D}, y \in\calY,
\end{equation}
where $\epsilon$ is a parameter that determines the maximum fairness violation, and $J(\cdot, \cdot)$ is the probability ratio between distributions as defined in \cite{calmon2017optimized}.

Using the notation above, our goal can then be formulated as the following optimization problem:
\begin{equation}\label{pro general optimization}
    \begin{aligned}
    & \min_{\theta \in\Delta_m, \hat{Z} \in \calZ^m} \ \mathcal{W}_c(p_{\hat{Z};\theta},  p_{Z;e}) \\
    &\ \  \ \quad \ \text{s.t. } \ \ J\left(p_{\hat{Z};\theta}(y | d), p_{Y_T}(y)\right) \leq \epsilon, \ \forall \ d \in \mathcal{D}, y \in\calY,
    \end{aligned}
\end{equation}
 where $\Delta_m$ indicates the set of valid weights $\{\theta \in \mathbb{R}^m_+: \sum_{i=1}^m \theta_i = m\}$. Note that the optimization problem in (\ref{pro general optimization}) shares some similarities with the optimization problem in \cite{Xiong2023FairWASP} in using the Wasserstein distance as a distance metric between distributions and using (\ref{type1}) to enforce demographic parity. However, \cite{Xiong2023FairWASP} only provide sample-level integer weights for the original dataset and do not generate any new samples, while our approach provides a separate set of samples $\{\hat{Z}_j\}_{j=1}^m$ with associated real-valued weights $\{ \theta_j\}_{j=1}^m$, with $m \ll n$.

 We now take the following steps to solve the optimization problem in (\ref{pro general optimization}): (i) we reduce the dimensionality of the feasible set by fixing $\hat{Y}$ and $\hat{D}$ a priori, (ii) we formulate the fairness constraints as linear constraints, (iii) we add artificial variables to express the objective function and (iv) we simplify the optimization problem to minimizing a continuous non-convex function of the $\{\hat{X}_j\}_{j=1}^m$. 

 \paragraph{Step 1. Reduce the feasible set of the optimization problem }

As in practice all possible $Y_i$ and $D_i$ are known a priori, and there are only a limited number of them, we can avoid optimizing over them and instead manually set the proportion of each combination of $\hat{Y}$ and $\hat{D}$. This reduces the optimization problem feasible set only over $\Delta_m$ and $\calX^m$. The following lemma shows that this in fact does not affect the optimization problem:


\begin{lemma}\label{lem: optimization-redux}
    For any $m > 0$, the best fair Wasserstein coreset formed by $m$ data points $\{\hat{Z}_i: i\in[m]\}$ is no better (i.e., the optimal Wasserstein distance value is no lower) than the best fair Wasserstein coreset formed by $m|\calD||\calY|$ data points $ \{(d,X_i, y)_i: i\in[m], d \in \calD, y \in \calY\} $.
\end{lemma}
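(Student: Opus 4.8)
The plan is to establish the inequality by an explicit feasibility-preserving embedding: I will take an arbitrary feasible solution of the $m$-point problem and map it to a feasible solution of the $m|\calD||\calY|$-point problem that induces \emph{exactly} the same weighted distribution $p_{\hat{Z};\theta}$, so that both the Wasserstein objective in (\ref{pro general optimization}) and the demographic-parity constraints in (\ref{type1}) take identical values. The conceptual content is simply that the larger structured candidate set $\{(d, X_i, y): i\in[m], d\in\calD, y\in\calY\}$ is rich enough to reproduce, via a suitable choice of weights, every configuration realizable by $m$ freely chosen coreset points.

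First I would fix notation for the two problems. Let problem (A) denote (\ref{pro general optimization}) with $m$ free points $\hat{Z}_i = (\hat{D}_i, \hat{X}_i, \hat{Y}_i)$ and weights $\theta \in \Delta_m$, and let problem (B) denote the analogous problem whose support is the $M \defeq m|\calD||\calY|$ enumerated points $(d, X_i, y)$, optimized over the shared features $\{X_i\}_{i=1}^m$ and weights $\vartheta \in \Delta_M$. Fix any feasible $(\{\hat{Z}_i\}_{i\in[m]}, \theta)$ for (A), attaining objective value $\mathcal{W}_c(p_{\hat{Z};\theta}, p_{Z;e})$.

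Next I would construct the embedding. Set the base features of (B) to $X_i \defeq \hat{X}_i$, so that the enumerated point indexed by $(i, \hat{D}_i, \hat{Y}_i)$ coincides with $\hat{Z}_i$. Assign weight $\vartheta_{(i,\hat{D}_i,\hat{Y}_i)} \defeq |\calD||\calY|\,\theta_i$ and set all remaining weights to zero. Then $\sum \vartheta = |\calD||\calY| \sum_i \theta_i = |\calD||\calY|\, m = M$, so $\vartheta \in \Delta_M$, and the induced measure is $\frac{1}{M}\sum_j \vartheta_j \delta_{Z_j} = \frac{|\calD||\calY|}{M}\sum_i \theta_i \delta_{\hat{Z}_i} = \frac{1}{m}\sum_i \theta_i \delta_{\hat{Z}_i} = p_{\hat{Z};\theta}$. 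Since the fairness functional $J$ in (\ref{type1}) depends on the candidate only through the conditional and marginal distributions of this induced measure, and the Wasserstein objective likewise depends only on the measure, the constructed point is feasible for (B) and attains the very same objective value as the original solution for (A).

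Finally, since every feasible solution of (A) maps to a feasible solution of (B) of identical objective value, taking the infimum over (A) gives $\opt(B) \leq \opt(A)$, i.e.\ the $m$-point optimum is no lower than the $m|\calD||\calY|$-point optimum, as claimed. I do not expect a genuine conceptual obstacle here; the one place that requires care is the normalization, since the weight budgets of $\Delta_m$ and $\Delta_M$ differ, so the embedding must rescale the weights by the factor $|\calD||\calY|$ to keep the induced \emph{probability} measure — and hence the values of both the objective and the constraint — invariant. Verifying this bookkeeping, together with the observation that both objective and constraint are functions of the induced measure alone, is the main thing to get right.
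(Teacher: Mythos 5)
Your proof is correct and follows essentially the same route as the paper, which simply asserts that the feasible set of the $m|\calD||\calY|$-point problem contains (an embedded copy of) the feasible set of the $m$-point problem. Your version makes that containment explicit — in particular the $|\calD||\calY|$ rescaling of the weights needed because $\Delta_m$ and $\Delta_{m|\calD||\calY|}$ have different budgets — which is a detail the paper's one-line proof leaves implicit.
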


Hence, we simply set the proportions of $\{(\hat{D}_i, \hat{Y})_i\}_{i\in[m]}$ in the coresets to be similar to their respective proportions in the original dataset.
The optimization problem then reduces to
\begin{equation}\label{pro general optimization model}
    \begin{aligned}
		\min_{\theta \in \Delta_m, \hat{X} \in \calX^m} \ & \mathcal{W}_c(p_{\hat{Z};\theta},  p_{Z;e})\\
		\text{ s.t.}\ \ \ \ \ \ \ \ & J\left(p_{\hat{Z};\theta}(y | d), p_{Y_T}(y)\right) \leq \epsilon, \  \forall \ d \in \mathcal{D}, y \in\calY \ ,
	\end{aligned}
\end{equation}
whose  solutions are the features in the coreset $\{\hat{X}_j\}_{j=1}^m$ and the corresponding weights $\{\theta_j\}_{j=1}^m$.

\paragraph{Step 2. Equivalent linear constraints}

Following \cite{Xiong2023FairWASP}, the fairness constraint in Equation (\ref{type1}) can be expressed as  $2 |\calY| |\calD|$ linear constraints on the weights $\theta$, as the disparity reduces to the following for all $d \in \mathcal{D}, y \in\calY$:
{\small\begin{equation*}
       {\sum_{i\in[m]:\\ \hat{D}_i = d,\hat{Y}_i=y}\theta_i} 
		\le (1+\epsilon)\cdot  p_{Y_T}(y) \cdot
		{\sum_{i\in[m]:\hat{D}_i = d}\theta_i} \ \text{,} \ {\sum_{i\in[m]:\hat{D}_i = d,\hat{Y}_i=y}\theta_i} 
		\ge (1 - \epsilon)\cdot p_{Y_T}(y) \cdot
		{\sum_{i\in[m]:\hat{D}_i = d}\theta_i} \ .
\end{equation*}}


We can express these by using a $2 |\calY| |\calD|$-row matrix $A$ as $A\theta \ge \mathbf{0}$. 

\paragraph{Step 3. Reformulate the objective function by introducing artificial variables}
When keeping the samples $\hat{X}$ fixed, we can follow \cite{peyre2019computational} to derive an equivalent formulation of the Wasserstein distance in the objective as a linear program with $m  n$ variables. By indicating the transportation cost matrix $C(\hat{X})$, we define its components as follows,
$$
C(\hat{X})_{ij} \defeq c(Z_i, \hat{Z}_j), \text{ for } i \in [n], j \in [m]\ .
$$
Note that $C(\hat{X})$ is a convex function of $\hat{X}$ when, e.g., using any $L^p$ norm to define the transportation cost.
Therefore, now the  problem  (\ref{pro general optimization model}) is equivalent to 
\begin{equation}\label{pro target problem}
	\begin{aligned}
		&\min_{\hat{X} \in \calX^m, \theta \in \Delta_m, P \in \mathbb{R}^{n\times m}} \  \langle C(\hat{X}) , P\rangle  
		\\
		& \quad \quad \quad \ \ \ \text{ s.t.} \ \ \   P \mathbf{1}_m = \frac{1}{n}\cdot\mathbf{1}_n, \ P^\top \mathbf{1}_n = \frac{1}{m}\cdot \theta, \ P \ge \mathbf{0}, \ A\theta \ge \mathbf{0} \ .
	\end{aligned}
\end{equation}

\paragraph{Step 4. Reduce to an optimization problem of $\hat{X}$}
As from one of the constraints we get $\theta= m \cdot P^\top \mathbf{1}_n$, we further simplify problem (\ref{pro target problem}) as: 
\begin{equation}\label{pro simplified target problem}
	\begin{aligned}
		\min_{\hat{X}\in\calX^m, P\in\mathbb{R}^{n\times m}} \ &  \langle C(\hat{X}) , P\rangle  
		\\
		\text{ s.t.} \ \ \ \ \ \  \ \ \ \ \ 
		& P \mathbf{1}_m =\frac{1}{n} \cdot \mathbf{1}_n, \  P \ge \mathbf{0} , \
		 A P^\top \mathbf{1}_n \ge \mathbf{0} \ . 
	\end{aligned}
\end{equation}
Let $F(C)$, as a function $F$ of $C$, denote the optimal objective value of the following optimization problem
\begin{equation}\label{pro define FC}
    	\begin{aligned}
		\min_{P\in \mathbb{R}^{n\times m}} \ &  \langle C , P\rangle  
		\\
		\text{ s.t.} \ \ \ \ \  &  P \mathbf{1}_m = \frac{1}{n}\cdot \mathbf{1}_n, \  P \ge \mathbf{0} , \ A P^\top \mathbf{1}_n \ge \mathbf{0} \ 
	\end{aligned}
\end{equation}
and then problem (\ref{pro simplified target problem}) is equivalent to
\begin{equation}\label{pro simplified target problem 2}
	\min_{\hat{X}\in\calX^m} \ F(C(\hat{X})) \ . 
\end{equation}
In (\ref{pro simplified target problem 2}) the objective is continuous but nonconvex with respect to $\hat{X}$.
Once the optimal $\hat{X}^\star$ is solved, then the optimal $P^\star$ of the problem (\ref{pro simplified target problem}) is obtained by solving problem (\ref{pro define FC}) with $C$ replaced with $C(\hat{X}^\star)$. Finally, the optimal $\theta^\star$ follows by the equation $\theta^\star =m\cdot  (P^\star)^\top \mathbf{1}_n$. We now provide a majority minimization algorithm for solving problem (\ref{pro simplified target problem 2}).

\section{Majority Minimization for Solving the Reformulated Problem}  \label{sec: majority-minimization}



Majority minimization aims at solving nonconvex optimization problems, and refers to the process of defining a convex surrogate function that upper bounds the nonconvex objective function, so that optimizing the surrogate function improves the objective function \cite{ortega2000iterative, lange2016mm}. As the algorithm proceeds, the surrogate function also updates accordingly, which ensures the value of the original objective function keeps decreasing.  
Following this framework, we define the surrogate function $g(\cdot; \hat{X}^k)$ as follows for the $k$-th iterate $\hat{X}^k \in \calX^m $:
\begin{equation}\label{eq surrogate function}
    g(\hat{X}; \hat{X}^k)\defeq \langle C(\hat{X}), P_k^\star \rangle \ ,
\end{equation}
in which $P_k^\star$ is the minimizer of  problem (\ref{pro define FC}) with the cost $C = C(\hat{X}^k)$\footnote{We show this surrogate function is adequate, i.e., is convex and an upper bound of the original objective function, in Section~\ref{sec: convergence-guarantees}.}.

With this surrogate function, Algorithm \ref{alg MM method} summarizes the overall algorithm to minimize problem (\ref{pro simplified target problem 2}). In each iteration of Algorithm \ref{alg MM method}, line 3 is straightforward since it only involves computing the new cost matrix using the new feature vectors $\hat{X}^k$. We separately discuss how to solve the optimization problems in lines 4 and 5 below.

\begin{algorithm*}
  \caption{Majority Minimization for Solving (\ref{pro simplified target problem 2})}\label{alg MM method}
  \begin{algorithmic}[1] 
    \STATE Initial feature vectors $\hat{X}^k$ and $k = 0$
    \WHILE{True}
      \STATE $C \gets C(\hat{X}^k)$;  $\hfill \rhd$ \textit{update the cost matrix $C$}
      \STATE $P_k^\star \gets $ optimal solution of problem (\ref{pro define FC}); $\hfill \rhd$  \textit{updating the surrogate function (Section~\ref{sec: fairwasp})}
    \STATE $\hat{X}^{k+1} \gets \arg\min_{\hat{X}\in\calX^m}g(\hat{X}; \hat{X}^k)$; $\hfill \rhd$ \textit{updating feature vectors (Section~\ref{sec: update-feature-vec})}
    \IF{ $g(\hat{X}^{k+1}; \hat{X}^k) = g(\hat{X}^k; \hat{X}^k)$ }  
        \STATE $\theta_k^\star \gets m\cdot  (P_k^\star)^\top \mathbf{1}_n;$ $\hfill \rhd$ \textit{if algorithm has converged, compute optimal weights}
        \STATE \textbf{return} $\hat{X}^k$, $\theta_k^\star$  $\hfill \rhd$ \textit{return coresets and sample-level weights}
    \ENDIF
    \STATE $k \gets k + 1$;
    \ENDWHILE
  \end{algorithmic}
\end{algorithm*}

\subsection{Updating the Surrogate Function (Line 4)}\label{sec: fairwasp} 

To update the surrogate function, we need to solve problem (\ref{pro define FC}), which is a large-scale linear program. Rather than solving the computationally prohibitive dual problem we solve a lower-dimensional dual problem by using a variant of the \texttt{FairWASP} algorithm proposed by \cite{Xiong2023FairWASP}. We adapt \texttt{FairWASP} for cases where $m \neq n$ to find the solution of (\ref{pro define FC}) via applying the cutting plane methods on the Lagrangian dual problems with reduced dimension\footnote{As opposed to the scenario where $m=n$, which was tackled by \cite{Xiong2023FairWASP}.}. We choose \texttt{FairWASP} over established commercial solvers due to its computational complexity being lower than other state of the art approaches such as interior-point or simplex method; see Lemma~\ref{cor our complexity-app} in Appendix~\ref{supp: cutting plane} for more details.

\subsection{Updating Feature Vectors (Line 5)} \label{sec: update-feature-vec}
To update the feature vectors, we need to obtain the minimizer of the  surrogate function $g(\hat{X}; \hat{X}^k)$, i.e.,
\begin{equation}\label{eq minimizer of surrogate function main paper}
    \min_{\hat{X} \in \calX^m} g(\hat{X}; \hat{X}^k) \ .
\end{equation}  
The above can be written as the following problem:
\begin{equation}\label{pro equivalent minimization}
    \min_{\hat{X}_j \in \calX: j\in[m]}  \sum_{i\in[n]}\sum_{j\in[m]}c({Z}_i,\hat{Z}_j)P_{ij}     \ 
\end{equation}
for $P = P_k^\star$, in which each component of $P$ is nonnegative and $\hat{Z}_j = (\hat{d}_j, \hat{X}_j, \hat{y}_j)$, for the known fixed $\hat{d}_j$ and $\hat{y}_j$. Furthermore, the matrix $P$ is sparse, containing at most $n$ non-zeros (as when updating $P_k^\star$ for problem (\ref{pro define FC}), see Appendix~\ref{supp: cutting plane}). 
Moreover, problem (\ref{pro equivalent minimization}) can be separated into the following $m$ subproblems,
\begin{equation}\label{pro equivalent minimization 2}
    \min_{\hat{X}_j \in \calX}   \sum_{i\in[n]}c(Z_i,\hat{Z}_j)P_{ij}  \ \text{, \ for } j \in [m] \ .
\end{equation}
Each subproblem computes the weighted centroid of $\{Z_i:i\in[n], P_{ij} > 0 \}$ under the distance function $c$.  Therefore, (\ref{eq minimizer of surrogate function main paper}) is suitable for parallel and distributed computing. Additionally, since the cost matrix $C(\hat{X})$ is a convex function of $\hat{X}$, each subproblem is a convex problem so  gradient-based methods  could converge to global minimizers. Furthermore, under some particular conditions, solving these small subproblems can be computationally cheap:
\begin{enumerate}
    \item If $\calX$ is convex and $c(Z,\hat{Z})\defeq \|Z-\hat{Z}\|_2^2$, then the minimizer  of  (\ref{pro equivalent minimization 2}) is the weighted average $\sum_{i\in[n]}P_{ij}X_i/\sum_{i\in[n]}P_{ij}$.
    \item If $\calX$ is convex and $c(Z,\hat{Z})\defeq\|Z-\hat{Z}\|_1$, then the minimizer  of  (\ref{pro equivalent minimization 2}) requires sorting the costs coordinate-wisely and finding the median.  
    \item If creating new feature vectors is not permitted and  $\calX = \{X_i: i\in[n]\}$, solving (\ref{pro equivalent minimization 2})  requires finding the smallest $\sum_{i\in[n]:P_{ij}\neq 0}c(Z_i,(\hat{d}_j,X,\hat{y}_j))P_{ij} $ for $X$ within the finite set $\calX$. The matrix $P$ is highly sparse so this operation is not computationally expensive. 
\end{enumerate}


\section{Theoretical Guarantees} \label{sec: convergence-guarantees}

In this section we provide theoretical insights on \methodname\ complexity, convergence behavior of Algorithm~\ref{alg MM method} as well as generalizability of \methodname\ performance on unseen test sets.

\subsection{Computational Complexity}

First, we consider the \texttt{FairWASP} variant used in Algorithm~\ref{alg MM method}, line 4. The initialization requires $O(mn)$ flops and uses $O(n|Y||D|)$ space for storing the cost matrix. After that, the per-iteration time and space complexities are both only $O(n|Y||D|)$. Lemma~\ref{main: our complexity-app} analyzes the computational complexity of our adaptation of the \texttt{FairWASP} algorithm when solving problem (\ref{pro define FC}).

\begin{lemma}\label{main: our complexity-app}
    With efficient computation and space management, the cutting plane method 
    has a computational complexity of
    \begin{equation}\label{eq complexity of cutting plane n-not-m}
        \tilde{O}
    \left(nm + |\calD|^2|\calY|^2 n \cdot \log(R/\eps)
    \right) 
    \end{equation}
    flops and $ {O}( n|\calD||\calY| ) $ space. Here $R$ denotes the size of an optimal dual solution of (\ref{pro define FC}), and $\tilde{O}(\cdot)$ absorbs $m$, $n$, $|\calD|$, $|\calY|$ in the logarithm function.
\end{lemma}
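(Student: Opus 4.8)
The plan is to prove the bound by exhibiting the cutting-plane scheme explicitly and accounting for its cost, following the blueprint of \texttt{FairWASP} \cite{Xiong2023FairWASP} but re-derived for general $m\neq n$. First I would form the Lagrangian dual of (\ref{pro define FC}): introduce multipliers $\alpha\in\mathbb{R}^n$ for the row-sum constraints $P\mathbf{1}_m=\frac1n\mathbf{1}_n$ and $\beta\in\mathbb{R}^q_+$ (with $q\defeq 2|\calD||\calY|$) for the fairness constraints $AP^\top\mathbf{1}_n\ge\mathbf{0}$. Since the Lagrangian is linear in $P\ge\mathbf{0}$, minimizing over $P$ forces the reduced-cost condition $\alpha_i+(A^\top\beta)_j\le C_{ij}$ and leaves the dual objective $\frac1n\sum_i\alpha_i$. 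For fixed $\beta$ the optimal $\alpha$ is $\alpha_i=\min_{j\in[m]}\big(C_{ij}-(A^\top\beta)_j\big)$, so the dual collapses to the $q$-dimensional concave maximization $\max_{\beta\ge\mathbf{0}}h(\beta)$ with $h(\beta)=\frac1n\sum_{i\in[n]}\min_{j\in[m]}(C_{ij}-(A^\top\beta)_j)$, a minimum of linear functions and hence piecewise-linear concave.

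The crux of the argument --- and the step I expect to be the main obstacle --- is showing that a separation/subgradient oracle for $h$ costs only $O(n|\calD||\calY|)$ rather than the naive $O(nm)$. The key structural observation is that column $j$ of $A$ depends only on the group $(\hat d_j,\hat y_j)$ of coreset point $j$, of which there are at most $|\calD||\calY|$; hence $(A^\top\beta)_j=w_g$ is constant across all $j$ in a group $g$. Consequently $\min_{j\in[m]}(C_{ij}-(A^\top\beta)_j)=\min_g\big(M_{ig}-w_g\big)$, where $M_{ig}\defeq\min_{j:\,\mathrm{group}(j)=g}C_{ij}$. I would precompute the matrix $M\in\mathbb{R}^{n\times|\calD||\calY|}$ in a single $O(nm)$ pass over the cost matrix and store it in $O(n|\calD||\calY|)$ space; thereafter each oracle call evaluates $\alpha_i=\min_g(M_{ig}-w_g)$ and, by Danskin's theorem, assembles a subgradient of $h$ from the group-counts of the row-wise minimizers, all in $O(n|\calD||\calY|)$ time.

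With the oracle in hand I would invoke a modern cutting-plane method (e.g.\ Vaidya's volumetric-center method or Lee--Sidford--Wong), which solves a $q$-dimensional concave problem of domain radius $R$ to accuracy $\eps$ using $\tilde O(q)\cdot\log(R/\eps)$ oracle calls. Multiplying the iteration count by the per-iteration oracle cost gives $\tilde O\big(q\big)\cdot O(nq)\cdot\log(R/\eps)=\tilde O\big(n|\calD|^2|\calY|^2\log(R/\eps)\big)$, since $q=2|\calD||\calY|$; the per-iteration linear-algebra overhead of the cutting-plane method ($O(q^2)$ or $O(q^\omega)$) is dominated because the number of groups $q$ is at most $n$. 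Adding the $O(nm)$ cost of building $M$ yields the claimed flop count $\tilde O\big(nm+|\calD|^2|\calY|^2n\log(R/\eps)\big)$.

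Finally I would confirm the $O(n|\calD||\calY|)$ space bound --- only $M$, the iterate $\beta$, and $O(q)$ auxiliary vectors are stored, never the dense $P$ --- and recover the primal optimum: from the optimal $\alpha,\beta$ the reduced costs identify, for each row $i$, an active column, giving a primal $P^\star$ with at most $n$ nonzeros (matching the sparsity noted for (\ref{pro equivalent minimization})). Beyond the oracle-reduction step, the remaining care is in adapting the Xiong et al.\ analysis from $m=n$ to general $m$ --- which amounts to re-deriving the dual after $\theta$ has already been eliminated in (\ref{pro define FC}) and checking the group reduction is unaffected --- and in making the bound $R$ on the optimal dual size precise, both of which are routine once the dual form above is fixed.
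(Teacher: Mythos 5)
Your proposal is correct and follows essentially the same route as the paper, which dualizes only the fairness constraints to obtain a $2|\calD||\calY|$-dimensional concave dual with a closed-form inner minimization over $S_{n,m}$ (Lemma~\ref{lm minimizer P}) and then applies the cutting-plane method of Jiang et al., deferring the complexity accounting to Corollary~5 of \cite{Xiong2023FairWASP}. Your explicit group-reduction of the separation oracle to $O(n|\calD||\calY|)$ via the precomputed matrix $M_{ig}=\min_{j:\,\mathrm{group}(j)=g}C_{ij}$ is exactly the mechanism that the paper relies on implicitly when it asserts the $O(n|\calD||\calY|)$ per-iteration cost, so you have in fact supplied the detail the paper leaves to the citation.
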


Hence, the overall complexity of \methodname\ is $\tilde{O}(mn + |\mathcal{D}|^2|\mathcal{Y}|^2 n \cdot \log(R/\epsilon))$. Note that in practice, both $|\mathcal{D}|$ and $|\mathcal{Y}|$ are very small compared with the coreset size $m$ and dataset size $n$, so the overall complexity is almost as low as $O(mn)$.

\subsection{Convergence Guarantees}

First, we establish that our proposed surrogate function is indeed convex and a valid upper bound. We then show  Algorithm \ref{alg MM method} converges to a first-order stationary point, within finite iterations if the minimizer of problem (\ref{eq minimizer of surrogate function main paper}) is unique. Note that because $g(\hat{X}; \hat{X}^k) = \langle C(\hat{X}), P_k^\star \rangle$, the minimizer is unique whenever the cost matrix $C(\cdot)$ is strongly convex. 


\begin{lemma}\label{lm valid surrogate function}
    The function $g(\hat{X}; \hat{X}^k)$ is convex function of $\hat{X}$ and a valid upper bound, i.e., $g(\hat{X}; \hat{X}^k) \ge F(C(\hat{X}))$. This inequality holds at equality when $\hat{X} = \hat{X}^k$.
\end{lemma}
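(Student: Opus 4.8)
The plan is to verify the three assertions --- convexity of $g(\cdot\,;\hat{X}^k)$, the majorization inequality $g(\hat{X};\hat{X}^k)\ge F(C(\hat{X}))$, and tightness at $\hat{X}=\hat{X}^k$ --- by exploiting one structural observation: the feasible region of the inner linear program (\ref{pro define FC}) depends only on the marginal constraint and the fairness matrix $A$, and \emph{not} on the cost matrix $C$. This decoupling is exactly what makes the standard majorization--minimization recipe go through here.

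First I would establish convexity. Since $P_k^\star$ is a fixed matrix --- it is determined by $\hat{X}^k$, not by the argument $\hat{X}$ --- with nonnegative entries, $g(\hat{X};\hat{X}^k)=\langle C(\hat{X}),P_k^\star\rangle=\sum_{i,j}c(Z_i,\hat{Z}_j)(P_k^\star)_{ij}$ is a nonnegative linear combination of the maps $\hat{X}\mapsto c(Z_i,\hat{Z}_j)$. Each such map is a norm (any $L^p$ norm, as assumed when forming $C$) evaluated at $Z_i-\hat{Z}_j$, i.e.\ a convex function composed with an affine map of $\hat{X}_j$, hence convex in $\hat{X}$; a nonnegative combination of convex functions is convex.

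Next, for the majorization inequality, the key point is that $P_k^\star$, being a minimizer of (\ref{pro define FC}) for the cost $C(\hat{X}^k)$, satisfies $P\mathbf{1}_m=\tfrac1n\mathbf{1}_n$, $P\ge\mathbf{0}$, and $AP^\top\mathbf{1}_n\ge\mathbf{0}$, and these constraints are identical for every cost matrix. Therefore $P_k^\star$ is also feasible for the problem (\ref{pro define FC}) that defines $F(C(\hat{X}))$, so the optimal value cannot exceed the value attained by $P_k^\star$: $F(C(\hat{X}))\le\langle C(\hat{X}),P_k^\star\rangle=g(\hat{X};\hat{X}^k)$. Tightness then follows by substituting $\hat{X}=\hat{X}^k$, since $\langle C(\hat{X}^k),P_k^\star\rangle$ is exactly the optimal value of (\ref{pro define FC}) at $C=C(\hat{X}^k)$ by the definition of $P_k^\star$, giving $g(\hat{X}^k;\hat{X}^k)=F(C(\hat{X}^k))$.

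I do not expect a genuine obstacle here; this is the classical linearize-at-the-current-iterate construction. The only point requiring a moment of care is verifying that the feasible polytope of the inner LP is cost-independent, so that the current iterate's optimal transport plan remains admissible once the cost changes --- this is what upgrades ``$P_k^\star$ is optimal for $C(\hat{X}^k)$'' to ``$P_k^\star$ is feasible for $C(\hat{X})$.'' The convexity of $C(\hat{X})$ in $\hat{X}$ is likewise needed and is inherited directly from the $L^p$-norm structure noted earlier in the paper.
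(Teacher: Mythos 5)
Your proposal is correct, and all three claims are established soundly. The convexity and tightness arguments coincide with the paper's. For the majorization inequality, however, you take a slightly different and more elementary route: you observe that the feasible polytope of problem (\ref{pro define FC}) is independent of the cost matrix, so $P_k^\star$ remains feasible when the cost is changed to $C(\hat{X})$, and the minimum value $F(C(\hat{X}))$ is therefore at most $\langle C(\hat{X}), P_k^\star\rangle = g(\hat{X};\hat{X}^k)$. The paper instead argues via properties of the LP value function: it notes that $F$ is concave in $C$, that LP sensitivity analysis makes the optimal solution $P_k^\star$ a supergradient of $F$ at $C(\hat{X}^k)$, and then applies the supergradient inequality $F(C) + \langle P_k^\star, C(\hat{X}) - C\rangle \ge F(C(\hat{X}))$, whose left-hand side equals $g(\hat{X};\hat{X}^k)$. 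The two arguments are mathematically equivalent --- the supergradient inequality for an LP value function at an optimal vertex is exactly the feasibility observation you make --- but yours avoids invoking sensitivity analysis and concavity altogether, which makes it shorter and self-contained; the paper's version has the side benefit of recording the concavity and supergradient structure of $F$, which situates the construction within the general MM framework of linearizing at the current iterate.
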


\begin{theorem}\label{theo: convergence}
    The objective value is monotonically decreasing, i.e., $F(C(\hat{X}^{k+1})) \le F(C(\hat{X}^k))$ for any $k \ge 0$. 
    And once the algorithm stops and $C(\hat{X}^k)$ is smooth at $\hat{X}^k$, then $\hat{X}^k$ is a first-order stationary point of (\ref{pro simplified target problem 2}). 
\end{theorem}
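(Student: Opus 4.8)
The plan is to treat Theorem~\ref{theo: convergence} as a standard majorization--minimization argument built entirely on the two properties of the surrogate recorded in Lemma~\ref{lm valid surrogate function}: that $g(\hat{X};\hat{X}^k)$ is a convex upper bound of $F(C(\hat{X}))$ and is tangent to it at the anchor $\hat{X}^k$ (equality at $\hat{X}=\hat{X}^k$). I would prove the two assertions --- monotone decrease, and first-order stationarity of the stopping point --- separately, the first by a short sandwich inequality and the second by a gradient-matching argument.

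For the monotone decrease, first I would chain the surrogate inequalities. Since $\hat{X}^{k+1} = \arg\min_{\hat{X}\in\calX^m} g(\hat{X};\hat{X}^k)$ (line 5) and $g(\cdot;\hat{X}^k)$ upper bounds $F(C(\cdot))$ with equality at $\hat{X}^k$,
\[
F(C(\hat{X}^{k+1})) \;\le\; g(\hat{X}^{k+1};\hat{X}^k) \;\le\; g(\hat{X}^k;\hat{X}^k) \;=\; F(C(\hat{X}^k)),
\]
where the first inequality is the upper-bound property, the middle inequality is optimality of $\hat{X}^{k+1}$ for the surrogate, and the final equality is tangency. This yields $F(C(\hat{X}^{k+1}))\le F(C(\hat{X}^k))$ for every $k$, as claimed.

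For stationarity, I would start from the stopping rule $g(\hat{X}^{k+1};\hat{X}^k)=g(\hat{X}^k;\hat{X}^k)$. Combined with $\hat{X}^{k+1}$ minimizing the convex surrogate, this forces $\hat{X}^k$ itself to be a minimizer of $g(\cdot;\hat{X}^k)$ over $\calX^m$. Using smoothness of $C(\cdot)$ at $\hat{X}^k$, the surrogate $g(\hat{X};\hat{X}^k)=\langle C(\hat{X}),P_k^\star\rangle$ is differentiable there, so the first-order optimality condition for convex minimization over $\calX^m$ reads $\langle \nabla_{\hat{X}} g(\hat{X}^k;\hat{X}^k),\, \hat{X}-\hat{X}^k\rangle \ge 0$ for all $\hat{X}\in\calX^m$. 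The crux is to transfer this variational inequality from the surrogate to the true objective by showing the gradients coincide at the anchor,
\[
\nabla_{\hat{X}}\, g(\hat{X}^k;\hat{X}^k) \;=\; \nabla_{\hat{X}}\, F(C(\hat{X}))\big|_{\hat{X}=\hat{X}^k}.
\]
I would obtain this by applying the tangent-majorizer property to $h(\hat{X}) := g(\hat{X};\hat{X}^k)-F(C(\hat{X}))$: Lemma~\ref{lm valid surrogate function} gives $h\ge 0$ on $\calX^m$ with $h(\hat{X}^k)=0$, so $\hat{X}^k$ minimizes $h$; since $h$ is differentiable at $\hat{X}^k$ under the smoothness hypothesis, its gradient there vanishes, which is exactly the displayed equality. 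Substituting back yields $\langle \nabla_{\hat{X}} F(C(\hat{X}^k)),\, \hat{X}-\hat{X}^k\rangle \ge 0$ for all $\hat{X}\in\calX^m$, i.e.\ no feasible descent direction exists, which is first-order stationarity for~(\ref{pro simplified target problem 2}).

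The step I expect to require the most care is this gradient-matching, because $F$ is the optimal value of the linear program~(\ref{pro define FC}) and is therefore only piecewise linear (concave) in its cost argument: it is differentiable at $C(\hat{X}^k)$ precisely when the optimal plan $P_k^\star$ is unique, and in general $F(C(\cdot))$ is merely directionally differentiable. This is exactly where I would invoke the smoothness hypothesis on $C(\hat{X}^k)$ at $\hat{X}^k$ to guarantee that $h$ is genuinely differentiable at the anchor, so that $\nabla h(\hat{X}^k)=0$ is meaningful rather than a one-sided directional statement; without it one can only conclude the one-sided inequality between the directional derivatives of $F(C(\cdot))$ and $g(\cdot;\hat{X}^k)$ at $\hat{X}^k$, which does not by itself rule out a feasible descent direction of the objective. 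A secondary, more routine point is the treatment of the constraint set $\calX^m$: I would phrase optimality throughout via variational inequalities and normal cones rather than by setting a gradient to zero, so that the argument also covers the case where $\hat{X}^k$ lies on the boundary of $\calX^m$.
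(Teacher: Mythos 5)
Your proof follows the same route as the paper's: the monotone-decrease part is the identical sandwich inequality $F(C(\hat{X}^{k+1})) \le g(\hat{X}^{k+1};\hat{X}^k) \le g(\hat{X}^k;\hat{X}^k) = F(C(\hat{X}^k))$, and the stationarity part is the same tangent-majorizer argument (the stopping rule makes $\hat{X}^k$ a global minimizer of the convex upper bound that touches the objective there). Your explicit gradient-matching step via $h = g - F\circ C$, and your remark that smoothness is needed because $F$ is only piecewise linear in the cost, merely spell out what the paper leaves implicit under its smoothness hypothesis, so the two proofs are essentially the same.
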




\begin{theorem}\label{theo: finite termination}
    When the minimizer of (\ref{eq minimizer of surrogate function main paper}) is unique, Algorithm \ref{alg MM method} terminates within finite iterations.
\end{theorem}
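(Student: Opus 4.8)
The plan is to combine the strict monotonic decrease of the objective (a consequence of Theorem~\ref{theo: convergence} together with the stopping rule) with a finiteness argument showing that the iterates can only pass through finitely many distinct states. First I would pin down the mechanism of the update. In line~4 the surrogate is built from $P_k^\star$, an optimal solution of the linear program (\ref{pro define FC}), and the feasible region of (\ref{pro define FC}) --- namely $\{P : P\mathbf{1}_m = \tfrac1n \mathbf{1}_n,\ P \ge \mathbf{0},\ A P^\top \mathbf{1}_n \ge \mathbf{0}\}$ --- does \emph{not} depend on $\hat{X}$; only the cost $C(\hat{X}^k)$ does. Because this polyhedron lies in the nonnegative orthant it contains no line, hence is pointed, so the LP attains its optimum at a vertex, and I would take $P_k^\star$ to be such a vertex (consistent with the ``at most $n$ nonzeros'' sparsity noted in Section~\ref{sec: update-feature-vec}). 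Thus $P_k^\star$ ranges over the \emph{finite} vertex set of a fixed polytope.

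Next I would use the uniqueness hypothesis to factor the iterate map through this finite set. Since $g(\hat{X}; \hat{X}^k) = \langle C(\hat{X}), P_k^\star\rangle$ depends on the iteration index $k$ only through $P_k^\star$, and since (\ref{eq minimizer of surrogate function main paper}) is assumed to have a unique minimizer, $\hat{X}^{k+1}$ is a well-defined single-valued function $X(\cdot)$ of $P_k^\star$ alone, i.e. $\hat{X}^{k+1} = X(P_k^\star)$. Consequently the entire orbit $\hat{X}^1, \hat{X}^2, \dots$ is determined by the sequence of vertices $P_0^\star, P_1^\star, \dots$.

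Then I would establish strict decrease whenever the algorithm does not halt. If the stopping test in line~6 fails at iteration $k$, then $g(\hat{X}^{k+1};\hat{X}^k) \neq g(\hat{X}^k;\hat{X}^k)$; since $\hat{X}^{k+1}$ minimizes $g(\cdot;\hat{X}^k)$, this forces the strict inequality $g(\hat{X}^{k+1};\hat{X}^k) < g(\hat{X}^k;\hat{X}^k)$. Chaining the upper-bound and tightness properties from Lemma~\ref{lm valid surrogate function} gives
\[
F(C(\hat{X}^{k+1})) \;\le\; g(\hat{X}^{k+1};\hat{X}^k) \;<\; g(\hat{X}^k;\hat{X}^k) \;=\; F(C(\hat{X}^k)),
\]
so the objective strictly decreases at every non-terminating step.

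Finally I would close by contradiction. Suppose the algorithm never halts; then by the previous step $\{F(C(\hat{X}^k))\}_{k\ge0}$ is strictly decreasing. Each $\hat{X}^{k+1} = X(P_k^\star)$ with $P_k^\star$ drawn from a finite set, so by the pigeonhole principle there exist $k < l$ with $P_k^\star = P_l^\star$, whence $\hat{X}^{k+1} = X(P_k^\star) = X(P_l^\star) = \hat{X}^{l+1}$ and therefore $F(C(\hat{X}^{k+1})) = F(C(\hat{X}^{l+1}))$. Since $k+1 < l+1$, this contradicts strict monotonicity, so the algorithm must terminate within finitely many iterations. The main obstacle I anticipate is the very first step: justifying rigorously that $P_k^\star$ may be taken in the finite vertex set --- that the feasible polyhedron of (\ref{pro define FC}) is pointed \emph{and} that the \texttt{FairWASP}/cutting-plane subroutine of Section~\ref{sec: fairwasp} genuinely returns a basic (vertex) solution --- since without a finite output space for line~4 the pigeonhole step collapses.
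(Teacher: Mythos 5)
Your proposal is correct and follows essentially the same route as the paper's proof: finiteness of the set of possible $P_k^\star$ (optimal basic feasible solutions), uniqueness of the surrogate minimizer making $\hat{X}^{k+1}$ a function of $P_k^\star$ alone, strict decrease of $F(C(\hat{X}^k))$ before termination, and a pigeonhole argument. The ``obstacle'' you flag at the end is resolved in the paper exactly as you anticipate: the \texttt{FairWASP} recovery step returns the closed-form vertex solution of Lemma~\ref{lm minimizer P}, so $P_k^\star$ indeed ranges over a finite set of basic feasible solutions.
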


\subsection{Generalization Guarantees}

Proposition~\ref{prop: generalization} below bounds the distance and demographic parity between the \methodname\ samples and the true underlying distribution of the data, from which the original dataset of size $n$ was observed. This generalizes the performance of \methodname\ to unseen test sets sampled from the data generating distribution.


\begin{proposition}\label{prop: generalization}
Let $\lambda$ indicate the distance between $p_{\hat{Z}; \theta}$ and $p_{Z; e}$ after convergence of \methodname\, i.e., $\mathcal{W}(p_{\hat{Z}; \theta}, p_{Z; e}) = \lambda$.
Let $q_Z$ be the true underlying distribution of the data supported over $\mathbb{R}^d$, with marginal distribution over $y$ bounded away from zero, so that $\rho = \min_{y \in \mathcal{Y}} q_Z(y) > 0$. Then with probability $1 - \alpha$:
\begin{align}
    \mathcal{W}_c(p_{\hat{Z}}, q_Z) &\leq \lambda + \mathcal{O}(\log(1/\alpha)^{1/d} n^{-1/d}) \\
    \sup_{y \in \mathcal{Y}, d \in \mathcal{D}} J(p_{\hat{Z}} (y|d) , q_Y(y)) &\leq \frac{\epsilon}{\rho} + \mathcal{O}\left( \sqrt{\frac{\log 2/\alpha }{n \rho^2 }} \right)
\end{align}

\end{proposition}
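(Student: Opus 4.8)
The plan is to prove both inequalities by a triangle-inequality decomposition that isolates the \emph{optimization} error delivered by \methodname\ (the quantities $\lambda$ and $\epsilon$) from a purely \emph{statistical} error measuring how far the observed empirical distribution $p_{Z;e}$ sits from the data-generating distribution $q_Z$. For the first bound I would start from the metric property of $\mathcal{W}_c$ (valid whenever $c$ is a metric on $\mathbb{R}^d$, e.g.\ an $L^p$ norm) and write
\begin{equation*}
    \mathcal{W}_c(p_{\hat{Z}}, q_Z) \le \mathcal{W}_c(p_{\hat{Z}}, p_{Z;e}) + \mathcal{W}_c(p_{Z;e}, q_Z) = \lambda + \mathcal{W}_c(p_{Z;e}, q_Z),
\end{equation*}
so that everything reduces to controlling the empirical-to-true transport distance $\mathcal{W}_c(p_{Z;e}, q_Z)$.

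The key statistical input is a concentration result for the empirical optimal-transport cost of $n$ i.i.d.\ draws from $q_Z$. Under the stated support/tail hypotheses on $q_Z$ in $\mathbb{R}^d$, results of Fournier--Guillin type give a tail bound of the form $\mathbb{P}\!\left(\mathcal{W}_c(p_{Z;e}, q_Z) > t\right) \le C\exp(-c\,n\,t^{d})$ over the relevant range of $t$. Inverting this bound at confidence level $\alpha$ yields $\mathcal{W}_c(p_{Z;e}, q_Z) \le (\log(C/\alpha)/(cn))^{1/d} = \mathcal{O}(\log(1/\alpha)^{1/d} n^{-1/d})$ with probability at least $1-\alpha$, which combined with the display above gives the first claim. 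I expect this to be the main obstacle: the exponent $n^{-1/d}$ and the $\log(1/\alpha)^{1/d}$ dependence both hinge on invoking the correct empirical-measure concentration theorem, whose constants and even whose rate depend on the tail assumptions on $q_Z$ and on whether $d \le 2$ or $d > 2$; the remaining steps are elementary.

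For the demographic-parity bound I would use the fairness constraint (\ref{type1}), which at convergence guarantees $|p_{\hat{Z}}(y\mid d) - p_{Y_T}(y)| \le \epsilon\, p_{Y_T}(y) \le \epsilon$ for every $d,y$, since $p_{Y_T}(y)\le 1$. Writing $p_{Y_T}$ as the empirical $Y$-marginal of the $n$ samples, each $p_{Y_T}(y)$ is an average of $n$ i.i.d.\ indicators, so Hoeffding's inequality together with a union bound over $y \in \mathcal{Y}$ gives $\sup_{y}|p_{Y_T}(y) - q_Y(y)| \le \Delta$ with $\Delta = \mathcal{O}(\sqrt{\log(2/\alpha)/n})$ (the $|\mathcal{Y}|$ factor being absorbed into the constant), with probability at least $1-\alpha$. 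A final triangle inequality $|p_{\hat{Z}}(y\mid d) - q_Y(y)| \le \epsilon + \Delta$, divided by $q_Y(y) \ge \rho$, then produces
\begin{equation*}
    J\!\left(p_{\hat{Z}}(y\mid d), q_Y(y)\right) = \frac{|p_{\hat{Z}}(y\mid d) - q_Y(y)|}{q_Y(y)} \le \frac{\epsilon + \Delta}{\rho} = \frac{\epsilon}{\rho} + \mathcal{O}\!\left(\sqrt{\frac{\log(2/\alpha)}{n\rho^2}}\right),
\end{equation*}
uniformly in $y,d$, which is the second claim. Taking a union bound over the two high-probability events (and absorbing the resulting factor of two into the logarithmic constants) yields both statements simultaneously with probability $1-\alpha$.
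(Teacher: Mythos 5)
Your proposal is correct and follows essentially the same route as the paper's proof: a triangle-inequality split into the optimization error ($\lambda$, resp.\ $\epsilon/\rho$) plus a statistical term, with a Fournier--Guillin-type concentration bound controlling $\mathcal{W}_c(p_{Z;e}, q_Z)$ and concentration of the empirical $Y$-marginal controlling the parity term. The only cosmetic difference is that the paper invokes the DKW inequality where you use Hoeffding plus a union bound over the finite $\mathcal{Y}$; both yield the same $\mathcal{O}\left(\sqrt{\log(2/\alpha)/(n\rho^2)}\right)$ rate.
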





In addition, in Appendix~\ref{app: hardness-learning} we consider the task of learning using \methodname\ samples. We show that the error in downstream learning tasks can be seen as the sum of (i) the approximation error \methodname\ samples make with respect to the original dataset and (ii) how well $\hat{Y}$ can be learnt from $\hat{X}$ and $\hat{D}$ from \methodname\ samples. However, as \methodname\ samples $\{\hat{Z}_j \}_{j=1}^m$ are not i.i.d., standard sample complexity results in e.g., empirical risk minimization, do not apply, highlighting the hardness in developing finite-sample learning bounds in this setting.



\section{An Alternative View: Generalized clustering algorithm}\label{sec: lloyd-algo}

When the fairness constraints are absent, problem (\ref{pro define FC}) reduces to:
\begin{equation}\label{pro define FC unfair}
    	\begin{array}{ll}
		\min_{P\in \mathbb{R}^{n\times m}} \ &  \langle C , P\rangle  
		\\
		\text{ s.t.} \ \ \ \ \ 
		& P \mathbf{1}_m = \frac{1}{n}\cdot \mathbf{1}_n, \  P \ge \mathbf{0}_{n\times m}  \ .
	\end{array}
\end{equation}
The minimizer $P^\star$ of (\ref{pro define FC unfair}) can be written in closed form. For each $i\in [n]$, let $C_{ij_i^\star}$ denote a smallest component on the $i$-th row of $C$. Then the components of a minimizer $P^\star$ can be written as $P_{ij}^\star = \frac{1}{n} \cdot \mathbf{I}(j = j_i^\star)$ (where $\mathbf{I}$ is the indicator function).
Hence, without fairness constraints, \methodname\ corresponds to Lloyd's algorithm for clustering. Specifically, Lloyd's algorithm iteratively computes the centroid for each subset in the partition and subsequently re-partitions the input based on the closeness to these centroids \cite{lloyd1982least}; these are the same operations \methodname\ does in optimizing the surrogate function and solving problem (\ref{pro define FC unfair}).
Thus, when $c(x,y)$ is correspondingly defined as $\|x-y\|_1$ or $\|x-y\|_2^2$,  \methodname\ corresponds to Lloyd's algorithm applied to k-medians or k-means problems, except the centroids have fixed values for $\hat{D}$ and $\hat{Y}$ (see Section~\ref{sec: fair-wasserstein-coresets}).\\

\textbf{Comparison with k-means and k-medoids} \methodname\ and Lloyds’ algorithm for k-means or k-median share similar per-iteration complexity, with the main difference in complexity due to solving problem (\ref{pro define FC}). We solve this problem efficiently by utilizing a variant of the \texttt{FairWASP} approach by \cite{Xiong2023FairWASP} (Section~\ref{sec: fairwasp}),  hence avoiding the usual complexity in solving optimal transport problems. As shown in Section~\ref{sec: convergence-guarantees}, the leading term in the runtime complexity is $\mathcal{O}(nm)$, which comes from calculating and storing the cost matrix $C$. This level of complexity is the same as those in k-means and k-medoids. In addition, from our experiments we also see that the per-iteration complexity of \methodname\ is roughly linear with the original dataset size $n$ (see the runtime experiment in Section~\ref{sec: experiments} and Appendix~\ref{supp: exp-details}).

\section{Experiments} \label{sec: experiments}

\paragraph{Runtime analysis} We evaluate the runtime performance of \methodname\ by creating a synthetic dataset of dimension $n$ and features of dimension $p$, with the goal of creating a coreset of size $m$ (see Appendix~\ref{supp: synthetic-dataset} for details). 
We fix two out of the three parameters to default values $(n, m, p) = (5000, 250, 25)$ and vary the other across suitable ranges, to analyse the runtime and total number of iterations. Figure~\ref{fig: main-results}, top left, and Table~\ref{tab: supp-runtime}, in Appendix~\ref{supp: synthetic-dataset}, show the runtime and number of iterations when increasing the dataset size $n$ from 1,000 to 1,000,000, with averages and standard deviations over 10 separate runs; both the runtime and number of iterations grow proportionally to the sample size $n$. Figure~\ref{fig: supp-runtime} in Appendix~\ref{supp: synthetic-dataset} also shows that requiring a larger coreset size $m$ implies the need of fewer iterations but longer iteration runtime, as more representatives need to be computed.



\paragraph{Real datasets results} We evaluate the performance of \methodname\, on 4 datasets widely used in the fairness literature \cite{fabris2022algorithmic}: (i) Adult \cite{misc_adult_2}, (ii) German Credit \cite{misc_statlog_(german_credit_data)_144}, (iii) Communities and Crime \cite{misc_communities_and_crime_183} and (iv) Drug \cite{fehrman2017five}. For each dataset, we consider 3 different coreset sizes $m= 5\%, 10\%, 20\%$ (apart from the Adult dataset, in which we select $m$ equal to 0.5\%, 1\% and 2\% due to the large dataset size). We compare our approach with: (a) \texttt{Fairlets} and \texttt{IndFair}, two fair clustering approaches by \cite{backurs2019scalable} and \cite{chhaya2022coresets}, (b) \texttt{K-Median Coresets}, a coreset approach by \cite{bachem2018one}, (c) k-means \cite{lloyd1982least} and k-medoids \cite{maranzana1963location, park2009simple}, two classic clustering approaches and (d) \texttt{Uniform Subsampling} of the original dataset. For \methodname, we consider three different values of the fairness violation hyper-parameters $\epsilon$ for the optimization problem in (\ref{pro general optimization model}). We compute the fairness-utility tradeoff by first training a 2-layer multilayer perceptron (MLP) classifier with ReLu activations on the coresets created by each approach and then evaluating the classifier demographic disparity (fairness) and AUC (utility). Figure~\ref{fig: main-results} shows the model with the best fairness-utility tradeoff across the three coreset sizes $m$, for each approach. \methodname\, obtains equal or better fairness-utility tradeoffs (smaller disparity at the same level of utility, higher utility with the same disparity, or both) across all datasets, and performance remains competitive even when using a fairness pre-processing approach \cite{kamiran2012data}. Appendix~\ref{supp: exp-real-dataset-subsec} includes more experiments and details, which highlight that: (a) \methodname\, consistently achieves coresets that are closer in distribution to the original dataset with respect to the other methods and, although not natively minimizing clustering cost, also provide competitive performance for smaller datasets (Tables~\ref{tab: supp-wass_cost_numbers} and \ref{tab: supp-clust_cost_numbers}); (b) when added to the training data using the data augmentation schema proposed by \cite[Section 2.1]{sharma2020data} \methodname\ generally either increase the performance or reduce the demographic disparity in the downstream learning process.

\begin{figure*}[!ht]
    \centering
    \includegraphics[width=0.325\linewidth]{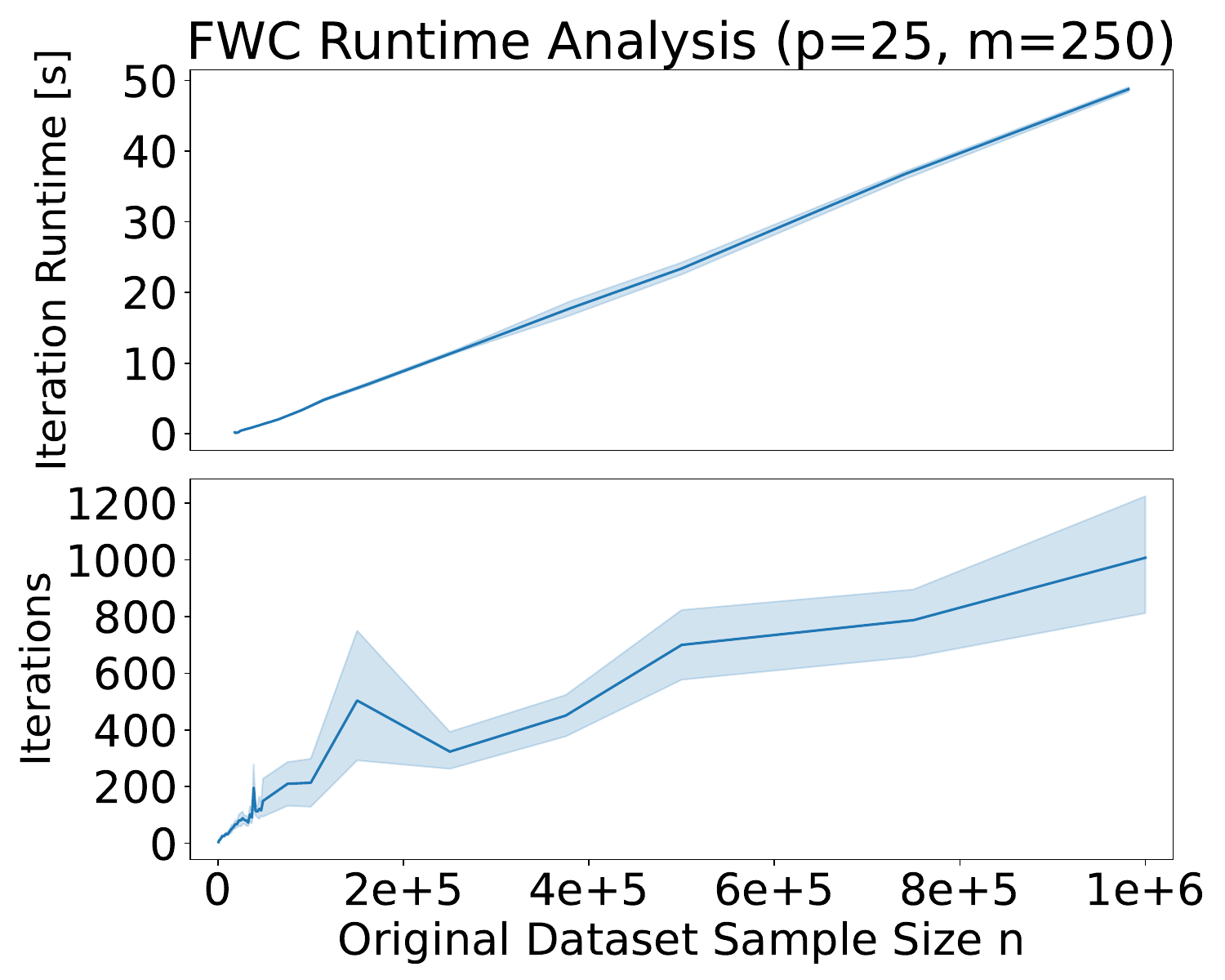}
    \includegraphics[width=0.325\linewidth]{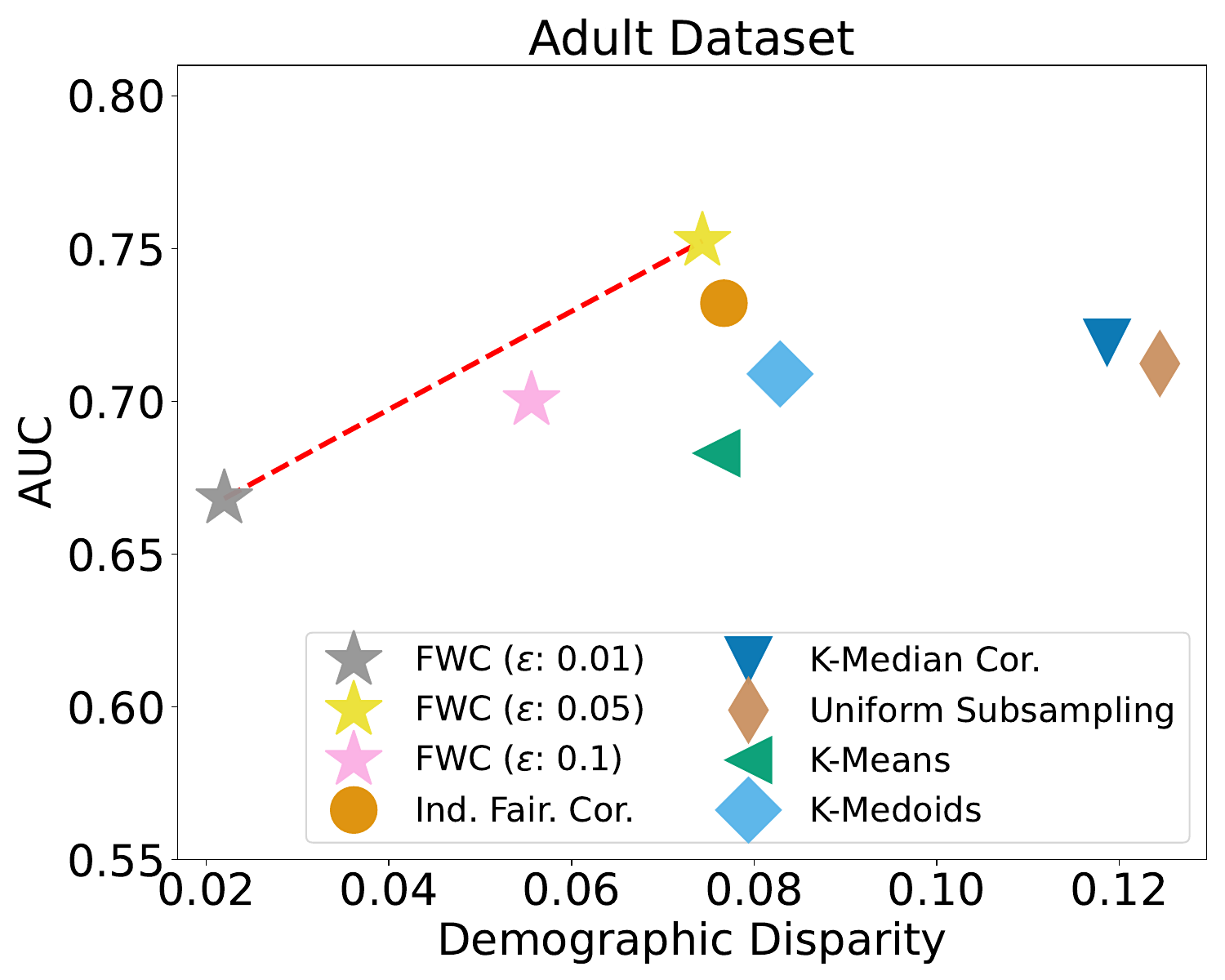}
    \includegraphics[width=0.325\linewidth]{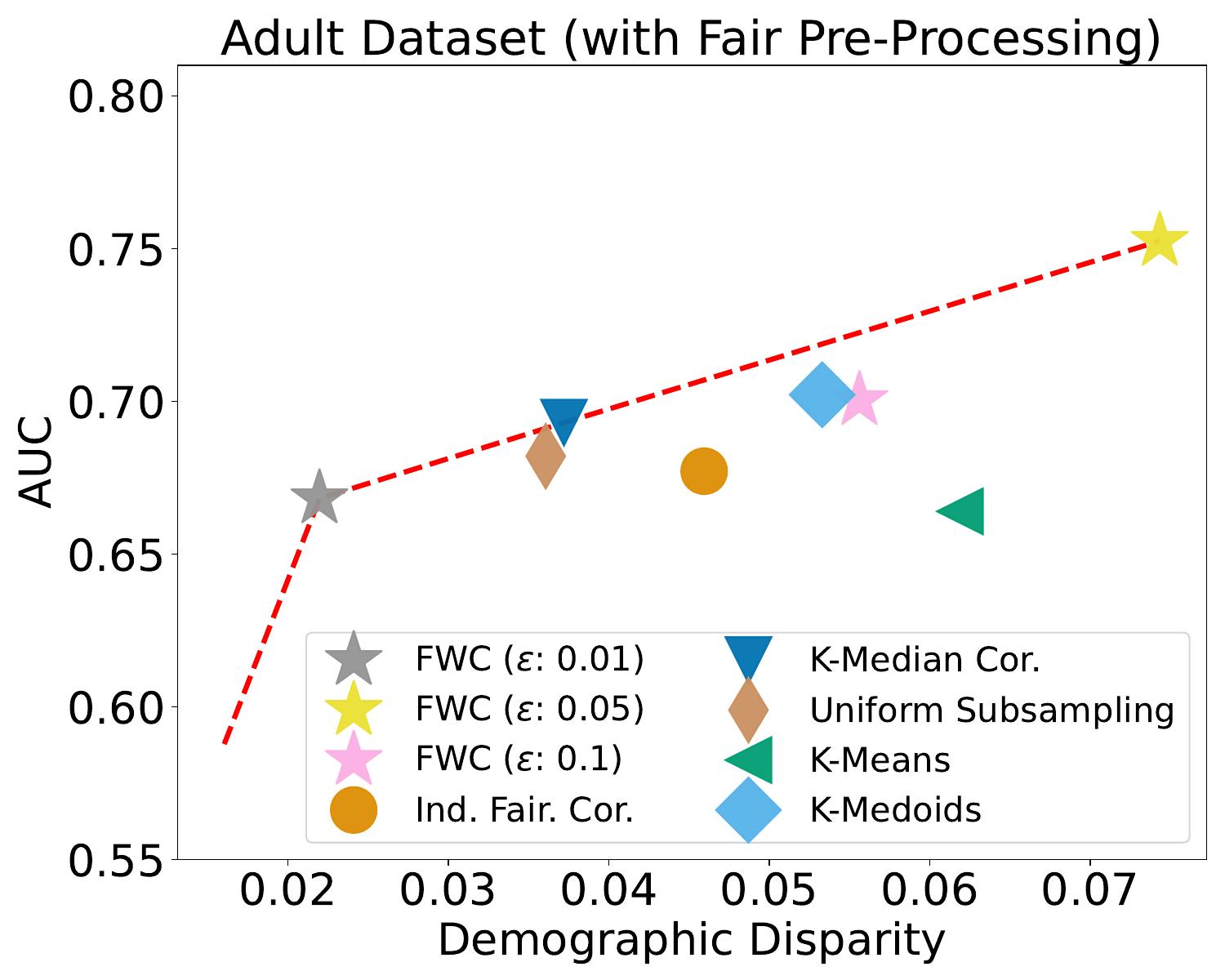} \\
    \includegraphics[width=0.325\linewidth]{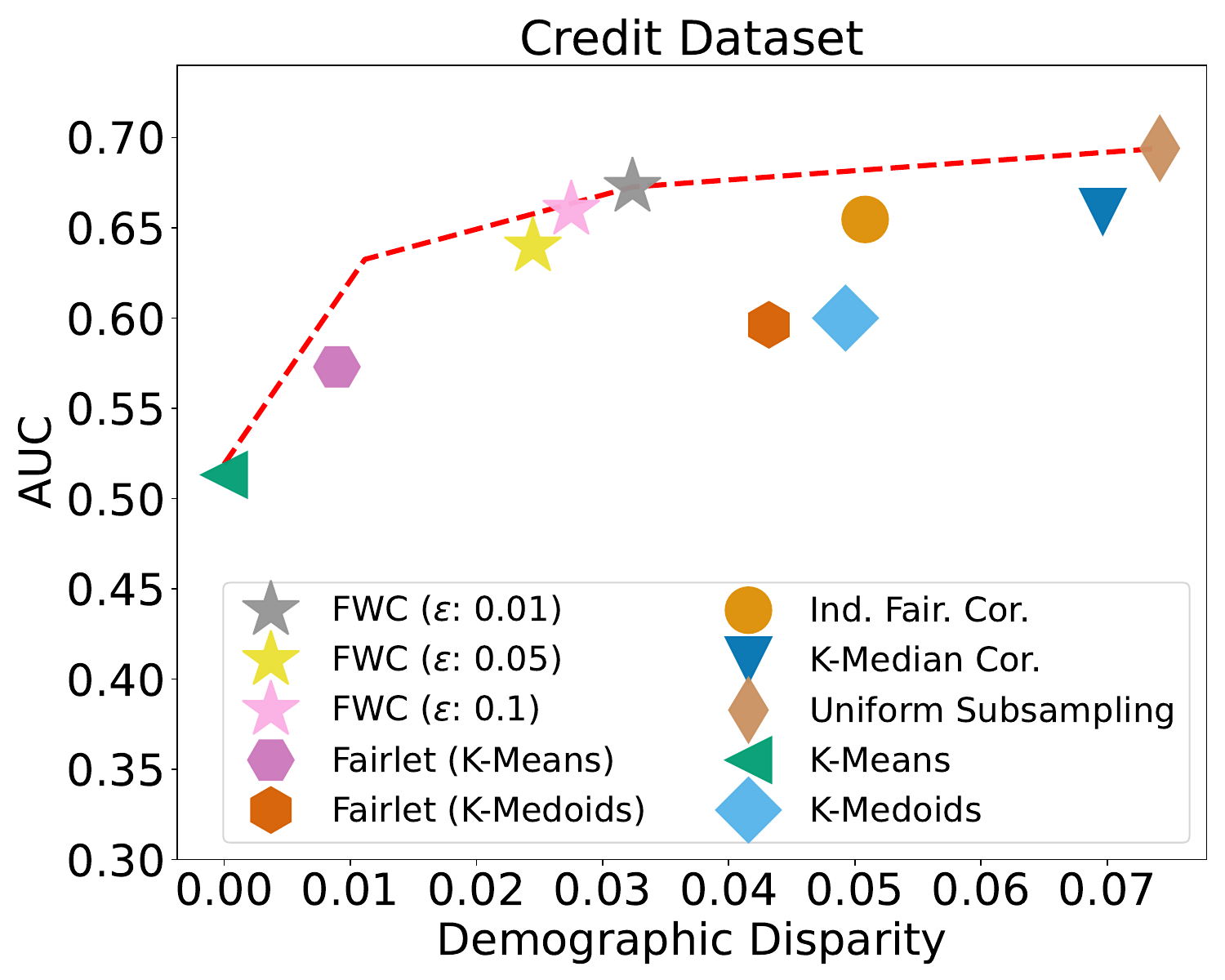}
    \includegraphics[width=0.325\linewidth]{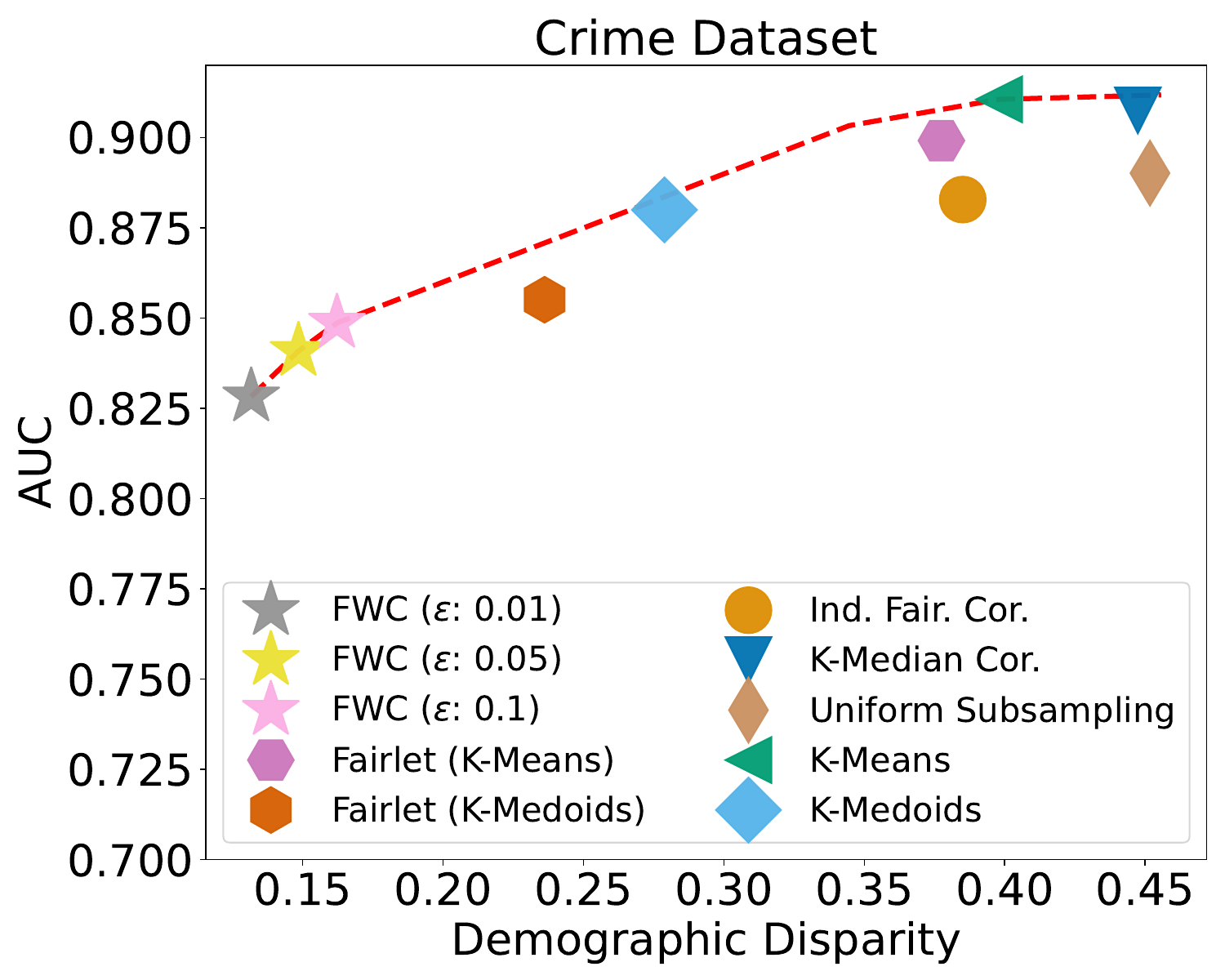}
    \includegraphics[width=0.325\linewidth]{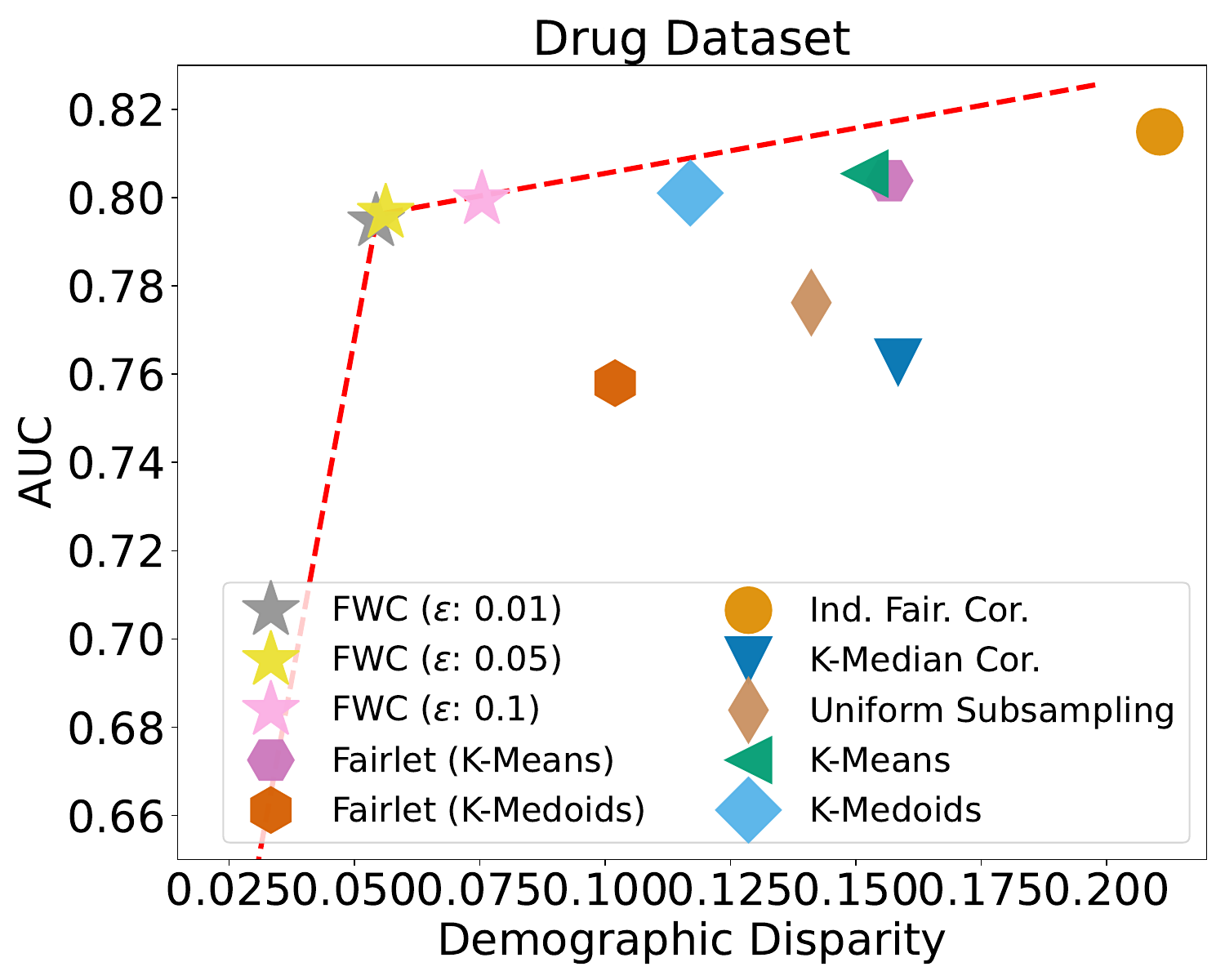}
    \caption{\textit{Top left:} \methodname\ runtime when changing the original dataset size $n$. \textit{Others}: Fairness-utility tradeoff on real datasets for a downstream MLP classifier, selecting the model with the best fairness-utility tradeoff across three different coreset sizes $m$, with averages taken over 10 runs. \methodname\, consistently achieves a comparable/better tradeoff as shown by the Pareto frontier (dashed red line, computed over all models and coreset sizes), even when adjusting the other coresets with a fairness pre-processing technique [33]. See text and Appendix C.2 for more details.}
    \label{fig: main-results}
\end{figure*}

\paragraph{Using \methodname \ to improve fairness for LLM} \cite{wang2023decodingtrust} evaluate GPT-3.5 and GPT-4 for fairness on predictive tasks for the UCI Adult dataset in a zero and few shot setting. We use a similar evaluation setup and use \methodname\ in the few shot setting as examples and evaluate the results for the gender protected attribute. Specifically, we transform the tabular data into language descriptions, and ask GPT-3.5 Turbo and GPT-4 to perform classification tasks on it.
We select $200$ samples to construct the test set and use a set of 16 samples found using \methodname\ as examples. Further details on this experiment are provided in the appendix. The results are shown in Table~\ref{tab:gpt}. Examples provided by ~\methodname\ help reduce demographic disparity more than providing balanced few shot examples, while losing on predictive accuracy (note that the drop in accuracy is similar to the drop observed in \cite{wang2023decodingtrust} and is representative of the fairness-utility trade-off). Owing to the token limitation of LLM's, these representative coresets can evidently be valuable to provide a small set of samples that can help mitigate bias. When accounting for the standard deviations for demographic parity in Table~\ref{tab:gpt}, \methodname\ reduces the LLM bias when compared to zero shot prompting for GPT-4 across all runs. For GPT-3.5 Turbo, while the average disparity is reduced across runs, such consistency is indeed not observed, owing to a diverse set of outputs from the large language model. Due to limited availability of computational resources (associated with querying these models), we leave a more thorough evaluation across different datasets and models to future work.

\begin{table*}[!thbp]
\centering
 \caption{Using the same setup as in \cite{wang2023decodingtrust}, we use GPT-3.5 Turbo and GPT-4 LLM's for fairness evaluations, with a test set of 200 samples with 0.5 base parity (${b_{p}}=0.5$). 
 Few Shot - \methodname\ is used to provide sixteen examples with weights to the model as examples. Accuracy and demographic disparity (DP) are based on the resulting predictions from GPT-3.5 and GPT-4 models.}
 \vspace{0.2cm}
\resizebox{1.0\textwidth}{!}{%
  \begin{tabular}{|c||c|c|c|c|c|c|}
    \hline
    \multirow{2}{*}{\textit{Adult Dataset}} &
      \multicolumn{2}{c|}{\textbf{Zero Shot}} &
      \multicolumn{2}{c|}{\textbf{Few Shot} ($b_{p} =0$) } &
      \multicolumn{2}{c|}{\textbf{Few Shot} (\methodname )}  \\ \cline{2-7}
      & {Accuracy} & {DP} & {Accuracy} & {DP} & {Accuracy} & {DP} \\ \hline
GPT-3.5 Turbo & 53.55 $\pm$ 0.87 & 0.040 $\pm$ 0.017 & 57.99 $\pm$ 1.88 & 0.019 $\pm$ 0.015 & 55.02 $\pm$ 1.26 & \textbf{0.010 $\pm$ 0.03} \\ \hline
GPT-4 & 76.54 $\pm$ 1.63 & 0.42 $\pm$ 0.016 & 74.39 $\pm$ 2.86 & 0.33 $\pm$ 0.09 & 65.20 $\pm$ 0.85 & \textbf{0.27 $\pm$ 0.04} \\
    \hline
  \end{tabular}
}
 \label{tab:gpt}
\end{table*}

\section{Discussion and Conclusions} \label{sec: discussion-conclusions}
We introduce \methodname, a novel coreset approach that generates synthetic representative samples along with sample-level weights for downstream learning tasks. \methodname\ minimizes the Wasserstein distance between the distribution of the original datasets and that of the weighted synthetic samples while enforcing demographic parity. We demonstrate the effectiveness and scalability of \methodname\ through  experiments conducted on both synthetic and real datasets, as well as reducing biases in LLM predictions (GPT 3.5 and GPT 4).
Future extensions include: (i) targeting different fairness metrics such as equalized odds \cite{hardt2016equality, mishler2021fairness} as well as robustness of fairness-performance tradeoff over distribution shifts \cite{mishler2022fair, sharma2023feamoe}, (ii) exploring privacy and explainability properties of \methodname\ \cite{mohassel2020practical, moshkovitz2020explainableclustering}, (iii) utilizing coresets for accelerating gradient descents algorithms and test their convergence \cite{mirzasoleiman20a, sordello2024robust} 
(iv) reformulating the optimization framework to target deep neural network pruning \cite{Mussay2020DataIndependent, ohib2022explicit} and (v) investigate applications of fair synthetic data in the financial sector \cite{potluru2023synthetic}.

\section*{Disclaimer}
This paper was prepared for informational purposes by the Artificial Intelligence Research group of JPMorgan Chase \& Co. and its affiliates ("JP Morgan'') and is not a product of the Research Department of JP Morgan. JP Morgan makes no representation and warranty whatsoever and disclaims all liability, for the completeness, accuracy or reliability of the information contained herein. This document is not intended as investment research or investment advice, or a recommendation, offer or solicitation for the purchase or sale of any security, financial instrument, financial product or service, or to be used in any way for evaluating the merits of participating in any transaction, and shall not constitute a solicitation under any jurisdiction or to any person, if such solicitation under such jurisdiction or to such person would be unlawful.

\bibliographystyle{abbrv}

\newpage

\appendix

\begin{center}
\begin{Large}
\centering
\textbf{Appendix}
\end{Large}
\end{center}

\section{Details on Updating the Surrogate Function (line 4 of Algorithm \ref{alg MM method})} \label{supp: cutting plane}
To update the surrogate function, we need to solve problem (\ref{pro define FC}), which is a huge-scale linear program with $O(n)$ constraints and $O(mn)$ nonnegative variables. In this work, we adapt \texttt{FairWASP} \cite{Xiong2023FairWASP} for cases where $m \neq n$, as opposed to the scenario where $m=n$, which was tackled by \cite{Xiong2023FairWASP}. Before showing the main idea of the algorithm, we rephrase a useful lemma for doing linear minimization on $S_{n,m}\defeq \{P\in \mathbb{R}^{n\times m} : P \textbf{1}_m = \frac{1}{n}\cdot \mathbf{1}_n, P \ge \mathbf{0}\}$. 
\begin{lemma}\label{lm minimizer P}
    For the function $G(C) \defeq \max_{P\in S_{n,m}} \langle C, P \rangle$, it is a convex function of $C$ in $\mathbb{R}^{n\times m}$. For each $i\in[n]$, let $C_{ij_i^\star}$ denote a largest component on the $i$-th row of $C$, then $G(C) = \frac{1}{n} \sum_{i=1}^n C_{ij_i^\star}$. Define the components of $P^\star$ as follows: 
    \begin{equation}\label{eq lm minimizer P}
            P_{ij}^\star = \left\{
    \begin{array}{cc}
         0 & \text{ if } j \neq j_i^\star \\
         \frac{1}{n} & \text{ if } j = j_i^\star
    \end{array}
    \right. 
    \end{equation}
    and then $P^\star \in \arg\max_{P\in S_{n,m}} \langle C, P \rangle$ and $P^\star \in \partial G(C)$.
\end{lemma}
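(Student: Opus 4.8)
The plan is to establish the three assertions in order — convexity of $G$, the stated closed form for its value, and the optimality together with the subgradient property of $P^\star$ — all built on the observation that the feasible set $S_{n,m}$ decouples across the rows of $P$. For convexity, I would note that for each fixed $P \in S_{n,m}$ the map $C \mapsto \langle C, P\rangle$ is linear in $C$, so $G(C) = \max_{P \in S_{n,m}} \langle C, P\rangle$ is a pointwise supremum of linear functions of $C$ and hence convex on $\R^{n \times m}$; compactness of $S_{n,m}$ guarantees the maximum is attained.

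For the closed form and the maximizer, the key step is separability. Since $\langle C, P\rangle = \sum_{i \in [n]} \sum_{j \in [m]} C_{ij} P_{ij}$ and both constraints $P\mathbf{1}_m = \frac{1}{n}\mathbf{1}_n$ and $P \ge \mathbf{0}$ act row by row, the maximization splits into $n$ independent problems $\max\{ \sum_j C_{ij} p_j : p \ge 0,\ \sum_j p_j = \tfrac{1}{n}\}$. Each of these is the maximization of a linear functional over a scaled probability simplex, whose optimal value $\frac{1}{n} C_{ij_i^\star}$ is attained by placing the full mass $\tfrac1n$ on a coordinate $j_i^\star$ achieving the largest entry in row $i$. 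Summing over $i$ gives $G(C) = \frac{1}{n}\sum_i C_{ij_i^\star}$, and stacking the per-row vertex solutions produces exactly the matrix $P^\star$ in the statement, so $P^\star \in \arg\max_{P \in S_{n,m}} \langle C, P\rangle$.

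For the subgradient claim I would verify the defining inequality directly rather than invoking a general theorem. For any $C' \in \R^{n\times m}$, feasibility of $P^\star$ gives $G(C') \ge \langle C', P^\star\rangle$; decomposing $\langle C', P^\star\rangle = \langle C, P^\star\rangle + \langle C' - C, P^\star\rangle$ and using the identity $\langle C, P^\star\rangle = G(C)$ from the previous step yields $G(C') \ge G(C) + \langle P^\star, C' - C\rangle$, which is precisely the statement $P^\star \in \partial G(C)$.

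I expect no substantial obstacle here: the argument is routine convex analysis. The only point requiring genuine care is the row-wise separation — one must check that the equality constraint $P\mathbf{1}_m = \frac1n\mathbf{1}_n$ and the inequality $P \ge \mathbf{0}$ both respect the decomposition, so that assembling the independent per-row optima returns a jointly feasible (hence optimal) $P^\star$. The subgradient step is short but reuses the optimality identity $\langle C, P^\star\rangle = G(C)$, so I would make sure to derive that identity as part of the closed-form computation before invoking it.
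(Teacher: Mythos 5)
Your proof is correct. The paper itself does not give an argument at all: its ``proof'' of this lemma is a one-line remark that the statement is the $m\neq n$ extension of Lemma~1 of the \texttt{FairWASP} paper and ``can be extended directly.'' So your write-up is a genuinely different route only in the sense that it is self-contained where the paper defers to an external reference; the substance you supply --- convexity as a pointwise supremum of linear functions of $C$, row-wise separability of $S_{n,m}$ (each row of $P$ ranges over an independent scaled simplex of mass $\tfrac{1}{n}$, so the linear objective is maximized by putting all of that mass on a largest entry of the corresponding row of $C$), and the direct verification $G(C')\ge\langle C',P^\star\rangle=G(C)+\langle P^\star,C'-C\rangle$ of the subgradient inequality --- is exactly the standard argument one would expect the cited lemma to contain, and each step checks out. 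The one point you flag as needing care, that both constraints respect the row decomposition, is indeed the crux and is handled correctly: $P\mathbf{1}_m=\tfrac{1}{n}\mathbf{1}_n$ is one equality per row and $P\ge\mathbf{0}$ is entrywise, so stacking per-row optima yields a feasible, hence optimal, $P^\star$. What your version buys is a complete, checkable proof inside the paper; what the paper's version buys is brevity at the cost of making the reader chase the reference.
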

\begin{proof}
    The proof of the above lemma is equivalent with that of Lemma 1 of  \cite{Xiong2023FairWASP} in the case when $m \neq n$, which can be extended directly.
\end{proof}

 With this lemma, now we show how to efficiently solve problem (\ref{pro define FC}) via its dual problem.
Although (\ref{pro define FC}) is of large scale and computationally prohibitive, it is equivalent to the following saddle point problem on the Lagrangian:
    \begin{equation}\label{pro lagragian saddle point}
        \min_{P\in S_{n,m}}\max_{\lambda \in \mathbb{R}^h_+} L(P,\lambda)\defeq \langle C, P \rangle - \lambda^\top A P^\top \mathbf{1}_n
    \end{equation}
where $h$ is the number of rows of $A$, which is at most $2 |\calY| |\calD|$. It should be mentioned that $h$ is significantly smaller than $mn$; for example, for classification tasks with only two protected variables, $h$ is no larger than $8$, independent of the number of samples or features. Since $L(\cdot,\cdot)$ is bilinear, the minimax theorem guarantees that (\ref{pro lagragian saddle point}) is equivalent to $\max_{\lambda \in \mathbb{R}^h_+} \min_{P\in S_{n,m}} L(P,\lambda)$. This is further equal to the dual problem: 
    \begin{equation}\label{pro low dim dual}
        \max_{\lambda \in \mathbb{R}^h_+} - \Big[ G(\lambda)\defeq \max_{P\in S_{n,m}} \Big\langle \sum_{j=1}^h \lambda_j \mathbf{1}_n a_j^\top - C, P \Big\rangle \Big] \ ,
    \end{equation}
    in which $a_j^\top$  denotes the $j$-th row of $A$. 
    Note that the problem (\ref{pro low dim dual}) has much fewer decision variables than that of (\ref{pro lagragian saddle point}) and Lemma \ref{lm minimizer P} ensures the function $G(\cdot)$ is convex and has easily accessible function values and subgradients. Therefore, directly applying a cutting plane method has low per-iteration complexity and solves the problem (\ref{pro low dim dual}) in linear time. We include the details on the cutting plane method in Section~\ref{supp:subsection-cutting-plane} below.
    Finally, the primal optimal solution of (\ref{pro define FC})  can be easily recovered from the dual optimal solution $\lambda^\star$ via solving  $\max_{P\in S_{n,m}} \big\langle \sum_{j=1}^h \lambda_j^\star \mathbf{1}_n a_j^\top - C, P \big\rangle$, under the assumption that this problem has a unique minimizer, which almost always holds in practice for the computed $\lambda^\star$ and is also assumed by \cite{Xiong2023FairWASP}.  In this way, we have shown how the problem (\ref{pro define FC}) can be efficiently solved by applying a cutting plane method on its dual problem.


\subsection{Details of the Cutting Plane Method for Solving (\ref{pro low dim dual})}\label{supp:subsection-cutting-plane}

The cutting plane method is designed for convex problems where a \textit{separation oracle} can be employed \cite{Khachiyan1980polynomial}. For any $\lambda \in \mathbb{R}^m$, a separation oracle operates by generating a vector $g$ which satisfies $g^\top \lambda \ge g^\top \lambda^\star$ for all $\lambda^\star$ in the set of optimal solutions. By repeatedly applying the separation oracle to cut down the potential possible feasible set, the cutting plane method progressively narrows down the feasible solution space until it reaches convergence. The specific steps of the cutting plane algorithm are detailed in Algorithm~\ref{alg:cutting plane method}, with the key distinctions among different versions of this method lying in how lines 3 and 4 are implemented.

\begin{algorithm}[htbp]
\caption{General Cutting Plane Method for (\ref{pro low dim dual})}
\label{alg:cutting plane method}
\begin{algorithmic}[1]
\STATE Choose a bounded set $E_0$ that contains an optimal solution
\FOR{$k$ from $0$ to $n$}
    \STATE Choose an interior point $\lambda^k$ of $E_k$; 
    \STATE Compute $g \in \mathbb{R}^m$ such that 
    $$
    g^\top \lambda^k \ge g^\top \lambda^\star \text{ for any optimal solution }\lambda^\star; 
    $$
    \STATE Choose the next bounded set $E_{k+1} \supseteq \{\lambda\in E_k: g^\top \lambda \le g^\top \lambda^k\}$;
\ENDFOR
\end{algorithmic}
\end{algorithm}

For the problem~(\ref{pro low dim dual}), a separation oracle (line 4 in Algorithm \ref{alg:cutting plane method}) can directly use the vector of subgradients, which are efficiently accessible, as we mentioned in section \ref{sec: majority-minimization}. 
Given that we have shown (\ref{pro low dim dual}) is a low-dimensional convex  program with subgradient oracles, there exist many well-established algorithms that can be used.
Suppose that the norm of an optimal $\lambda^\star$ is bounded by $R$, to the best of our knowledge, the cutting plane method with the best theoretical complexity is given by \cite{jiang2020improved}, who proposed an improved cutting plane method that only needs $O((h\cdot \operatorname{SO} + h^2 )\cdot \log(hR/\eps))$ flops. Here SO denotes the complexity of the separation oracle. Note that here $h$ is at most $2|\calD||\calY|$, which is far smaller than $n$ or $m$. 
Here we restate the Corollary 5 of \cite{Xiong2023FairWASP} below for the overall time and space complexity of applying the cutting plane method in the case $m \neq n$. 
\begin{lemma}[Essentially Corollary 5 of \cite{Xiong2023FairWASP}]\label{cor our complexity-app}
    With efficient computation and space management, the cutting plane method solves the problem (\ref{pro low dim dual}) within 
    \begin{equation}\label{eq complexity of cutting plane n=m}
        \tilde{O}\left(
    mn + |\calD|^2|\calY|^2n  \cdot \log(\tfrac{R}{\eps})
    \right) 
    \end{equation}
    flops and $ {O}( n|\calD||\calY| ) $ space. Here we use $\tilde{O}(\cdot)$ to hide $m$, $n$, $|\calD|$, and $|\calY|$ in the logarithm function.
\end{lemma}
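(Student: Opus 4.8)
The plan is to treat this as an adaptation of Corollary 5 of \cite{Xiong2023FairWASP} to the regime $m \neq n$, so the heart of the argument is to (i) pin down the cost $\operatorname{SO}$ of a single separation-oracle call for the dual problem (\ref{pro low dim dual}), (ii) substitute this into the cutting-plane complexity of \cite{jiang2020improved}, and (iii) account separately for the one-time initialization and the working memory. The dual dimension is $h$, the number of rows of $A$, with $h \le 2|\calD||\calY|$; since $h$ is independent of $n$ and $m$, all $h$-dependent factors will be absorbed into low-order terms and into $\tilde O(\cdot)$.

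First I would analyze the separation oracle. By the minimax reduction already established, a subgradient of the dual objective at $\lambda$ is $A (P^\star)^\top \mathbf{1}_n$, where $P^\star$ maximizes $\langle \sum_{j} \lambda_j \mathbf{1}_n a_j^\top - C, P\rangle$ over $S_{n,m}$ and is given in closed form (row-wise argmax) by Lemma~\ref{lm minimizer P}. The key structural observation is that the matrix $\sum_{j} \lambda_j \mathbf{1}_n a_j^\top$ has all $n$ of its rows identical and equal to $A^\top\lambda \in \mathbb{R}^m$, and that each entry of $A^\top\lambda$ depends on its coreset column only through the fixed pair $(\hat d, \hat y)$; hence $A^\top\lambda$ takes at most $|\calD||\calY|$ distinct values, computable in $\tilde O(|\calD|^2|\calY|^2)$ time. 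Consequently, for each row $i$ the required argmax $\arg\max_{j}\big((A^\top\lambda)_{j} - C_{ij}\big)$ reduces to a maximum over the $|\calD||\calY|$ groups, provided one has precomputed, for every row $i$ and group $(d,y)$, the minimum $\min_{j:\,\hat d_j=d,\,\hat y_j=y} C_{ij}$ and its minimizing index. With this reduced table available, a single oracle call — evaluating $A^\top\lambda$, performing the $n$ group-wise argmaxes, forming the $n$-sparse $P^\star$, and multiplying by $A$ — costs $\operatorname{SO} = O(n|\calD||\calY|)$ flops.

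Next I would combine this with the outer method. The improved cutting-plane algorithm of \cite{jiang2020improved} solves (\ref{pro low dim dual}) in $O\big((h\cdot \operatorname{SO} + h^2)\log(hR/\eps)\big)$ flops. Substituting $h = O(|\calD||\calY|)$ and $\operatorname{SO} = O(n|\calD||\calY|)$ gives $h\cdot\operatorname{SO} = O(n|\calD|^2|\calY|^2)$, which dominates $h^2$, so the iterative cost is $\tilde O\big(|\calD|^2|\calY|^2 n\log(R/\eps)\big)$ once the factor $\log h$ is absorbed into $\tilde O(\cdot)$. The one-time construction of the reduced cost table reads each of the $nm$ entries of $C$ exactly once and nothing more, contributing the additive $O(nm)$ term; adding the two yields the flop count (\ref{eq complexity of cutting plane n=m}). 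For space, the full $n\times m$ matrix $C$ need never be materialized — it can be streamed row-by-row while building the table — whereas the reduced table of per-group minima and indices, the iterate $\lambda\in\mathbb{R}^h$, and the sparse $P^\star$ all fit in $O(n|\calD||\calY|)$ memory.

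The main obstacle is the second step: showing the oracle costs $O(n|\calD||\calY|)$ rather than the naive $O(nm)$. This rests entirely on the group reduction — that $A^\top\lambda$ is constant within each $(\hat d,\hat y)$-group, and that the per-group row minima of $C$ can be tabulated once. Relative to \cite{Xiong2023FairWASP}, whose Corollary 5 assumes $m=n$, the only genuine change is that $P$ and $C$ now carry $m \neq n$ columns; since Lemma~\ref{lm minimizer P} is already stated over $S_{n,m}$ with $m\neq n$, the row-wise-argmax evaluation and the group reduction transfer verbatim, and the accounting above goes through unchanged.
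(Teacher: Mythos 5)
Your proposal is correct and follows essentially the same route as the paper, which simply invokes Corollary 5 of \cite{Xiong2023FairWASP} together with the cutting-plane bound $O((h\cdot \operatorname{SO}+h^2)\log(hR/\eps))$ of \cite{jiang2020improved} with $h\le 2|\calD||\calY|$, and notes the $O(mn)$-flop, $O(n|\calD||\calY|)$-space initialization. The group-reduction argument you make explicit (columns of $A$ constant on $(\hat d,\hat y)$-groups, per-group row minima of $C$ tabulated once) is precisely the deferred content of that corollary, so you have filled in details the paper leaves to the reference rather than taken a different path.
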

The above result is essentially Corollary 5 of \cite{Xiong2023FairWASP} by slightly extending the proof to the general case $m \neq n$.  
Finally, in terms of the implementation, we follow \cite{Xiong2023FairWASP} and use the analytic center cutting plane method.

\section{Theoretical Proofs} \label{supp: theory-proofs}

This section includes the theoretical proofs for Section \ref{sec: background} and Section~\ref{sec: convergence-guarantees}. We first show the Wasserstein distance upper bounds downstream disparity for MLP networks (Proposition~\ref{prop: wass-neuralnet}). We then show (i) the optimization problem can be reduced to optimizing over the $\hat{X}$ rather than $\hat{Z}$ (Lemma~\ref{lem: optimization-redux}), (ii) the surrogate function is convex and a valid upper bound of the optimization objective (Lemma~\ref{lm valid surrogate function}), (iii) our proposed algorithm converges to a first-order stationary point in $\hat{X}$ (Theorem~\ref{theo: convergence}), and (iv) our proposed algorithm terminates in a finite amount of iterations (Theorem~\ref{theo: finite termination}). We also prove the generalization bound for \methodname\ performance in terms of Wasserstein distance and demographic parity to unseen datasets coming from the same (unknown) distribution the original dataset was sampled from. Finally, we include Section~\ref{app: hardness-learning} to note how the downstream learning using \methodname\ can be broken down into two terms, which highlights the challenges in analyzing its theoretical properties.

\begin{lemma}\label{lemma: wass-dist-ub}
    Let $Z=(X,Y), \hat{Z}=(\hat{X}, \hat{Y}) \in \calZ = (\calX \times \calY)$ be two pairs of random variables with joint distributions $p_Z$ and $p_{\hat{Z}}$ and marginal distributions $p_X$, $p_Y$ and $p_{\hat{X}}$ and $p_{\hat{Y}}$ respectively. 
    Let $\Pi(Z, \hat{Z})$ indicate the set of all joint probability distributions over the product space $\calZ \times \calZ$ that admit marginal and conditional distributions over $\calX$ and $\calY$. For any non-negative cost operator $c$:
    $$\mathcal{W}_c(p_X, p_{\hat{X}}) \leq \mathcal{W}_c(p_Z, p_{\hat{Z}}).$$
\end{lemma}

\begin{proof}{Proof of Lemma~\ref{lemma: wass-dist-ub}} Let $\Pi_X(X, \hat{X})$ indicate the set of all marginal (joint) probability distributions over the product space $\calX \times \calX$. For any $\pi \in \Pi(Z, \hat{Z})$ and the corresponding $\pi_x \in \Pi_X(X, \hat{X})$ we have that:

\begin{align}\label{eq:first-passage-lemma}
    \int_{\calZ \times \calZ} c(z, \hat{z}) d\pi(z, \hat{z}) &\geq \int_{\calZ \times \calZ} c(x, \hat{x}) d\pi(z, \hat{z}) \\
    &= \int_{\calX \times \calX} c(x, \hat{x}) d\pi_x(x, \hat{x}) \notag \\
    &\geq \min_{\pi_x \in \Pi_x(x, \hat{x})} c(x, \hat{x}) d\pi_x(x, \hat{x}) = \mathcal{W}_c(p_X, p_{\hat{X}}) \notag
\end{align}

As this is valid for any $\pi \in \Pi(Z, \hat{Z})$, select:

$$\pi^\star = \argmin_{\pi \in \Pi(Z, \hat{Z})} \int_{\calZ \times \calZ} c(z, \hat{z}) d\pi(z, \hat{z}).$$

Then the left-hand-side of (\ref{eq:first-passage-lemma}) is equal to $\mathcal{W}_c(p_Z, p_{\hat{Z}})$, hence proving that $ \mathcal{W}_c(p_Z, p_{\hat{Z}}) \geq \mathcal{W}_c(p_X, p_{\hat{X}})$.

\end{proof}

\begin{proof}{Proof of Proposition~\ref{prop: wass-neuralnet}}
The proof of the upper bound follows from the first part of the proof of the Kantorovich-Rubinstein duality \cite{santambrogio2015optimal}. In this work we follow the proof by \cite{basso2015hitchhiker, thickstun2019kantorovich}, which consider the Lagrangian form of the 1-Wasserstein distance and express it in the following form:

$$
\mathcal{W}(p_{(X, D)}, p_{(\hat{X}, \hat{D})}) = \sup_{f,g: f(x) + g(y) \leq \|x - y\|_2} \left|\mathbb{E}_{(x, d) \sim p_{(X, D)}} f_\theta(x, d)- \mathbb{E}_{(x, d)\sim p_{(\hat{X}, \hat{D})}} f_{\theta}(x, d) \right|,
$$

where $f,g:\mathcal{X} \to \mathbb{R}$ are bounded, measurable functional Lagrangian multipliers. Let $L_{f_\theta}$ be the Lipschitz constant of the MLP $f_{\theta}$, and define the following function:

$$
h(x, d) = \frac{f_\theta(x, d)}{L_{f_\theta}}.
$$

By definition, $h(x, d)$ is 1-Lipschitz. Again following \cite{basso2015hitchhiker, thickstun2019kantorovich}, we know that for 1-Lipschitz functions the following holds:

\begin{align*}
    &\left|\mathbb{E}_{(x, d)\sim p_{(X, D)}} f_\theta(x, d) - \mathbb{E}_{(x, d)\sim p_{(\hat{X}, \hat{D})}} f_{\theta}(x, d) \right| = L_{f_\theta} \left|\mathbb{E}_{(x,d) \sim p_{(X, D)}} h(x, d)- \mathbb{E}_{(x,d) \sim p_{(\hat{X}, \hat{D})}} h(x, d) \right| \\
    &\quad = L_{f_\theta} \left| \int_{(\mathcal{X} \times \mathcal{D}) \times (\mathcal{X} \times \mathcal{D})} h(x_1, d_1, x_2, d_2) d\pi \left(p_{(X, D)}, p_{(\hat{X}, \hat{D})} \right) \right| \\
    &\quad \leq L_{f_\theta} \int_{(\mathcal{X} \times \mathcal{D}) \times (\mathcal{X} \times \mathcal{D})} \left| h(x_1, d_1, x_2, d_2) \right| d\pi \left(p_{(X, D)}, p_{(\hat{X}, \hat{D})} \right) \\
    &\quad \leq L_{f_\theta} \int_{(\mathcal{X} \times \mathcal{D}) \times (\mathcal{X} \times \mathcal{D})} \left\| (x_1, d_1) - (x_2, d_2) \right\|_2 d\pi \left(p_{(X, D)}, p_{(\hat{X}, \hat{D})} \right) \\
    &\quad \leq L_{f_\theta} \mathcal{W}_1(p_{(X,D)}, p_{(\hat{X},\hat{D})}) \leq L_{f_\theta} \mathcal{W}_1(p_Z, p_{\hat{Z}}),\\
\end{align*}

where the last inequality is due to Lemma~\ref{lemma: wass-dist-ub}. The result can be obtained by using the upper bound in 
\cite[Section 6.1]{virmaux2018lipschitz}, which shows that $L_{f_\theta} \leq L_k$ for $K$-layer MLPs with ReLu activations.\\
\end{proof}

\begin{proof}[Proof of Lemma~\ref{lem: optimization-redux}]
    Once we generate $m|\calD||\calY|$ data points, the feasible set of the latter Wasserstein coreset contains the feasible set of the former Wasserstein coreset.
\end{proof}

\begin{proof}[Proof of Lemma~\ref{lm valid surrogate function}]
The convexity follows directly from the convexity of $C(\hat{X})$, as the $P_k^\star \ge 0$ in (\ref{eq surrogate function}).

Before proving it is an upper bound, we show some important properties of $F(C)$ as a function of $C$.
Firstly, $F(C)$ is concave on $C$ because of the concavity  of the minimum LP's optimal objective on the objective vector. Secondly, since the feasible set of problem (\ref{pro define FC}) is bounded, the optimal solution $F(C)$ is continuous with respect to $C$. 
Thirdly,  due to the sensitivity analysis of LP \cite{bertsimas1997introduction}, a supergradient of $F(C)$ at point $C$ is the corresponding optimal solution $P^\star$. Here the definition of supergradients for concave functions is analogous to the definition of subgradients for convex functions.

Now we prove $g(\hat{X}; \hat{X}^k)$  is an upper bound of $F(C(\hat{X}))$.
Because $P_k^k$ is a supergradient of $F(C)$ when $C =C(\hat{X}^k)$,
$$
F(C) + \langle P_k^k, C(\hat{X}) - C \rangle \ge F(C(\hat{X})) \ ,
$$
in which the left-hand side is equal to $g(\hat{X}; \hat{X}^k)$ because $F(C) = \langle P_k^k,  C \rangle$ and $g(\hat{X}; \hat{X}^k) = \langle C(\hat{X}), P_k^\star\rangle$. Therefore, the surrogate function is an upper bound of the objective function $F(C(\hat{X}))$, i.e., $g(\hat{X}; \hat{X}^k) \ge F(C(\hat{X}))$. 
Moreover, due to the definition in (\ref{eq surrogate function}), $g(\hat{X}; \hat{X}^k) = F(C(\hat{X}))$ when $\hat{X} = \hat{X}^k$. 
\end{proof}

\begin{proof}[Proof of Theorem~\ref{theo: convergence}]
    The monotonically decreasing part of the claim follows by:
    \begin{equation}\label{supp: eq decreasing}
    \begin{aligned}
         F(C(\hat{X}^{k+1})) \le   g(\hat{X}^{k+1};\hat{X}^k)   = \arg\min_{\hat{X} \in \calX^m}g(\hat{X}; \hat{X}^k)
        \le  g(\hat{X}^{k};\hat{X}^k)
    = F(C(\hat{X}^k)) \ .
    \end{aligned}
    \end{equation}
    Here the first inequality is due to the fact that $g(\hat{X};\hat{X}^k) \ge F(C(\hat{X}))$ for any $\hat{X}$. The final equality is because $g(\hat{X}; \hat{X}^k) = F(C(\hat{X}))$ when $\hat{X} = \hat{X}^k$.  Once  $g(\hat{X}^{k}; \hat{X}^k) = g(\hat{X}^{k+1}; \hat{X}^k)$ and  thus $\hat{X}^{k} \in \arg\min_{\hat{X}\in\calX^m}g(\hat{X}; \hat{X}^k)$, then $\hat{X}^k$ is a global minimizer of the convex upper bound $g(\cdot;\hat{X}^{k})$ for $F(C(\cdot))$ and the upper bound $g(\hat{X}^k;\hat{X}^{k})$ attains the same function value with $ F(C(\hat{X}^k))$. Therefore, if the surrogate function is smooth at $\hat{X}^k$, which could be achieved if $C(\hat{X}^k)$ is smooth at $\hat{X}^k$, then $X^k$ is a first-order stationary point of (\ref{pro simplified target problem 2}).
\end{proof}


\begin{proof}[Proof of Theorem~\ref{theo: finite termination}]
    Because (\ref{eq minimizer of surrogate function main paper}) has a unique minimizer, the second inequality in (\ref{supp: eq decreasing})  holds strictly when $\hat{X}^{k+1} \neq \hat{X}^k$, or equivalently $g(\hat{X}^{k+1}; \hat{X}^k) \neq  g(\hat{X}^k; \hat{X}^k)$. Once $\hat{X}^{k+1} = \hat{X}^k$, then the algorithm terminates. Note that there are only finite possible optimal basic feasible solution $P^\star$ that could be generated by \texttt{FairWASP}, as shown in  Lemma \ref{lm minimizer P}. However, before the majority minimization converges,  (\ref{supp: eq decreasing}) holds strictly and the corresponding $P_k^\star$ keeps changing. Therefore, after finite iterations, there must be a $P_t^\star$ equal to a previous $P_j^\star$ for $j < t$.  When that happens, because the surrogate functions are the same and thus have the same minimizer, $\hat{X}^{t+1} = \hat{X}^{j+1}$, and 
    the inequalities (\ref{supp: eq decreasing}) then hold at equality when $k = j, j+1,\dots, t$. This implies that $g(\hat{X}^{j+1}; \hat{X}^j) = g(\hat{X}^j; \hat{X}^j)$,     so the algorithm terminates within finite iterations.
\end{proof}



\begin{proof}[Proof of Proposition~\ref{prop: generalization}]

For determining the convergence in Wasserstein distance between $p_{\hat{Z}}$ and $q_Z$, we first use the triangle inequality:
\begin{align*}
    \mathcal{W}_c(p_{\hat{Z}}, q_Z) &\leq \mathcal{W}_c(p_{\hat{Z}}, p_Z) + \mathcal{W}_c(p_Z, q_Z) = \lambda + \mathcal{W}_c(p_Z, q_Z)
\end{align*}
Note that the first term is deterministic, as it is the result of the optimization problem (\ref{pro general optimization}). For the second term, we use the result from \cite{fournier2015rate}, which implies that with probability $1-\alpha$ and for the 1-Wasserstein distance:
\begin{equation*}
    \mathbb{P}(\mathcal{W}_c(p_Z, q_Z) > \xi) \leq \exp \left(-c n \xi^{1/d} \right),
\end{equation*}
and so by setting the right-hand side equal to $\alpha$, or equivalently, setting
$$\xi = \sqrt[d]{ \frac{c \log(1/\alpha)}{n}}, $$ 
we obtain the first result.

For determining the convergence of the disparity between $p_{\hat{Z}}$ and $p_{\hat{Z}} (y,d)$ and $q_Y(y)$, we again first use the triangle inequality from the definition of the disparity $J$:

\begin{align*}
    \sup_{y \in \mathcal{Y}, d \in \mathcal{D}} J(p_{\hat{Z}} (y|d) , q_Y(y)) & = \sup_{y \in \mathcal{Y}, d \in \mathcal{D}} \frac{| p_{\hat{Z}}(y|d) - q_Y(y)|}{q_Y(y)} \\
    &\leq \sup_{y \in \mathcal{Y}, d \in \mathcal{D}} \left( \frac{| p_{\hat{Z}}(y|d) - p_Y(y)|}{q_Y(y)} + \frac{| p_Y(y) - q_Y(y)|}{q_Y(y)} \right) \\
    &\leq \sup_{y \in \mathcal{Y}, d \in \mathcal{D}} \frac{| p_{\hat{Z}}(y|d) - p_Y(y)|}{q_Y(y)} + \sup_{y \in \mathcal{Y}} \frac{| p_Y(y) - q_Y(y)|}{q_Y(y)}
\end{align*}

As the minimum of the marginal distribution of $q_Y(y)$ is bounded away from zero $\min_{y \in \mathcal{Y}} q_Z(y) = \rho >0$, and since by the optimization problem (\ref{pro general optimization}) we have $J(p_{\hat{Z}} (y|d) , p_Y(y)) \leq \epsilon$ for all $y \in \mathcal{Y}, d \in \mathcal{D}$:

\begin{align*}
    \sup_{y \in \mathcal{Y}, d \in \mathcal{D}} J(p_{\hat{Z}} (y|d) , q_Y(y))
    &\leq \sup_{y \in \mathcal{Y}, d \in \mathcal{D}} \frac{| p_{\hat{Z}}(y|d) - p_Y(y)|}{p_Y(y)} \frac{p_Y(y)}{q_Y(y)} + \sup_{y \in \mathcal{Y}} \frac{| p_Y(y) - q_Y(y)|}{q_Y(y)} \\
    &\leq \sup_{y \in \mathcal{Y}, d \in \mathcal{D}} J(p_{\hat{Z}} (y|d) , p_Y(y)) \frac{1}{\rho} + \sup_{y \in \mathcal{Y}} \frac{| p_Y(y) - q_Y(y)|}{q_Y(y)} \\
    &\leq \frac{\epsilon}{\rho} + \sup_{y \in \mathcal{Y}} \frac{| p_Y(y) - q_Y(y)|}{q_Y(y)}.
\end{align*}

Note that the first term is deterministic, while the second one is not, as we need to account for the uncertainty of observing $n$ i.i.d. samples $\left \{ Z_i \right\}_{i=1}^n$. For the second part, we use the Dvoretzky–Kiefer–Wolfowitz (DKW, \cite{dvoretzky1956asymptotic}) inequality:

\begin{align*}
\mathbb{P}\left( \sup_{y \in \mathcal{Y}} \frac{| p_Y(y) - q_Y(y)|}{q_Y(y)} > \xi \right) &\leq \mathbb{P}\left( \sup_{y \in \mathcal{Y}} | p_Y(y) - q_Y(y)| > \xi \rho \right) \\
&\leq 2\exp(-2n\rho^2\xi^2)
\end{align*}

By setting the right-hand side equal to $\alpha$, or equivalently, setting
$$
\xi = \sqrt{\frac{\log(\frac{2}{\alpha})}{2n\rho^2}},
$$
we obtain the second result.

Finally, we note that the assumption on the marginal distribution of $q_Y$ is bounded away from zero, i.e., $\rho >0$, is reasonable as the outcome is a discrete (usually binary) random variable. This assumption would be much more restricting in case $y$ was a continuous random variable (e.g., in regression settings).
\end{proof}

\subsection{Downstream Learning using \methodname} \label{app: hardness-learning}

Overall, the hardness of deriving bounds for the synthetic representatives provided by \methodname\ can be analyzed using the following breakdown, which is adapted from \cite{xu2023fair}. Consider the given dataset $Z=\{(X_i,Y_i,D_i) \}_{i=1}^n$, the synthetic representatives obtained using \methodname\ $\hat{Z}=\{(\hat{X}_j,\hat{Y}_j,\hat{D}_j) \}_{j=1}^m$ and $h \in \mathcal{H} = L^2(\mathcal{X} \times \mathcal{Y} \times \mathcal{D})$, the set of measurable square-integrable function in $L^2$. If we consider the downstream learning process using \methodname\ samples over the $L^2$ space:

$$
\inf_{h \in \mathcal{H}} \mathbb{E}_{Y|X, D} \left[ \|Y - h(\hat{X}, \hat{D}) \|^2_2 \right],
$$

then the above can be expanded in two terms, due to the property of the conditional expectation being an orthogonal operator in $\mathcal{H}$:

\begin{align}
    \inf_{h \in \mathcal{H}}&\mathbb{E}_{Y|X, D} \left[ \|Y - h(\hat{X}, \hat{D}) \|^2_2 \right] = \notag \\
    &= \underbrace{\mathbb{E}_{Y|X, D} \left[ \|Y - \mathbb{E}_{X,D}[\hat{Y}|\hat{X},\hat{D}] \|^2_2 \right]}_\text{\methodname\ Approximation Error} +  \underbrace{\inf_{h \in \mathcal{H}} \mathbb{E}_{Y|X, D} \left[ \| \mathbb{E}_{X,D}[\hat{Y}|\hat{X},\hat{D}] - h(\hat{X}, \hat{D}) \|^2_2 \right]}_\text{Learning with \methodname\ Samples}
\end{align}

The first term corresponds to the loss of information in approximating $Y$ with $\hat{Y}$ via the \methodname\ approach, and is actually independent of any downstream learning. This condition requires for the first moment (which for the binary $Y$ case is equivalent to the joint distribution) of $Y$ and $\hat{Y}$ to be as close as possible. Using \methodname\, this is enforced by minimizing the Wasserstein distance. Indeed, if in definition (\ref{pro original wasserstrein distance}) one restricts to couplings that admit marginal and conditional distributions, then the conditional distributions of $Y|X,D$ and $\hat{Y}|\hat{X}, \hat{D}$ are upper bounded in Wasserstein sense by the Wasserstein distance between the joint distribution of $p_Z$ and $p_{\hat{Z}}$ \cite{kim2022conditional}. 

The second terms refers to the training process using \methodname\ samples  Firstly, by using the equivalence in \cite{xu2023fair}, the second term is equivalent to $ \inf_{h \in \mathcal{H}} \mathbb{E}_{Y|X, D} \left[ \|\hat{Y} - f(\hat{X}, \hat{D} \|^2_2 \right]$, which correspond the finding the best $L^2$ function to approximate the distribution of $\hat{Y}$. This fact implies that using \methodname\ samples is indeed mathematically equivalent to the learning task for the original $Y$. However, this second term also highlights the hardness of developing learning bounds, as the \methodname\ synthetic representatives $\hat{Z}=\{(\hat{X}_j,\hat{Y}_j,\hat{D}_j) \}_{j=1}^m$ are not i.i.d., and hence standard bounds are not applicable.

\section{Experiment Details}\label{supp: exp-details}

\subsection{Runtime Analysis on Synthetic Dataset}\label{supp: synthetic-dataset}

As mentioned in Section~\ref{sec: experiments}, we generate a synthetic dataset in which one feature is strongly correlated with the protected attribute $D$ to induce a backdoor dependency on the outcome. We consider a binary protected attribute, $D \in \{0,1\}$, which could indicate e.g., gender or race.
The synthetic dataset contains two features, a feature $X_1$ correlated with the protected attribute and a feature $X_2$ uncorrelated with the protected attribute.
For $D=0$, $X_1$ is uniformly distributed in $[0,10]$, while for $D=1$, $X_1 = 0$. Instead, $X_2$ is $5$ times a random variable from a  normal distribution $\mathcal{N}(0,1)$. Finally, the outcome $Y$ is binary, so $Y = \{0, 1\}$: $Y_i = 1$ when $Y_i > m_x + \eps_i$ and $Y_i = 0$ when $Y_i \le m_x + \eps_i$, where $m_x$ is the mean of $\{(X_1)_i + (X_2)_i\}_i$ and the noise $\eps_i$ comes from a normal distribution $\mathcal{N}(0,1)$.

This experiment visualizes the speed of our method with respect to different numbers of overall samples $n$, number of samples in the  compressed dataset $m$, and the dimensionality of features $p$.
We evaluated the performance of the algorithm under the synthetic data with different configurations of $n$, $p$, and $m$. 
In this experiment, we set compute the fair Wasserstein coreset under the $l_1$-norm distance and we use k-means \cite{lloyd1982least} to initialize the starting coreset $\hat{X}^0$. 
We terminate the algorithm when $\hat{X}^k = \hat{X}^{k-1}$.
The time per iteration and total iterations for varying $n$, $p$, and $m$ are shown in the Figures~\ref{fig: main-results} (top left) and \ref{fig: supp-runtime}. We see that increasing the sample size of the original dataset $n$ increases the runtime and number of iterations, while increasing the number of coresets $m$ or dimensionality of the features $p$ reduces the overall numbers of iterations but increases each iteration's runtime. Additionally, for the setting of Figure~\ref{fig: main-results} (top left), in which we vary the dataset size $n$, Table~\ref{tab: supp-runtime} provides \methodname\ average runtimes from $n=500$ to $n=1,000,000$. We compare \methodname\ runtimes with the runtime at $n=500$ (our lowest dataset size in the experiment) extrapolated (i) linearly, with a factor of 1, (ii) linearly, with a factor of 10 and (iii) quadratically. We can see that the complexity is near linear and less than quadratic with respect to the dataset size $n$, although the rate indeed seem to increase for $n$ at $500,000$ and above, which can be attributed to the increasing number of iterations required to achieve convergence. This is akin to the phenomenon well-known for k-means, for which in larger datasets k-means might take an exponentially large number of iterations to terminate \cite{vattani2009k}. In practice, a fixed number of overall iterations is set to avoid this case: \textit{sklearn} sets it to $300$\footnote{\url{https://scikit-learn.org/stable/modules/generated/sklearn.cluster.KMeans.html}}, \textit{feiss} to $25$\footnote{\url{https://faiss.ai/cpp_api/struct/structfaiss_1_1Clustering.html}} and \textit{Matlab} to $100$\footnote{\url{https://www.mathworks.com/help/stats/kmeans.html}}.

\begin{figure}[!htpb]
    \centering
    \includegraphics[width=0.495\linewidth]{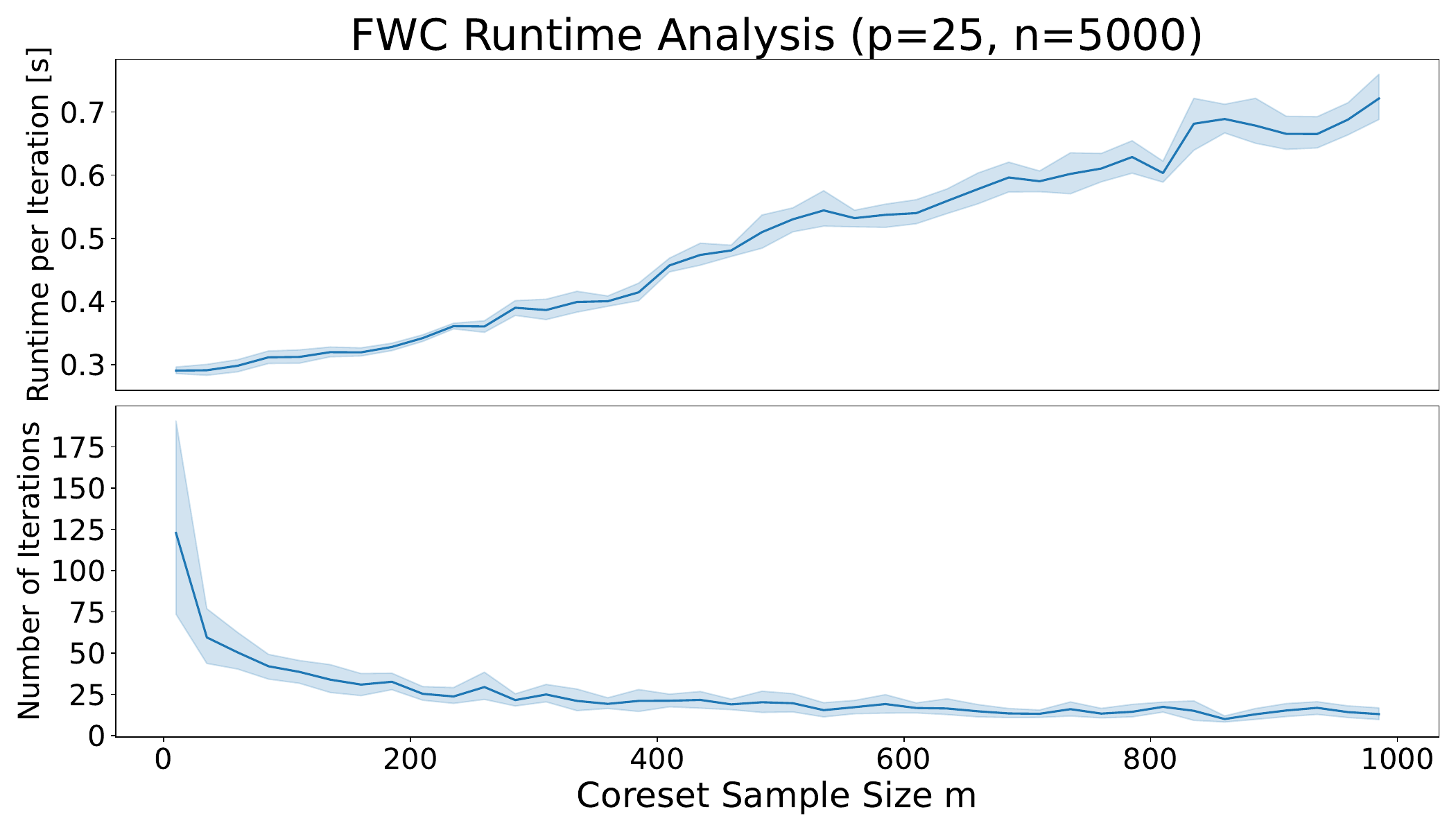}
    \includegraphics[width=0.495\linewidth]{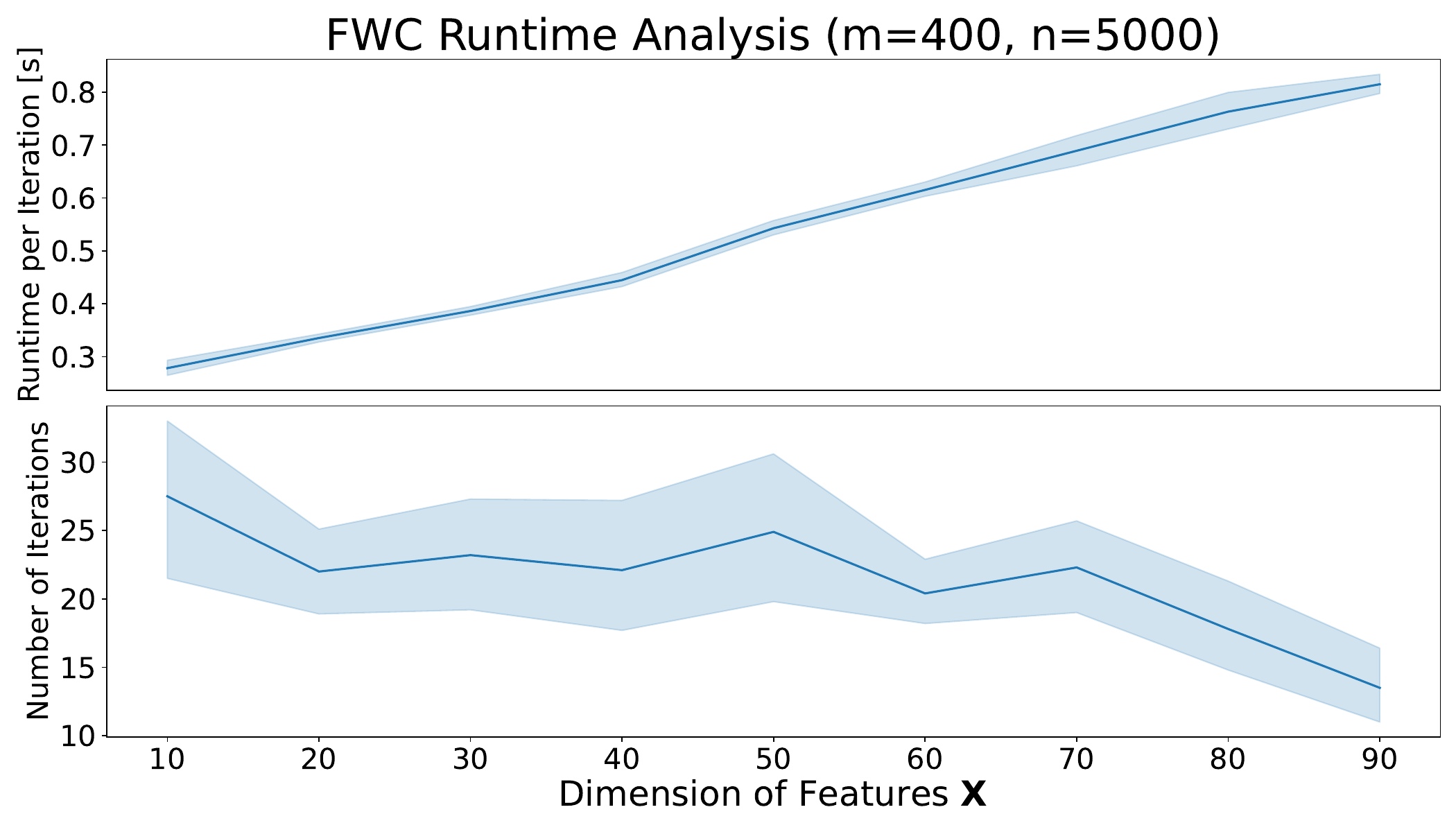}
    \vspace{-0.6cm}
    \caption{Runtime analysis of \methodname\, when varying the size of the coreset $m$ (left) and the dimensionality of the features $p$ (right). We report averages and one standard deviation computed over 10 runs.}
    \label{fig: supp-runtime}
\end{figure}

{\renewcommand{\arraystretch}{1.5}
\begin{table}[!ht]
\centering
\caption{Average runtimes for \methodname\ in the same settings as Figures~\ref{fig: main-results} (top left), varying the dataset size $n$ while fixing $m=250$ and $p=25$, compared with the runtimes for the smallest dataset extrapolated (i) linearly, with a factor of 1, (ii) linearly, with a factor of 10 and (iii) quadratically. \methodname\ enjoys a near linear time complexity, increasing with the largest dataset sizes; this phenomenon is shared with other clustering algorithm such as k-means (see text).}
\resizebox{1\textwidth}{!}{
\begin{tabular}{|c||c||c|c||c|c||c|c|}
\hline
\textbf{Dataset size $n$} & \textbf{Runtime [seconds]} & \textbf{Linear (Factor $1$)} & \textbf{Actual / Linear (Factor $1$)} & \textbf{Linear (Factor $10$)} & \textbf{Actual / Linear (Factor $10$)} & \textbf{Quadratic} & \textbf{Actual / Quadratic} \\ \hline \hline
500 & 8.9e-01 & - & - & - & - & - & - \\ \hline
1,000 & 9.4e-01 & 1.8e+00 & 0.52 & 1.8e+01 & 0.05 & 3.6e+00 & 0.26 \\ \hline
2,500 & 2.2e+00 & 4.5e+00 & 0.49 & 4.5e+01 & 0.05 & 2.2e+01 & 0.10 \\ \hline
5,000 & 1.0e+01 & 8.9e+00 & 1.17 & 8.9e+01 & 0.12 & 8.9e+01 & 0.12 \\ \hline
10,500 & 1.8e+01 & 1.9e+01 & 0.95 & 1.9e+02 & 0.09 & 3.9e+02 & 0.05 \\ \hline
24,500 & 8.7e+01 & 4.4e+01 & 1.99 & 4.4e+02 & 0.20 & 2.1e+03 & 0.04 \\ \hline
48,500 & 3.0e+02 & 8.6e+01 & 3.46 & 8.6e+02 & 0.35 & 8.4e+03 & 0.04 \\ \hline
75,000 & 7.0e+02 & 1.3e+02 & 5.25 & 1.3e+03 & 0.52 & 2.0e+04 & 0.03 \\ \hline
100,000 & 1.0e+03 & 1.8e+02 & 5.79 & 1.8e+03 & 0.58 & 3.6e+04 & 0.03 \\ \hline
250,000 & 3.8e+03 & 4.5e+02 & 8.51 & 4.5e+03 & 0.85 & 2.2e+05 & 0.02 \\ \hline
500,000 & 1.6e+04 & 8.9e+02 & 18.34 & 8.9e+03 & 1.83 & 8.9e+05 & 0.02 \\ \hline
750,000 & 2.9e+04 & 1.3e+03 & 21.64 & 1.3e+04 & 2.16 & 2.0e+06 & 0.01 \\ \hline
1,000,000 & 4.9e+04 & 1.8e+03 & 27.58 & 1.8e+04 & 2.76 & 3.6e+06 & 0.01 \\ \hline
\end{tabular}
}
\label{tab: supp-runtime}
\end{table}

\subsection{Real Datasets}\label{supp: exp-real-dataset-subsec}

We consider the following four real datasets widely used in the fairness literature \cite{fabris2022algorithmic}:

\begin{itemize}
    \item the \textit{Adult dataset} \cite{misc_adult_2}, which reports demographic data from the 1994 US demographic survey about $\sim 49,000$ individuals. We use all the available features for classification apart from the ``\texttt{fnlwgt}'' feature, including gender as the protected attribute $D$ and whether the individual salary is more than USD$50,000$;
    \item the \textit{Drug dataset} \cite{fehrman2017five}, which contains drug usage history for $1,885$ individuals. Features $X$ include the individual age, country of origin, education and scores on various psychological test. We use the individual gender as the protected variable $D$. The response $Y$ is based on whether the individual has reported to have ever used the drug ``cannabis'' or not; 
    \item the \textit{Communities and Crime dataset} \cite{misc_communities_and_crime_183} was put together towards the creation of a software tool for the US police department. The dataset contains socio-economic factors for $\sim 2,000$ communities in the US, along with the proportion of violent crimes in each community. As protected attribute $D$, we include whether the percentage of the black population in the community is above the overall median. For the response $Y$, we use a binary indicator of whether the violent crimes percentage level is above the mean across all communities in the dataset;
    \item the \textit{German Credit dataset} \cite{misc_statlog_(german_credit_data)_144} reports a set of $1,000$ algorithmic credit decisions from a regional bank in Germany. We use all the available features, including gender as protected attribute $D$ and whether the credit was approved as response $Y$.
\end{itemize}

As perfect demographic parity achieves a value of $0$ for the discrepancy $J$, so we have included demographic ``dis''-parity to indicate any deviation from demographic parity. Across all experiments we compute demographic parity as the following absolute difference:
\begin{equation} \label{eq: dd}
    DD \defeq \left|p\left(h(X,D) = 1| D=1\right) - p(h(X,D) = 1| D=0)\right|,
\end{equation}
for a given classifier $h$ and a protected attribute $D$ with two levels; the larger this difference, the larger the disparity.
We also include the implementation and hyper-parameters of the methods used in the fairness-utility tradeoffs throughout the experiments in this paper:

\begin{itemize}
    \item For \methodname\, we set the fairness violation hyper-parameter $\epsilon$ of problem in Equation (\ref{pro general optimization model}) to be $\epsilon=[0.01, 0.05, 0.1]$, hence obtaining three separate  \methodname\, models, \methodname\,$(0.01$),  \methodname\,$(0.05$) and  \methodname\,$(0.1$);
    \item For \texttt{Fairlet} \cite{backurs2019scalable}, we use the implementation available at the following GitHub repository: \url{https://github.com/talwagner/fair_clustering/tree/master}
    \item For \texttt{IndFair} \cite{chhaya2022coresets} and  \texttt{K-Median Coresets} \cite{bachem2018one}, we use the implementation available at the following GitHub repository: \url{https://github.com/jayeshchoudhari/CoresetIndividualFairness/tree/master}
    \item For k-means \cite{lloyd1982least} and k-medoids \cite{maranzana1963location, park2009simple} we use the implementations available in the Python package \texttt{Scikit-Learn} \cite{pedregosa2011scikit}
\end{itemize}

All computations are run on an Ubuntu machine with 32GB of RAM and 2.50GHz Intel(R) Xeon(R) Platinum 8259CL CPU. For all datasets, we randomly split $75\%$ of the data into training/test set, and change the split during each separate run; the training data are further separated into training and validation with $90/10$ to compute early stopping criteria during training. The downstream classifier used is a one-layer deep multi-layer perceptron (MLP) with $20$ hidden layers, ReLu activation function in the hidden layer and softmax activation function in the final layer. Unless stated otherwise, \methodname\ uses the $L^1$ to compute the distance from the original datasets in the optimization problem. For the downstream classifier, we use Adam optimizer \cite{kingma2014adam} with a learning rate set to $10^{-3}$, a batch size of $32$, a maximum number of epochs set to $500$ with early stopping evaluated on the separate validation set with a patience of $10$ epochs and both the features $X$ and the protected attribute $D$ are used for training the classifier. Note that due to the size of the Adult dataset, \texttt{Fairlet} coresets \cite{backurs2019scalable} could not be run due to the RAM memory required exceeding the machine capacity (32GB). Finally, all uncertainties are reported at $\pm 1\sigma$ (one standard deviation) in both figures and tables. Uncertainties are computed over a set of 10 runs where the random seed for the algorithm initialization and train/test split was changed, but consistent across all methods (i.e., all methods in the first run were presented the same train/test split across each datasets).

{\renewcommand{\arraystretch}{1.5}
\begin{table}[!ht]
\centering
\caption{Wasserstein distance of the weighted coresets with respect to the original dataset, with averages and standard deviations obtained over 10 runs. In bold, coresets with the closest distance to the original dataset (i.e., smallest Wasserstein distance) in each coreset size and dataset combination.}
\resizebox{1\textwidth}{!}{
\begin{tabular}{|c||c|c|c|c|c|c|c|c|c|c|c|c|}
\hline
& \multicolumn{12}{c|}{\textbf{Wasserstein Distance (}$\downarrow$\textbf{)}} \\ \hline
\textbf{Method} & \multicolumn{3}{c|}{\textit{Adult} ($\times 10^6$)}  & \multicolumn{3}{c|}{\textit{Credit} ($\times 10^6$) } & \multicolumn{3}{c|}{\textit{Crime}} & \multicolumn{3}{c|}{\textit{Drug}} \\ \hline
\textit{Coreset Size m} & $0.5\%$ & $1\%$ & $2\%$ & $5\%$ & $10\%$ & $20\%$ & $5\%$ & $10\%$ & $20\%$ & $5\%$ & $10\%$ & $20\%$ \\ \hline
\methodname\ ($\epsilon$: 0.01) & 5.72 $\pm$ 0.95 & \textbf{3.76 $\pm$ 0.80} & \textbf{2.70 $\pm$ 0.56} & \textbf{0.40 $\pm$ 0.10} & 0.75 $\pm$ 0.21 & 1.07 $\pm$ 0.28 & \textbf{1.65 $\pm$ 0.08} & 1.76 $\pm$ 0.09 & 1.89 $\pm$ 0.08 & \textbf{3.23 $\pm$ 0.23} & 3.57 $\pm$ 0.24 & 3.98 $\pm$ 0.22  \\ \hline
\methodname\ ($\epsilon$: 0.05) & \textbf{5.57 $\pm$ 0.73} & 3.90 $\pm$ 0.82 & 2.79 $\pm$ 0.58 & 1.07 $\pm$ 0.28 & 0.48 $\pm$ 0.24 & 0.72 $\pm$ 0.17 & 1.87 $\pm$ 0.11 & \textbf{1.66 $\pm$ 0.12} & 1.72 $\pm$ 0.06 & 3.95 $\pm$ 0.30 & \textbf{3.31 $\pm$ 0.34} & 3.49 $\pm$ 0.19  \\ \hline
\methodname\ ($\epsilon$: 0.1) & 6.18 $\pm$ 1.07 & 4.15 $\pm$ 0.82 & 3.04 $\pm$ 0.63 & 0.71 $\pm$ 0.15 & 1.05 $\pm$ 0.30 & 0.47 $\pm$ 0.19 & 1.70 $\pm$ 0.07 & 1.85 $\pm$ 0.11 & \textbf{1.62 $\pm$ 0.08} & 3.50 $\pm$ 0.21 & 3.95 $\pm$ 0.29 & \textbf{3.25 $\pm$ 0.24}  \\ \hline
Fairlet (K-Means) & - & - & - & 10.02 $\pm$ 2.14 & 7.47 $\pm$ 2.77 & 5.00 $\pm$ 2.48 & 2.10 $\pm$ 0.42 & 2.10 $\pm$ 0.44 & 2.00 $\pm$ 0.36 & 4.29 $\pm$ 0.44 & 4.10 $\pm$ 0.45 & 3.83 $\pm$ 0.43  \\ \hline
Fairlet (K-Medoids) & - & - & - & 3.16 $\pm$ 0.81 & 4.48 $\pm$ 0.89 & 5.47 $\pm$ 0.65 & 4.21 $\pm$ 0.17 & 4.14 $\pm$ 0.14 & 4.10 $\pm$ 0.06 & 6.50 $\pm$ 0.45 & 5.90 $\pm$ 0.47 & 5.17 $\pm$ 0.43  \\ \hline
Ind. Fair. Cor. & 12.96 $\pm$ 5.83 & 23.91 $\pm$ 25.39 & 11.48 $\pm$ 8.19 & \textbf{0.41 $\pm$ 0.29} & 0.72 $\pm$ 0.46 & \textbf{0.23 $\pm$ 0.12} & 2.35 $\pm$ 0.16 & 2.60 $\pm$ 0.25 & 2.01 $\pm$ 0.09 & 4.84 $\pm$ 0.33 & 5.51 $\pm$ 0.50 & 4.17 $\pm$ 0.21  \\ \hline
K-Median Cor. & 22.11 $\pm$ 17.42 & 9.37 $\pm$ 11.27 & 12.04 $\pm$ 9.11 & 0.66 $\pm$ 0.74 & \textbf{0.26 $\pm$ 0.17} & 0.36 $\pm$ 0.17 & 2.58 $\pm$ 0.20 & 2.01 $\pm$ 0.11 & 2.35 $\pm$ 0.12 & 5.38 $\pm$ 0.54 & 4.16 $\pm$ 0.22 & 4.80 $\pm$ 0.15  \\ \hline
Uniform Subsampling & 18.01 $\pm$ 18.34 & 14.39 $\pm$ 14.90 & 7.96 $\pm$ 5.42 & 0.74 $\pm$ 0.53 & 0.30 $\pm$ 0.16 & 0.80 $\pm$ 1.15 & 2.60 $\pm$ 0.26 & 1.95 $\pm$ 0.13 & 2.28 $\pm$ 0.24 & 5.46 $\pm$ 0.59 & 4.01 $\pm$ 0.30 & 4.75 $\pm$ 0.42  \\ \hline
K-Means & 43.96 $\pm$ 18.82 & 77.04 $\pm$ 18.06 & 74.50 $\pm$ 12.53 & 13.23 $\pm$ 1.42 & 10.76 $\pm$ 1.55 & 7.59 $\pm$ 1.58 & 2.00 $\pm$ 0.12 & 1.93 $\pm$ 0.11 & 2.04 $\pm$ 0.08 & 3.94 $\pm$ 0.17 & 3.66 $\pm$ 0.20 & 3.62 $\pm$ 0.32  \\ \hline
K-Medoids & 50.87 $\pm$ 4.30 & 53.31 $\pm$ 1.85 & 51.77 $\pm$ 1.86 & 2.59 $\pm$ 0.42 & 4.25 $\pm$ 0.68 & 5.18 $\pm$ 0.07 & 3.15 $\pm$ 0.08 & 3.12 $\pm$ 0.11 & 2.83 $\pm$ 0.06 & 5.14 $\pm$ 0.08 & 4.53 $\pm$ 0.28 & 3.91 $\pm$ 0.09  \\ \hline
\end{tabular}
}
\label{tab: supp-wass_cost_numbers}
\end{table}

{\renewcommand{\arraystretch}{1.5}
\begin{table}[!ht]
\centering
\caption{Clustering cost of the coresets with respect to the original dataset, with averages and standard deviations obtained over 10 runs. In bold, coresets with the smallest clustering cost (i.e., smallest sum of square distances of original dataset samples from the closest generated coreset sample) in each coreset size and dataset combination.}
\resizebox{1\textwidth}{!}{
\begin{tabular}{|c||c|c|c|c|c|c|c|c|c|c|c|c|}
\hline
& \multicolumn{12}{c|}{\textbf{Clustering Cost (}$\downarrow$\textbf{)}} \\ \hline
\textbf{Method} & \multicolumn{3}{c|}{\textit{Adult} ($\times 10^6$)}  & \multicolumn{3}{c|}{\textit{Credit} ($\times 10^4$) } & \multicolumn{3}{c|}{\textit{Crime} ($\times 10^2$)} & \multicolumn{3}{c|}{\textit{Drug} ($\times 10^2$)} \\ \hline
\textit{Coreset Size m} & $0.5\%$ & $1\%$ & $2\%$ & $5\%$ & $10\%$ & $20\%$ & $5\%$ & $10\%$ & $20\%$ & $5\%$ & $10\%$ & $20\%$ \\ \hline
\methodname\ ($\epsilon$: 0.01) & 14.73 $\pm$ 6.43 & 5.44 $\pm$ 2.29 & 2.07 $\pm$ 0.39 & \textbf{1.07 $\pm$ 0.21} & 2.61 $\pm$ 0.59 & 4.31 $\pm$ 0.92 & \textbf{3.86 $\pm$ 0.27} & 4.44 $\pm$ 0.21 & 4.79 $\pm$ 0.19 & \textbf{5.54 $\pm$ 0.41} & 6.45 $\pm$ 0.32 & 7.02 $\pm$ 0.31  \\ \hline
\methodname\ ($\epsilon$: 0.05) & 13.59 $\pm$ 4.21 & 5.81 $\pm$ 2.37 & 2.26 $\pm$ 0.30 & 4.41 $\pm$ 1.52 & 1.39 $\pm$ 0.96 & 2.44 $\pm$ 0.21 & 4.72 $\pm$ 0.36 & \textbf{3.96 $\pm$ 0.41} & 4.37 $\pm$ 0.14 & 6.94 $\pm$ 0.54 & \textbf{5.70 $\pm$ 0.63} & 6.35 $\pm$ 0.24  \\ \hline
\methodname\ ($\epsilon$: 0.1) & 20.29 $\pm$ 12.27 & 6.31 $\pm$ 2.36 & 2.19 $\pm$ 0.29 & 2.48 $\pm$ 0.20 & 4.00 $\pm$ 1.28 & 1.22 $\pm$ 0.41 & 4.36 $\pm$ 0.15 & 4.72 $\pm$ 0.34 & 3.89 $\pm$ 0.30 & 6.36 $\pm$ 0.26 & 6.94 $\pm$ 0.51 & 5.62 $\pm$ 0.45  \\ \hline
Fairlet (K-Means) & - & - & - & 1.36 $\pm$ 0.09 & 0.76 $\pm$ 0.11 & 0.60 $\pm$ 0.31 & 4.76 $\pm$ 0.46 & 4.36 $\pm$ 0.52 & 3.92 $\pm$ 0.56 & 7.08 $\pm$ 0.30 & 6.45 $\pm$ 0.40 & 5.76 $\pm$ 0.63  \\ \hline
Fairlet (K-Medoids) & - & - & - & 5.79 $\pm$ 1.36 & 5.40 $\pm$ 1.08 & 5.04 $\pm$ 0.92 & 6.27 $\pm$ 0.28 & 6.00 $\pm$ 0.27 & 5.59 $\pm$ 0.20 & 8.58 $\pm$ 0.41 & 7.93 $\pm$ 0.42 & 7.09 $\pm$ 0.54  \\ \hline
Ind. Fair. Cor. & 1.23 $\pm$ 0.37 & 4.67 $\pm$ 9.46 & 0.70 $\pm$ 0.04 & 1.50 $\pm$ 0.38 & 2.46 $\pm$ 0.77 & 0.74 $\pm$ 0.10 & 5.24 $\pm$ 0.27 & 5.64 $\pm$ 0.48 & 4.53 $\pm$ 0.22 & 7.29 $\pm$ 0.35 & 7.92 $\pm$ 0.62 & 6.30 $\pm$ 0.41  \\ \hline
K-Median Cor. & 10.45 $\pm$ 14.21 & 0.70 $\pm$ 0.04 & 1.06 $\pm$ 0.19 & 2.47 $\pm$ 0.90 & 0.76 $\pm$ 0.06 & 1.37 $\pm$ 0.29 & 5.64 $\pm$ 0.39 & 4.55 $\pm$ 0.27 & 5.21 $\pm$ 0.16 & 7.75 $\pm$ 0.66 & 6.26 $\pm$ 0.36 & 7.26 $\pm$ 0.20  \\ \hline
Uniform Subsampling & 10.16 $\pm$ 15.13 & 3.49 $\pm$ 1.18 & 1.93 $\pm$ 0.67 & 3.38 $\pm$ 1.42 & 1.09 $\pm$ 0.26 & 2.74 $\pm$ 2.71 & 5.61 $\pm$ 0.45 & 4.46 $\pm$ 0.34 & 5.07 $\pm$ 0.37 & 7.74 $\pm$ 0.73 & 6.06 $\pm$ 0.51 & 7.02 $\pm$ 0.50  \\ \hline
K-Means & \textbf{0.72 $\pm$ 0.01} & \textbf{0.58 $\pm$ 0.01} & \textbf{0.48 $\pm$ 0.01} & 1.28 $\pm$ 0.21 & \textbf{0.68 $\pm$ 0.1}2 & \textbf{0.51 $\pm$ 0.29} & 4.52 $\pm$ 0.16 & 4.09 $\pm$ 0.25 & \textbf{3.76 $\pm$ 0.39} & 6.86 $\pm$ 0.28 & 6.13 $\pm$ 0.41 & \textbf{5.61 $\pm$ 0.67}  \\ \hline
K-Medoids & 83.53 $\pm$ 13.86 & 75.10 $\pm$ 7.22 & 56.95 $\pm$ 9.17 & 4.93 $\pm$ 0.53 & 4.70 $\pm$ 0.48 & 4.19 $\pm$ 0.13 & 5.52 $\pm$ 0.07 & 5.15 $\pm$ 0.28 & 4.49 $\pm$ 0.25 & 7.65 $\pm$ 0.12 & 6.83 $\pm$ 0.43 & 5.93 $\pm$ 0.30  \\ \hline
\end{tabular}
}
\label{tab: supp-clust_cost_numbers}
\end{table}

\paragraph{Closeness to the original dataset and clustering performance} Tables~\ref{tab: supp-wass_cost_numbers} and \ref{tab: supp-clust_cost_numbers} include the numerical values (means and standard deviations computed over 10 runs) for all methods in terms of Wasserstein distance from the original dataset and clustering cost, for the three coreset sizes $m = [5\%, 10\%, 20\%]$ (apart from the Adult dataset, in which coreset sizes are set to $m = [0.5\%, 1\%, 2\%]$ due to the large size of the original dataset).  Clustering cost is computed as the sum of the squared distance of each point in the original dataset from the closest coreset representative, while the Wasserstein distance is computed solving the optimal transport between the empirical distribution of the original dataset and the one of the coresets, using the $L^1$ norm as cost function. \methodname\, consistently provides the closest distributional distance to the original dataset in Wasserstein distance, with the only exception of the Credit dataset, in which the large number of discrete features makes the optimization non-smooth in feature space, resulting in a potentially imprecise solution of Equation (\ref{pro simplified target problem 2}). In addition, \methodname\,, while not naturally minimizing clustering costs, seems to achieve competitive clustering costs in smaller datasets while not performing as well on larger datasets such as Adult. Finally, we note that the Wasserstein distance might not always decrease with a higher coreset size, which is due to the parity violation constraint $\epsilon$. In other words, the coreset samples not only have to be close to the original dataset distribution but also respect the hard fairness constraint; the lower the $\epsilon$, the tighter this constraint is (Equation~(\ref{pro general optimization})). Indeed, when $\epsilon$ is the largest ($\epsilon = 0.1$), coresets of size $20\%$ (or $2\%$ for the Adult dataset, i.e., the largest) consistently has a smaller Wasserstein distance to the original dataset than coresets of size $5\%$ ($0.5\%$ for the Adult dataset, i.e., the smallest).

\paragraph{Fairness-utility tradeoff when using coresets for training downstream models} Figure~\ref{fig: supp-all-results} expands the results provided in Figure~\ref{fig: main-results} and shows all the fairness-utility tradeoffs for all methods across the four datasets, both with (right column) and without (left column) using a pre-processing fairness approach \cite{kamiran2012data} (excluding \methodname\ , to which no fairness modification is applied after coresets have been generated). For each method, the coreset size that achieves the best fairness-utility tradeoff is shown (which is not necessarily the coreset with the largest size). \methodname\, achieves a competitive fairness-utility tradeoff with respect to other competing methods, when using the generated coresets to train a downstream MLP classifier model. \methodname\ consistently reduces disparity in the downstream classification with respect to other approaches, and often maintains the same utility (indicated by the AUC). For completeness, Figure~\ref{fig: supp-all-results-standard-deviations} also reports standard deviations for the fairness-utility tradeoff; standard deviations for the Adult and Credit datasets are large due to the MLP classifier becoming trivial (i.e., always returning $0$s or $1$s), which yields very low performance but has no demographic disparity (by definition, as all test samples are assigned the same outcome). Tables~\ref{tab: supp-learning-comp-numbers} and \ref{tab: supp-learning-comp-reweighing} report the numbers shown in Figures \ref{fig: supp-all-results} and \ref{fig: supp-all-results-standard-deviations}, including one number to quantify the fairness-utility tradeoff, computed as the Euclidean distance in the Figure from the $(0,1)$ point (which would be a fair classifier with perfect performance). In other words, for the $k$-th method achieving a disparity of $d_k$ with uncertainty $\Delta d_k$ and performance $a_k$ with uncertainty $\Delta a_k$, the tradeoff $t_k$ and associated uncertainty $\Delta t_k$ are quantified as:

\begin{equation} \label{eq: tradeoff}
    t_k = \sqrt{(1-a_k)^2 + d_k^2} \quad , \quad \Delta t_k = \sqrt{\left(\frac{d_k}{t_k} \Delta d_k  \right)^2 + \left(\frac{a_k - 1}{t_k} \Delta a_k \right)^2}
\end{equation}

Finally, the average reduction in disparity was computed from Tables~\ref{tab: supp-learning-comp-numbers} and \ref{tab: supp-learning-comp-reweighing}, by averaging the improvement obtained by \methodname\ samples against all methods and across datasets. \methodname\ result in an average reduction in disparity of $53\%$ and $18\%$ for the scenario without and with fairness pre-processing, respectively.

{\renewcommand{\arraystretch}{1.5}
\begin{table}[!ht]
\centering
\caption{Demographic disparity (Equation (\ref{eq: dd})), AUC and fairness-utility tradeoff (Equation (\ref{eq: tradeoff})) of downstream MLP classifier trained using all fair coresets/clustering methods across the four real datasets. The best method across the 3 different coreset sizes is shown, and the best performing method for each metric in each dataset is bolded. Averages and standard deviations taken over 10 runs. For the Credit dataset, K-means reaches low disparities due to the classifier being trivial, i.e., returning the same prediction regardless of input features.}
\resizebox{1\textwidth}{!}{
\begin{tabular}{|c||c|c|c||c|c|c||c|c|c||c|c|c||}
\hline
\textbf{Method} & \multicolumn{3}{c||}{\textit{Adult Dataset}}  & \multicolumn{3}{c||}{\textit{Credit Dataset}} & \multicolumn{3}{c||}{\textit{Crime Dataset}} & \multicolumn{3}{c||}{\textit{Drug Dataset}} \\ \hline
& DD ($\downarrow$) & AUC ($\uparrow$) & Tradeoff ($\downarrow$) & DD ($\downarrow$) & AUC ($\uparrow$) & Tradeoff ($\downarrow$) & DD ($\downarrow$) & AUC ($\uparrow$) & Tradeoff ($\downarrow$) & DD ($\downarrow$) & AUC ($\uparrow$) & Tradeoff ($\downarrow$) \\ \hline \hline
\methodname\ ($\epsilon$: 0.01) & \textbf{0.02 $\pm$  0.02} &  0.67 $\pm$  0.10 & 0.33 $\pm$  0.10 & 0.03 $\pm$  0.04 &  0.67 $\pm$  0.11 & 0.33 $\pm$  0.11 & \textbf{0.13 $\pm$  0.05} &  0.83 $\pm$  0.02 & \textbf{0.22 $\pm$  0.04}& \textbf{0.05 $\pm$  0.04} &  0.79 $\pm$  0.02 & \textbf{0.21 $\pm$  0.02} \\ \hline
\methodname\ ($\epsilon$: 0.05) & 0.07 $\pm$  0.03 &  \textbf{0.75 $\pm$  0.08} & \textbf{0.26 $\pm$  0.07} & 0.02 $\pm$  0.02 &  0.64 $\pm$  0.11 & 0.36 $\pm$  0.11& 0.15 $\pm$  0.04 &  0.84 $\pm$  0.02 & \textbf{0.22 $\pm$  0.03}& 0.06 $\pm$  0.02 &  0.80 $\pm$  0.02 & \textbf{0.21 $\pm$  0.02} \\ \hline
\methodname\ ($\epsilon$: 0.1) & 0.06 $\pm$  0.03 &  0.70 $\pm$  0.08 & 0.30 $\pm$  0.08& 0.03 $\pm$  0.03 &  0.66 $\pm$  0.12 & 0.34 $\pm$  0.12& 0.16 $\pm$  0.03 &  0.85 $\pm$  0.01 & \textbf{0.22 $\pm$  0.03}& 0.08 $\pm$  0.03 &  0.80 $\pm$  0.01 & \textbf{0.21 $\pm$  0.02} \\ \hline
Fairlet (K-Means) & - & - & - & \textbf{0.01 $\pm$  0.01} &  0.57 $\pm$  0.11 & 0.43 $\pm$  0.11& 0.38 $\pm$  0.09 &  0.90 $\pm$  0.02 & 0.39 $\pm$  0.09& 0.16 $\pm$  0.06 &  0.80 $\pm$  0.02 & 0.25 $\pm$  0.04 \\ \hline
Fairlet (K-Medoids) & - & - & - & 0.04 $\pm$  0.06 &  0.60 $\pm$  0.09 & 0.41 $\pm$  0.09& 0.24 $\pm$  0.16 &  0.86 $\pm$  0.04 & 0.28 $\pm$  0.14& 0.10 $\pm$  0.09 &  0.76 $\pm$  0.05 & 0.26 $\pm$  0.06 \\ \hline
Ind. Fair. Cor. & 0.08 $\pm$  0.07 &  0.73 $\pm$  0.08 & 0.28 $\pm$  0.08& 0.05 $\pm$  0.05 &  0.65 $\pm$  0.11 & 0.35 $\pm$  0.11& 0.39 $\pm$  0.10 &  0.88 $\pm$  0.02 & 0.40 $\pm$  0.10& 0.21 $\pm$  0.06 &  \textbf{0.81 $\pm$  0.02} & 0.28 $\pm$  0.05 \\ \hline
K-Median Cor. & 0.12 $\pm$  0.08 &  0.72 $\pm$  0.09 & 0.30 $\pm$  0.09& 0.07 $\pm$  0.07 &  0.66 $\pm$  0.12 & 0.35 $\pm$  0.12& 0.45 $\pm$  0.05 &  \textbf{0.91 $\pm$  0.01} & 0.46 $\pm$  0.05& 0.16 $\pm$  0.10 &  0.76 $\pm$  0.06 & 0.29 $\pm$  0.08 \\ \hline
Uniform Subsampling & 0.12 $\pm$  0.06 &  0.71 $\pm$  0.10 & 0.31 $\pm$  0.10& 0.07 $\pm$  0.09 &  \textbf{0.69 $\pm$  0.11} & \textbf{0.31 $\pm$  0.11} & 0.45 $\pm$  0.08 &  0.89 $\pm$  0.02 & 0.46 $\pm$  0.08& 0.14 $\pm$  0.11 &  0.78 $\pm$  0.03 & 0.26 $\pm$  0.06 \\ \hline
K-Means & 0.08 $\pm$  0.04 &  0.68 $\pm$  0.10 & 0.33 $\pm$  0.10& 0.00 $\pm$  0.00 $^\star$ &  0.51 $\pm$  0.02 & 0.49 $\pm$  0.02& 0.40 $\pm$  0.03 &  \textbf{0.91 $\pm$  0.01} & 0.41 $\pm$  0.03& 0.15 $\pm$  0.04 &  \textbf{0.81 $\pm$  0.02} & 0.25 $\pm$  0.03 \\ \hline
K-Medoids & 0.08 $\pm$  0.04 &  0.71 $\pm$  0.12 & 0.30 $\pm$  0.11& 0.05 $\pm$  0.05 &  0.60 $\pm$  0.07 & 0.40 $\pm$  0.07& 0.28 $\pm$  0.10 &  0.88 $\pm$  0.04 & 0.30 $\pm$  0.09& 0.12 $\pm$  0.05 &  0.80 $\pm$  0.02 & 0.23 $\pm$  0.03 \\ \hline
\end{tabular}
}
\label{tab: supp-learning-comp-numbers}
\end{table}

{\renewcommand{\arraystretch}{1.5}
\begin{table}[!ht]
\centering
\caption{Demographic disparity (Equation (\ref{eq: dd})), AUC and fairness-utility tradeoff (Equation (\ref{eq: tradeoff})) of downstream MLP classifier trained using all fair coresets/clustering methods across the four real datasets. All methods apart from \methodname\ have been corrected for fairness using a preprocessing fairness technique by \cite{kamiran2012data}. The best method across the 3 different coreset sizes is shown, and the best performing method for each metric in each dataset is bolded. Averages and standard deviations taken over 10 runs.}
\resizebox{1\textwidth}{!}{
\begin{tabular}{|c||c|c|c||c|c|c||c|c|c||c|c|c||}
\hline
\textbf{Method} & \multicolumn{3}{c||}{\textit{Adult Dataset}}  & \multicolumn{3}{c||}{\textit{Credit Dataset}} & \multicolumn{3}{c||}{\textit{Crime Dataset}} & \multicolumn{3}{c||}{\textit{Drug Dataset}} \\ \hline
& DD ($\downarrow$) & AUC ($\uparrow$) & Tradeoff ($\downarrow$) & DD ($\downarrow$) & AUC ($\uparrow$) & Tradeoff ($\downarrow$) & DD ($\downarrow$) & AUC ($\uparrow$) & Tradeoff ($\downarrow$) & DD ($\downarrow$) & AUC ($\uparrow$) & Tradeoff ($\downarrow$) \\ \hline \hline
\methodname\ ($\epsilon$: 0.01) & \textbf{0.02 $\pm$  0.02} &  0.67 $\pm$  0.10 & 0.33 $\pm$  0.10& 0.03 $\pm$  0.04 &  0.67 $\pm$  0.11 & \textbf{0.33 $\pm$  0.11}& 0.13 $\pm$  0.05 &  0.83 $\pm$  0.02 & 0.22 $\pm$  0.04& \textbf{0.05 $\pm$  0.04} &  0.79 $\pm$  0.02 & \textbf{0.21 $\pm$  0.02} \\ \hline
\methodname\ ($\epsilon$: 0.05) & 0.07 $\pm$  0.03 &  \textbf{0.75 $\pm$  0.08} & \textbf{0.26 $\pm$  0.07}& 0.02 $\pm$  0.02 &  0.64 $\pm$  0.11 & 0.36 $\pm$  0.11& 0.15 $\pm$  0.04 &  0.84 $\pm$  0.02 & 0.22 $\pm$  0.03& 0.06 $\pm$  0.02 &  0.80 $\pm$  0.02 & \textbf{0.21 $\pm$  0.02} \\ \hline
\methodname\ ($\epsilon$: 0.1) & 0.06 $\pm$  0.03 &  0.70 $\pm$  0.08 & 0.30 $\pm$  0.08& 0.03 $\pm$  0.03 &  0.66 $\pm$  0.12 & 0.34 $\pm$  0.12& 0.16 $\pm$  0.03 &  0.85 $\pm$  0.01 & 0.22 $\pm$  0.03& 0.08 $\pm$  0.03 &  0.80 $\pm$  0.01 & \textbf{0.21 $\pm$  0.02} \\ \hline
Fairlet (K-Means) & - & - & - & 0.02 $\pm$  0.03 &  0.60 $\pm$  0.08 & 0.40 $\pm$  0.08& 0.23 $\pm$  0.05 &  0.87 $\pm$  0.02 & 0.26 $\pm$  0.04& 0.12 $\pm$  0.06 &  \textbf{0.82 $\pm$  0.02} & \textbf{0.21 $\pm$  0.04} \\ \hline
Fairlet (K-Medoids) & - & - & - & 0.04 $\pm$  0.05 &  0.59 $\pm$  0.09 & 0.41 $\pm$  0.09& \textbf{0.11 $\pm$  0.14} &  0.81 $\pm$  0.08 & 0.22 $\pm$  0.10& 0.09 $\pm$  0.07 &  0.76 $\pm$  0.04 & 0.26 $\pm$  0.04 \\ \hline
Ind. Fair. Cor. & 0.05 $\pm$  0.03 &  0.68 $\pm$  0.08 & 0.33 $\pm$  0.08& 0.04 $\pm$  0.05 &  0.63 $\pm$  0.09 & 0.37 $\pm$  0.09& 0.17 $\pm$  0.04 &  0.85 $\pm$  0.02 & 0.23 $\pm$  0.03& 0.08 $\pm$  0.08 &  0.80 $\pm$  0.03 & \textbf{0.21 $\pm$  0.04} \\ \hline
K-Median Cor. & 0.04 $\pm$  0.04 &  0.69 $\pm$  0.08 & 0.31 $\pm$  0.08& \textbf{0.01 $\pm$  0.01} &  0.60 $\pm$  0.09 & 0.40 $\pm$  0.09& 0.23 $\pm$  0.07 &  0.85 $\pm$  0.02 & 0.27 $\pm$  0.06& 0.10 $\pm$  0.05 &  0.79 $\pm$  0.02 & 0.23 $\pm$  0.03 \\ \hline
Uniform Subsampling & 0.04 $\pm$  0.03 &  0.68 $\pm$  0.10 & 0.32 $\pm$  0.10& 0.03 $\pm$  0.05 &  \textbf{0.68 $\pm$  0.12} & \textbf{0.33 $\pm$  0.12}& 0.29 $\pm$  0.05 &  \textbf{0.88 $\pm$  0.01} & 0.31 $\pm$  0.05& 0.06 $\pm$  0.05 &  0.79 $\pm$  0.05 & 0.22 $\pm$  0.05 \\ \hline
K-Means & 0.06 $\pm$  0.03 &  0.66 $\pm$  0.09 & 0.34 $\pm$  0.09& 0.02 $\pm$  0.04 &  0.53 $\pm$  0.04 & 0.47 $\pm$  0.04& 0.23 $\pm$  0.05 &  \textbf{0.88 $\pm$  0.01} & 0.26 $\pm$  0.04& 0.09 $\pm$  0.03 & \textbf{0.82 $\pm$  0.01} & \textbf{0.21 $\pm$  0.02} \\ \hline
K-Medoids & 0.05 $\pm$  0.03 &  0.70 $\pm$  0.12 & 0.30 $\pm$  0.11& 0.01 $\pm$  0.02 &  0.61 $\pm$  0.08 & 0.39 $\pm$  0.08& 0.12 $\pm$  0.05 &  0.85 $\pm$  0.04 & \textbf{0.20 $\pm$  0.04}& 0.11 $\pm$  0.05 &  0.81 $\pm$  0.02 & 0.22 $\pm$  0.03 \\ \hline
\end{tabular}
}
\label{tab: supp-learning-comp-reweighing}
\end{table}

\paragraph{Fairness-utility tradeoff when using coresets for data augmentation} We also evaluate the performance of \methodname\ in reducing the downstream demographic disparity when doing data augmentation, i.e., adding the synthetic representatives to the training data when training a downstream model. We use the data augmentation scheme adopted by \cite[Section 2.1]{sharma2020data}, which first uses k-means on the original dataset and then sorts the synthetic representatives based on the distance of each synthetic representative to the nearest k-mean centroid with the same combination of protected attribute and outcome $D$ and $Y$. We generate a set of synthetic datasets of size equal to $50\%$ of the original dataset and look at the fairness-utility of a downstream model trained augmented with such synthetic representative in increments of $5\%$ ($20\%$ and $2.5\%$ respectively for the Adult dataset, given the large dataset size). Figures~\ref{fig: supp-aug-fwc-std} and \ref{fig: supp-aug-allmethods} show the fairness-utility tradeoff of the downstream MLP classifier when doing data augmentation, selecting the best model across various degrees of data augmentation, along with the performance of the baseline MLP classifier with no data augmentation (averages and standard deviations over 10 runs). Table~\ref{tab: supp-learning-comp-numbers-aug} shows the numerical values for the downstream fairness-utility tradeoff, including the tradeoff value computed as in Equation (\ref{eq: tradeoff}). In all datasets the data augmentation with \methodname\ seems to either increase the performance or reduce the demographic disparity, with the only exception being the Drug dataset. Upon further investigation, Figure~\ref{fig: supp-drug-dataset} shows this effect does not appear if the protected attribute $D$ (gender) is not included in the features used to train the downstream MLP classifier. This phenomenon indicates the protected attribute provides a strong predictive effect on the outcome (whether the individual has tried cannabis or not), which might potentially be mediated by unmeasured confounders, i.e., other features regarding the recorded individuals that are not available in the Drug dataset. This would require either in-training or post-processing fairness approaches to be alleviated; see \cite{hort2022BiasMitigationMachinea} for a comprehensive review on different potential approaches. Finally, as in Figure~\ref{fig: supp-all-results-standard-deviations}, the standard deviations for Adult and Credit dataset are large due to the downstream model becoming trivial.

\begin{figure}[!htpb]
    \centering
    \includegraphics[width=1\linewidth]{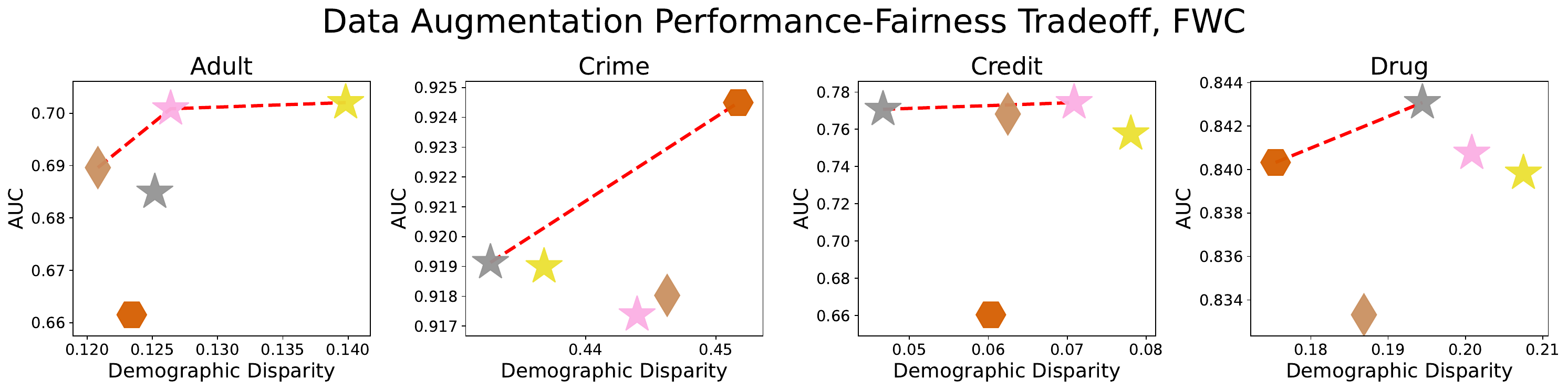}
    \includegraphics[width=1\linewidth]{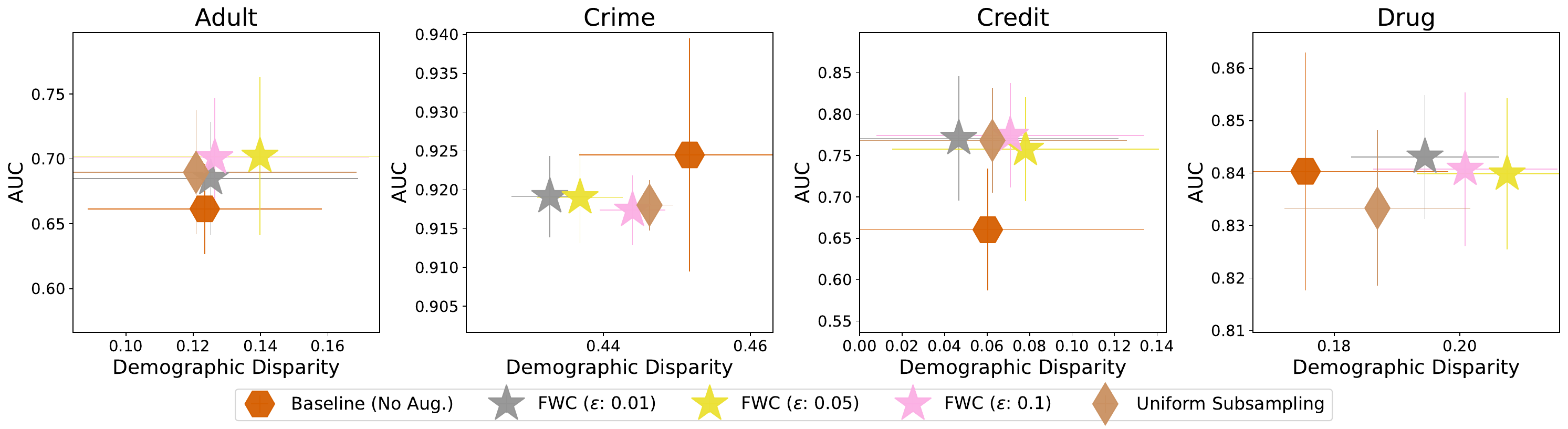}
    \vspace{-0.5cm}
    \caption{Fairness-utility tradeoff of downstream MLP classifier trained using the original training set augmented with coresets representatives, following the augmentation strategy from \cite{sharma2020data}. Each point shows the best model in terms of fairness-utility tradeoff over various degrees of data augmentation, in addition to the baseline model with no augmentation. Means and standard deviations taken over 10 runs, with the computed Pareto frontier indicated by the dashed red line.}
    \label{fig: supp-aug-fwc-std}
\end{figure}

\begin{figure}[!htpb]
    \centering
    \includegraphics[width=1\linewidth]{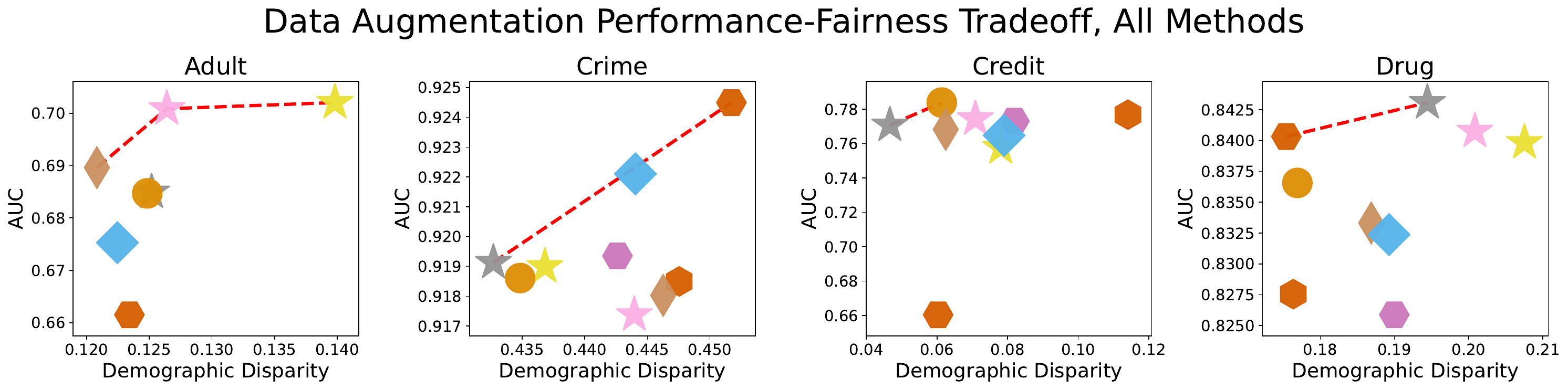}
    \includegraphics[width=1\linewidth]{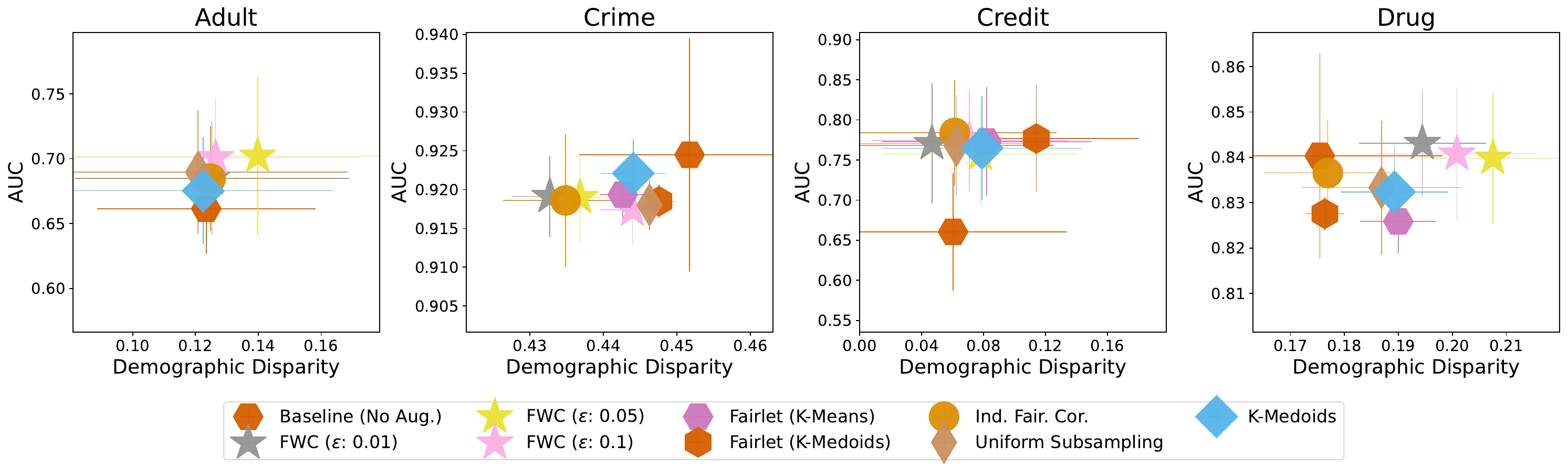}
    \vspace{-0.5cm}
    \caption{Fairness-utility tradeoff of downstream MLP classifier trained using the original training set augmented with coresets representatives, following the augmentation strategy from \cite{sharma2020data}, including all methods mentioned in Section~\ref{sec: experiments}. Each point shows the best model in terms of fairness-utility tradeoff over various degrees of data augmentation, in addition to the baseline model with no augmentation. Averages and standard deviations computed over 10 runs, with the top panel showing just means and the bottom panel combining both means and standard deviations, with the computed Pareto frontier indicated by the dashed red line.}
    \label{fig: supp-aug-allmethods}
\end{figure}

\begin{figure}[!htpb]
    \centering
    \includegraphics[width=1\linewidth]{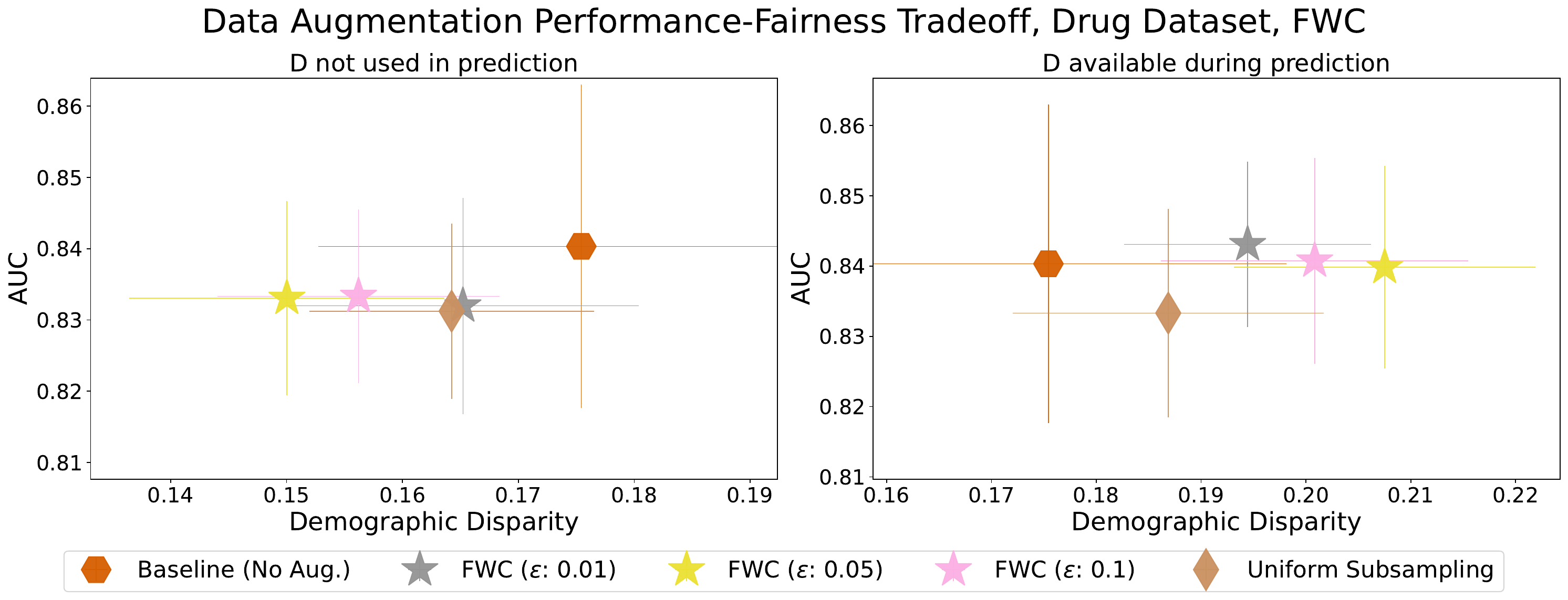}
    \vspace{-0.2cm}
    \caption{Data augmentation fairness-utility tradeoff of downstream MLP classifier for the Drug dataset when the protected attribute $D$ (gender) is either not included (left) or included (right) as feature in the learning process. As in Figure~\ref{fig: supp-aug-fwc-std}, the best model across various data augmentation degrees is reported, with averages and standard deviations obtained over 10 runs. \methodname\ manages to successfully reduce the demographic disparity when gender is not used as a feature, but fail to do so when gender is used, indicating that gender provides strong predictive power for the outcome in question, which would require enforcing fairness either during model training or by post-processing the outputs.}
    \label{fig: supp-drug-dataset}
\end{figure}

{\renewcommand{\arraystretch}{1.5}
\begin{table}[!ht]
\centering
\caption{Demographic disparity (Equation (\ref{eq: dd})), AUC and fairness-utility tradeoff (Equation (\ref{eq: tradeoff})) of downstream MLP classifier trained via data augmentation with all fair coresets/clustering methods, across the four real datasets. The best methods across various degrees of data augmentation is shown. The best performing methods for every column is bolded, with averages and standard deviations taken over 10 runs.}
\resizebox{1\textwidth}{!}{
\begin{tabular}{|c||c|c|c||c|c|c||c|c|c||c|c|c||}
\hline
\textbf{Method} & \multicolumn{3}{c||}{\textit{Adult Dataset}}  & \multicolumn{3}{c||}{\textit{Credit Dataset}} & \multicolumn{3}{c||}{\textit{Crime Dataset}} & \multicolumn{3}{c||}{\textit{Drug Dataset}} \\ \hline
& DD ($\downarrow$) & AUC ($\uparrow$) & Tradeoff ($\downarrow$) & DD ($\downarrow$) & AUC ($\uparrow$) & Tradeoff ($\downarrow$) & DD ($\downarrow$) & AUC ($\uparrow$) & Tradeoff ($\downarrow$) & DD ($\downarrow$) & AUC ($\uparrow$) & Tradeoff ($\downarrow$) \\ \hline \hline
Baseline (No Aug.) & \textbf{0.12 $\pm$  0.03} &  0.66 $\pm$  0.05 & 0.36 $\pm$  0.05& 0.06 $\pm$  0.06 &  0.66 $\pm$  0.11 & 0.34 $\pm$  0.11& 0.45 $\pm$  0.05 &  \textbf{0.924 $\pm$  0.015} & 0.46 $\pm$  0.04& \textbf{0.18 $\pm$  0.03} &  \textbf{0.84 $\pm$  0.02} & \textbf{0.24 $\pm$  0.03} \\ \hline
FWC ($\epsilon$: 0.01) & 0.13 $\pm$  0.03 &  0.68 $\pm$  0.07 & 0.34 $\pm$  0.06& \textbf{0.05 $\pm$  0.03} &  0.77 $\pm$  0.11 & \textbf{0.23 $\pm$  0.11}& \textbf{0.43 $\pm$  0.04} &  0.919 $\pm$  0.005 & \textbf{0.44 $\pm$  0.04}& 0.19 $\pm$  0.03 &  \textbf{0.84 $\pm$  0.01} & 0.25 $\pm$  0.02 \\ \hline
FWC ($\epsilon$: 0.05) & 0.14 $\pm$  0.03 &  \textbf{0.70 $\pm$  0.09} & 0.33 $\pm$  0.08& 0.08 $\pm$  0.05 &  0.76 $\pm$  0.09 & 0.25 $\pm$  0.09& 0.44 $\pm$  0.03 &  0.919 $\pm$  0.006 & \textbf{0.44 $\pm$  0.03}& 0.21 $\pm$  0.02 &  \textbf{0.84 $\pm$  0.01} & 0.26 $\pm$  0.02 \\ \hline
FWC ($\epsilon$: 0.1) & 0.13 $\pm$  0.02 &  \textbf{0.70 $\pm$  0.07} & \textbf{0.32 $\pm$  0.06} & 0.07 $\pm$ 0.05 &  0.77 $\pm$  0.09 & 0.24 $\pm$  0.09& 0.44 $\pm$  0.03 &  0.917 $\pm$  0.004 & 0.45 $\pm$  0.03& 0.20 $\pm$  0.03 &  \textbf{0.84 $\pm$  0.01} & 0.26 $\pm$  0.02 \\ \hline
Fairlet (K-Means) & - & - & - & 0.08 $\pm$  0.06 &  0.77 $\pm$  0.10 & 0.24 $\pm$  0.10& 0.44 $\pm$  0.03 &  0.919 $\pm$  0.003 & 0.45 $\pm$  0.03& 0.19 $\pm$  0.04 &  0.83 $\pm$  0.01 & 0.26 $\pm$  0.03 \\ \hline
Fairlet (K-Medoids) & - & - & - & 0.11 $\pm$  0.07 &  \textbf{0.78 $\pm$  0.10} & 0.25 $\pm$  0.09& 0.45 $\pm$  0.04 &  0.918 $\pm$  0.002 & 0.45 $\pm$  0.04& \textbf{0.18 $\pm$  0.05} &  0.83 $\pm$  0.00 & 0.25 $\pm$  0.04 \\ \hline
Ind. Fair. Cor. & \textbf{0.12 $\pm$  0.03} &  0.68 $\pm$  0.06 & 0.34 $\pm$  0.06& 0.06 $\pm$  0.04 &  \textbf{0.78 $\pm$  0.10} & 0.22 $\pm$  0.10& \textbf{0.43 $\pm$  0.04} &  0.919 $\pm$  0.009 & \textbf{0.44 $\pm$  0.04}& \textbf{0.18 $\pm$  0.02} &  \textbf{0.84 $\pm$  0.01} & \textbf{0.24 $\pm$  0.02} \\ \hline
Uniform Subsampling & \textbf{0.12 $\pm$  0.02} &  0.69 $\pm$  0.07 & 0.33 $\pm$  0.07& 0.06 $\pm$  0.05 &  0.77 $\pm$  0.09 & 0.24 $\pm$  0.09& 0.45 $\pm$  0.04 &  0.918 $\pm$  0.003 & 0.45 $\pm$  0.04& 0.19 $\pm$  0.03 &  0.83 $\pm$  0.01 & 0.25 $\pm$  0.02 \\ \hline
K-Medoids & \textbf{0.12 $\pm$  0.04} &  0.68 $\pm$  0.06 & 0.35 $\pm$  0.06& 0.08 $\pm$  0.05 &  0.76 $\pm$  0.10 & 0.25 $\pm$  0.09& 0.44 $\pm$  0.03 &  0.922 $\pm$  0.004 & 0.45 $\pm$  0.03& 0.19 $\pm$  0.02 &  0.83 $\pm$  0.01 & 0.25 $\pm$  0.02 \\ \hline
\end{tabular}
}
\label{tab: supp-learning-comp-numbers-aug}
\end{table}

\newpage

\paragraph{Using \methodname\ to correct biases in LLMs} 

To query GPT models, test data with 200 samples is provided with a base parity of 0.5 (similar to \cite{wang2023decodingtrust}) and additionally, examples are provided in the case of the few shot settings. Data is fed in as text. One of the tabular data, for example, is: ``\textit{A person in 1996 has the following attributes: Age: $21.0$, workclass: Private, education:  Some-college, highest education level: $10.0$, marital status: Never-married, occupation: Other-service, relationship: Own-child, race: White, sex: Female, capital gain: $0.0$, capital loss: $0.0$, hours per week: $25.0$, native country: United-States}". 

We use the following prompts:

\begin{itemize}
    \item \textbf{Zero shot}: ``\textit{Using the provided data, will this person from 1996 be hired at greater than 50,000 USD per year? You must only respond with the word 'yes' or 'no'. Here are 0 examples with the correct answer.}"
    \item \textbf{Few shot}: ``\textit{Given the provided data, will this person from 1996 be hired at greater than 50,000 USD per year? You must only respond with the word 'yes' or 'no'. Here are $n$ examples with the correct income level for a person in 1996. Make sure you use the examples as a reference}", where $n$ is the number of demographically balanced samples.
    \item \textbf{Few shot} (\methodname): ``\textit{Given the provided data, will this from 1996 be hired at greater than 50,000 USD per year? You must only respond with the word 'yes' or 'no'.
        Here are $n$ examples with the correct income level for a person in 1996, along with weights in the column weight. Weights are between 0 (minimum) and 1 (maximum). The more the weight, the more important the example is. Make sure you use the examples as a reference.}"
\end{itemize}

In the few shot settings, 16 examples are provided in both cases due to the LLM token limitation; passing fewer examples yields similar results to the ones in Table~\ref{tab:gpt}. For~\methodname, we run a separate coreset generation run (differently from other experiments in Section~\ref{sec: experiments}), where we selected $m=16$ and ensured that the positive class ($Y=1$) has an equal number of male and female samples. We also note that while the results from GPT-4 for the zero shot and few shot cases are similar to what was observed by \cite{wang2023decodingtrust}, the accuracies reported for the GPT-3.5 Turbo model appear to be lower in our experiments, which points to a potential difference in the exact backend LLM model used for inference.


\section{Limitations of \methodname} \label{app: limitation}


\textbf{Coreset support and non-convex feature spaces} \methodname\ representative do not need to be within the original $n$ samples, but they do need to share the same support. As shown in Section~\ref{sec: update-feature-vec}, solving line 5 in Algorithm~\ref{alg MM method} in cases 1. and 2. means the synthetic representatives could fall outside the original dataset. In case 3., the solution has to be selected from the data points already existing in the original dataset (akin to k-medoids). In general, non-convex feature spaces $\mathcal{X}$ might represent a challenge, as representatives might be generated in zero-density regions (a simple example could be a dataset distributed as a hollow circle or two moons). However, this criticism is also more generally applicable to the existing fair coresets/clustering literature, as well as the k-means algorithms, for which specific adjustments have been developed \cite{raykov2016k} and could be indeed extended to \methodname\ . \\

\textbf{Limiting the total number of iterations} \methodname\ is not of polynomial time complexity and the total number of iterations might grow faster than linear time when the dataset size is very large, as shown in the synthetic data experiment in Section~\ref{supp: synthetic-dataset} and Table~\ref{tab: supp-runtime}. This phenomenon is shared with k-means, which is also not of polynomial time complexity and is known to potentially take an exponentially large number of iterations to terminate \cite{vattani2009k}. As mentioned in Section~\ref{supp: synthetic-dataset}, a common practice for clustering is set a fixed number of maximum iterations, after which the algorithm is stopped. \\

\textbf{Computational bottlenecks} The main complexity term for \methodname\ is $\mathcal{O}(mn)$, which comes from establishing the cost matrix in the beginning of the solution of problem (\ref{pro define FC}). This complexity is comparable with what occurs in Lloyd's algorithm for k-means and k-medians. This might be problematic if the cost matrix is too large to be stored directly in memory. In practice, we do not actually need to store the entire matrix, as we only need to compute the largest component for each row of $C$ for solving problem (\ref{pro define FC}) (see Lemma~\ref{lm minimizer P}), so one could further improve the cost of storing the cost matrix. Another option would also be leverage the same approaches used for k-means such as, e.g., cost matrix sketching \cite{yin2022randomized}. Finally, \methodname\ would also benefit from GPU implementations akin to k-means and k-medians, which would substantially accelerate the runtime speed of \methodname\ . \\

\textbf{Connection between $\epsilon$ and downstream learning} In our definition of demographic parity in Equation ~\ref{type1}, the hyper-parameter $\epsilon$ effectively controls how different the outcome rates across sensitive feature groups $D$ of the weighted coreset distribution $p_{\hat{Z}, \theta}$ can be from the overall outcome rates in the original dataset $p_{Y_T}$. In our experiments (Section~\ref{sec: experiments}), we empirically show that limiting the fairness violation in the coresets results in a fairer downstream model. However, when training a downstream model using \methodname\ we induce a distribution shift between the train set and the test set, as the coresets distribution is never identical to the original dataset distribution. Although we have provided some results on the generalization properties of \methodname\ (Proposition~\ref{prop: generalization}) as well as some intuition about developing downstream learning bounds for \methodname\ in non-i.i.d. settings (Section~\ref{app: hardness-learning}), theoretically characterize the connection between the fairness violation parameter $\epsilon$ remains an open question. In essence, the analysis is challenging as the induced distribution shift is dependent on the biases in the original dataset distribution, the coreset size $m$, the metric chosen for the cost matrix and, ultimately, the fairness violation parameter $\epsilon$. For this reason, although we have shown that restricting the fairness violation improves downstream models fairness, an explicit characterization of the downstream effects of $\epsilon$, as well as other hyper-parameters, would require significant further work, beyond the scope of this paper.

\textbf{Implications for other fairness measures} As highlighted by \cite[Chapter 3]{barocas-hardt-narayanan}, fairness notions in classification settings can be categorized into notions of independence, separation, and sufficiency. Demographic parity falls in the class of the independence notion, and hence, other measures of fairness that are closely related, e.g., disparate impact, would also improve when optimizing for demographic parity. However, other notions of fairness such as separation or sufficiency may not simultaneously be satisfied \cite{barocas-hardt-narayanan}. As \methodname\ targets demographic parity, it cannot guarantee an improvement in these other measures.
To test this, we compute the equalized odds, which falls under the notion of separation, for the downstream classifier in Section~\ref{sec: experiments} and check the performance of \methodname\ compared to the other approaches. Table~\ref{tab: eq-odds} indicates the datasets in which \methodname\ is part of the Pareto frontier for both demographic parity and equalized odds. When considering equalized odds, \methodname\ is not a part of the Pareto frontier for the Drug dataset, and more generally, \methodname\ performance is not as competitive. This is in contrast to demographic parity: in Figure~\ref{fig: main-results}, \methodname\ sits on the Pareto frontier across all datasets for fairness-performance trade-off in downstream classification.

\begin{table}[!ht]
\caption{Presence on the Pareto frontier for \methodname\ across different fairness violation hyper-parameter values ($\epsilon = \{0.01, 0.05, 0.1\}$), for both demographic parity (left) and equalized odds (right) in the downstream learning settings of Section~\ref{sec: experiments}. As equalized odds is not an independence notion of fairness as demographic parity, constraints on demographic parity do not guarantee an improvement in equalized odds, resulting in \methodname\ not performing as well for downstream performance-fairness tradeoff when using equalized odds.}
\resizebox{1.0\textwidth}{!}{%
\begin{tabular}{|c||c|c|c||c|c|c|} \hline
& \multicolumn{3}{c|}{\textbf{Pareto Frontier, Demographic Parity}} & \multicolumn{3}{|c|}{\textbf{Pareto Frontier, Equalized Odds}} \\ \hline
\textbf{Dataset} & \methodname\ ($\epsilon=0.01$) & \methodname\ ($\epsilon=0.05$) & \methodname\ ($\epsilon=0.1$) & \methodname\ ($\epsilon=0.01$)             & \methodname\ ($\epsilon=0.05$)             & \methodname\ ($\epsilon=0.1$)            \\ \hline
Adult            & $\checkmark$          & $\checkmark$          & $\xmark$             & $\checkmark$                      & $\xmark$                      & $\xmark$                         \\ \hline
Drug             & $\checkmark$          & $\checkmark$          & $\checkmark$         & $\xmark$                          & $\xmark$                      & $\xmark$                     \\ \hline
Crime            & $\checkmark$          & $\checkmark$          & $\checkmark$         & $\xmark$                          & $\xmark$                          & $\checkmark$                         \\ \hline
Credit           & $\checkmark$          & $\xmark$              & $\xmark$             & $\checkmark$                          & $\xmark$                          & $\checkmark$ \\ \hline                    
\end{tabular}
}
\label{tab: eq-odds}
\end{table}

\section{Broader Impact} \label{app: broader impact}
Our work presents a novel approach to obtain coresets (synthetic representative samples) of a given dataset while reducing biases and disparities in subgroups of the given dataset. As other approaches in the field of algorithmic fairness, our efforts may help populations that would otherwise face disadvantages from a model or decision process. Importantly, our approach refrains from exploiting biases inherent in the data itself; rather, it seeks to mitigate biases in data-driven decision systems. It is crucial to note that our method does not claim to address all sources or types of bias. In addition, while our tools enable a malicious modeler to manipulate algorithmic fairness methods to amplify disparities instead of reducing them, for instance, by reversing the fairness constraint (replacing $\leq$ with $\geq$), the unfairness of a trained model can be detected by assessing it over a separate test set from the original dataset.

\begin{figure}[!htpb]
    \centering
    \includegraphics[width=0.49\linewidth]{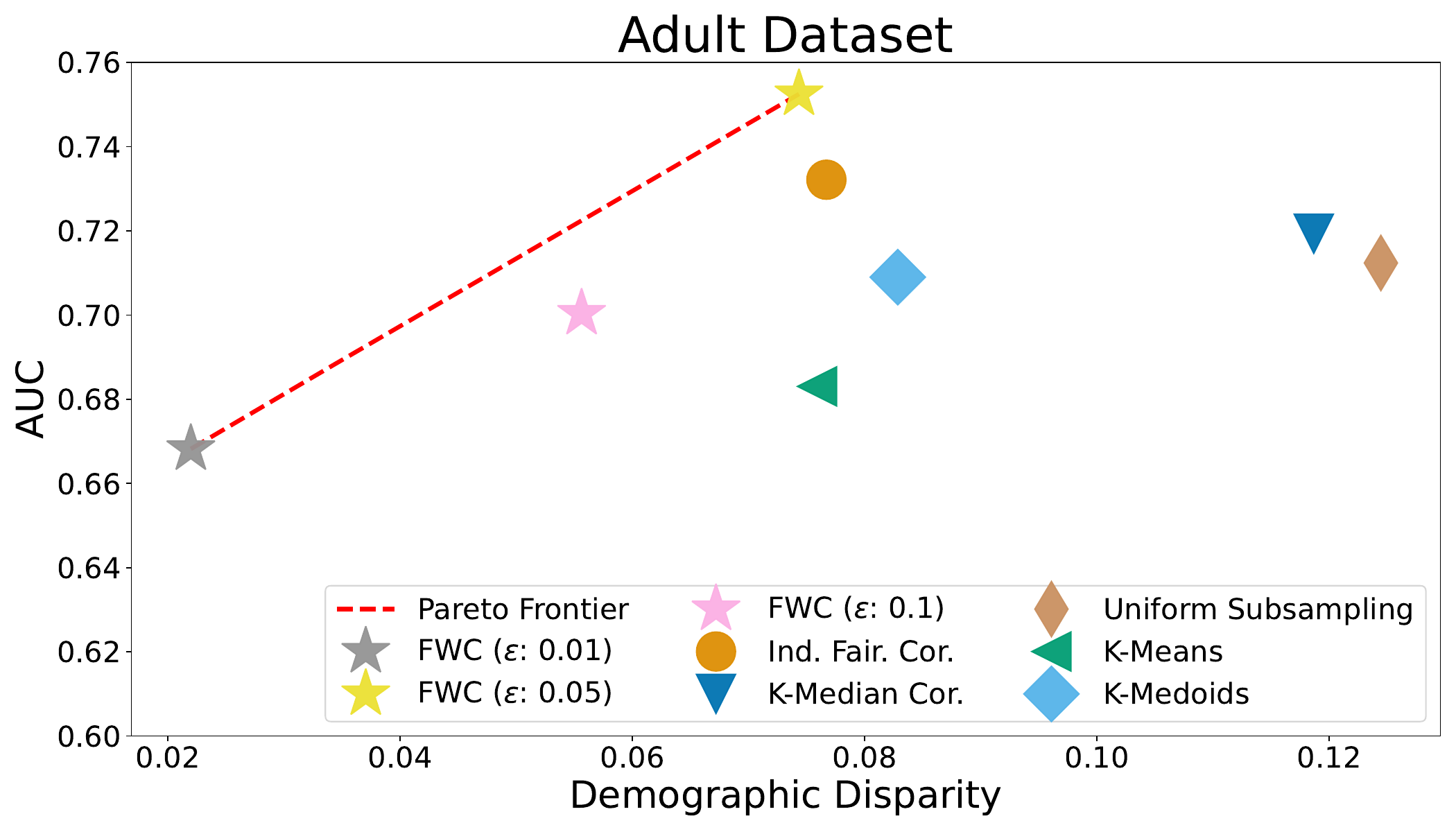}
    \includegraphics[width=0.49\linewidth]{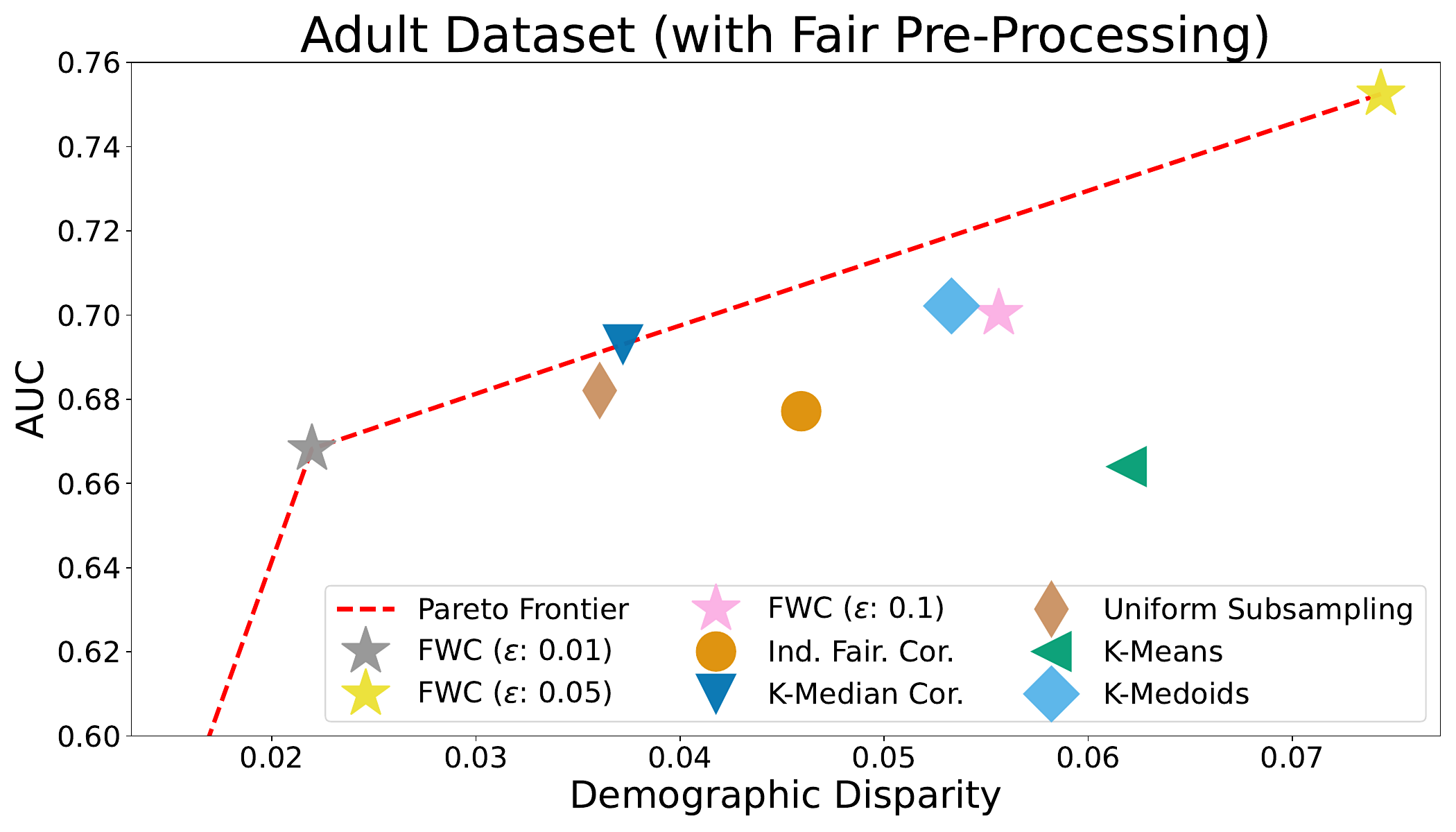}
    \includegraphics[width=0.49\linewidth]{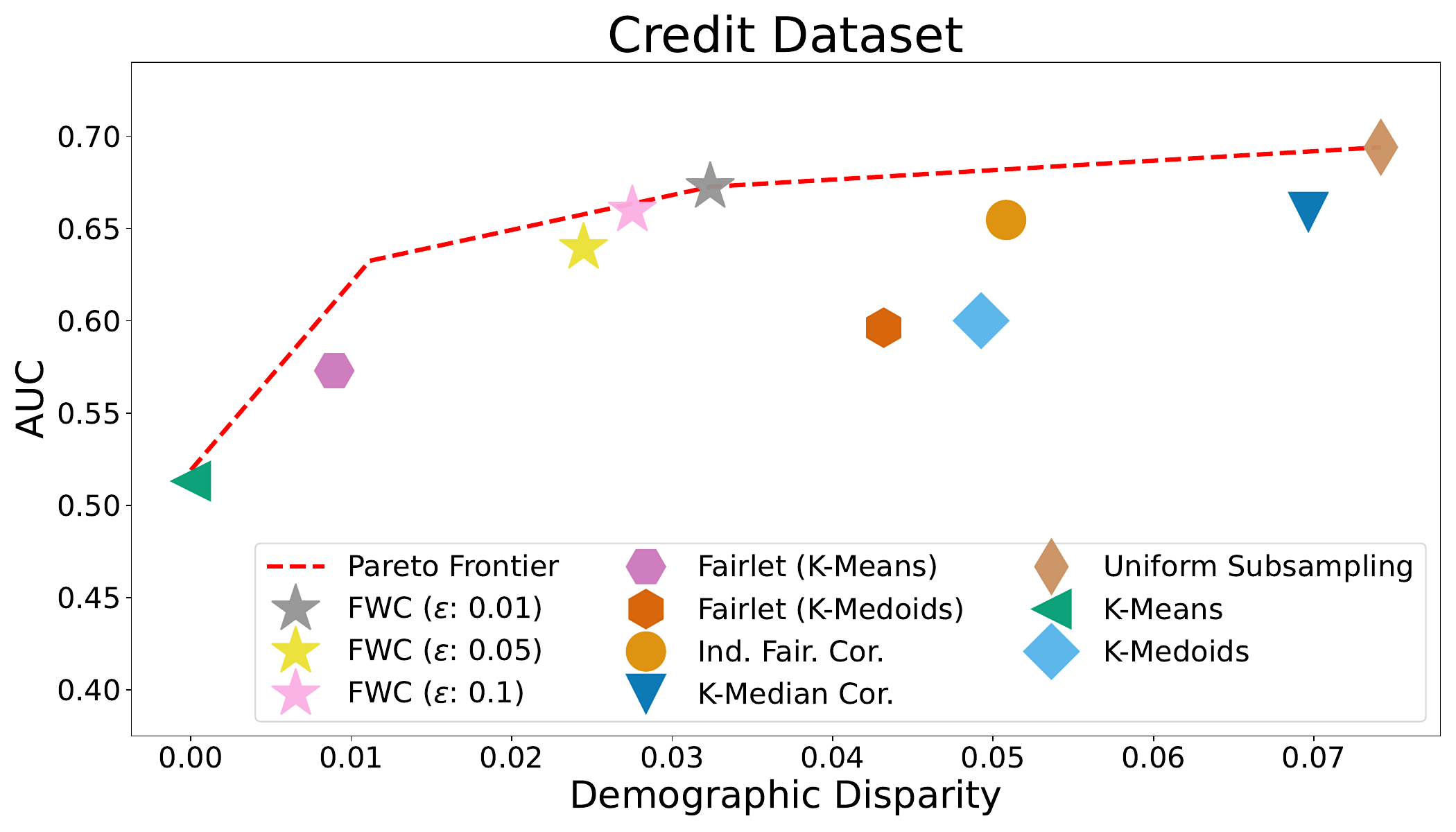}
    \includegraphics[width=0.49\linewidth]{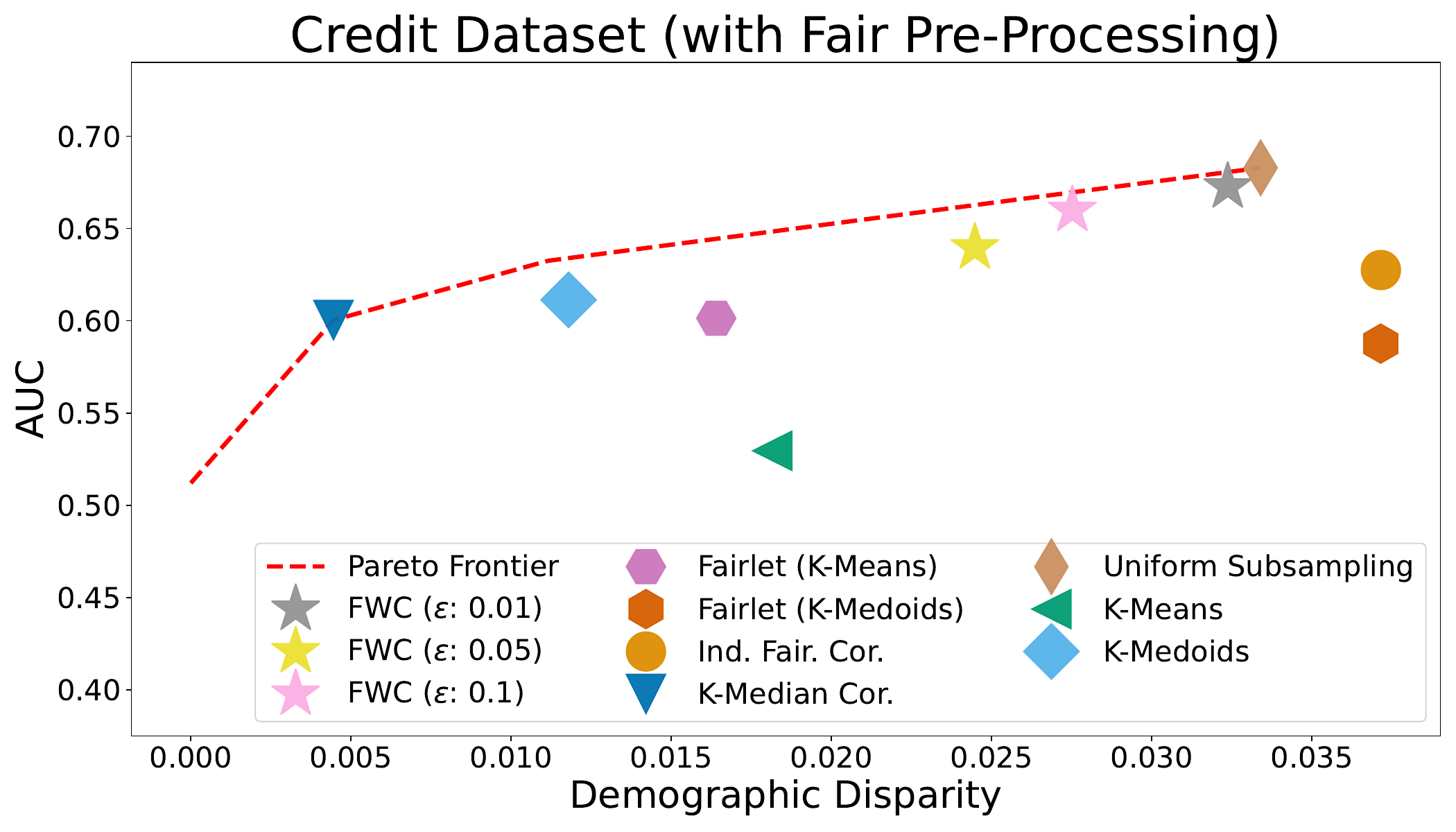}
    \includegraphics[width=0.49\linewidth]{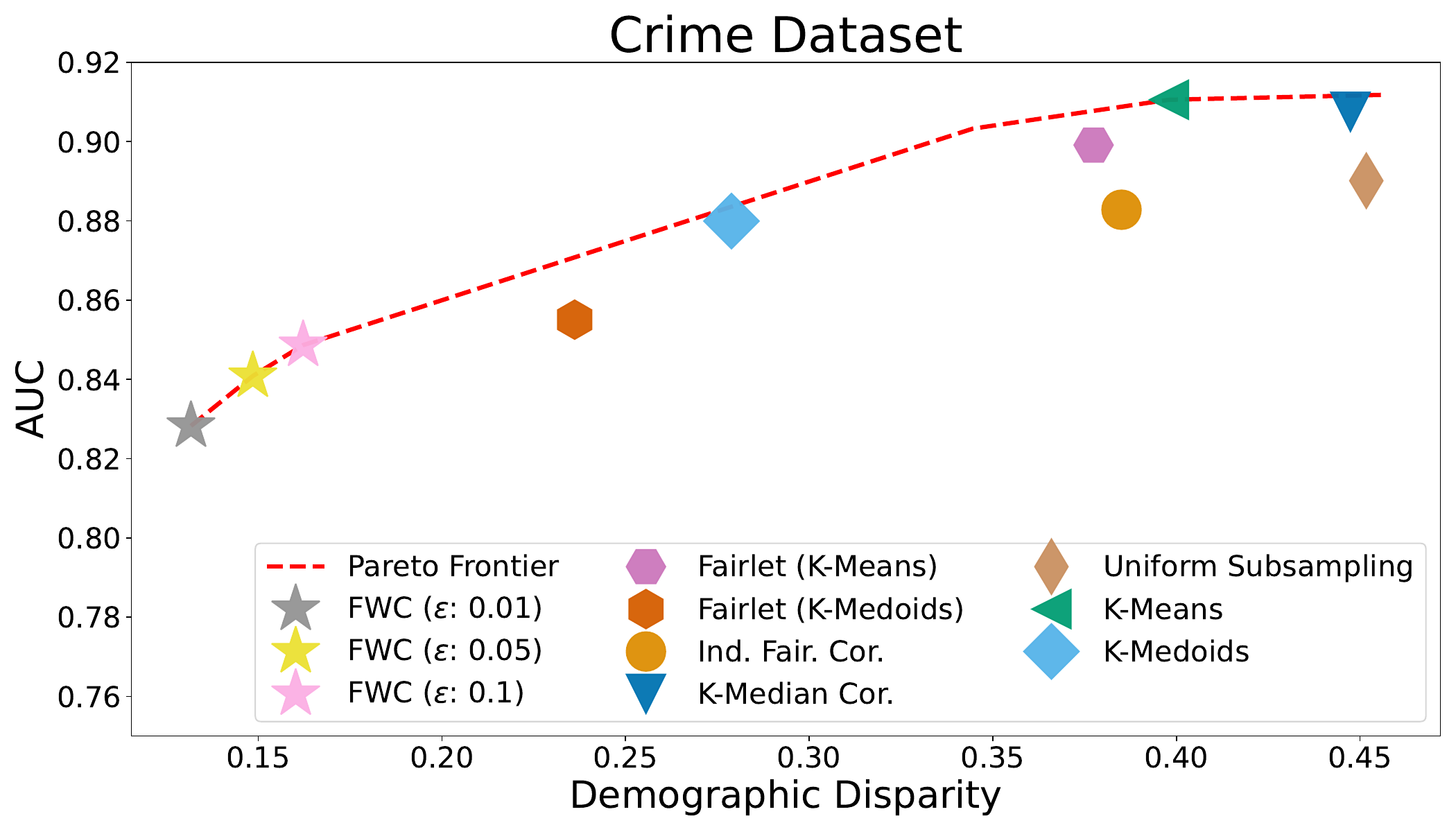}
    \includegraphics[width=0.49\linewidth]{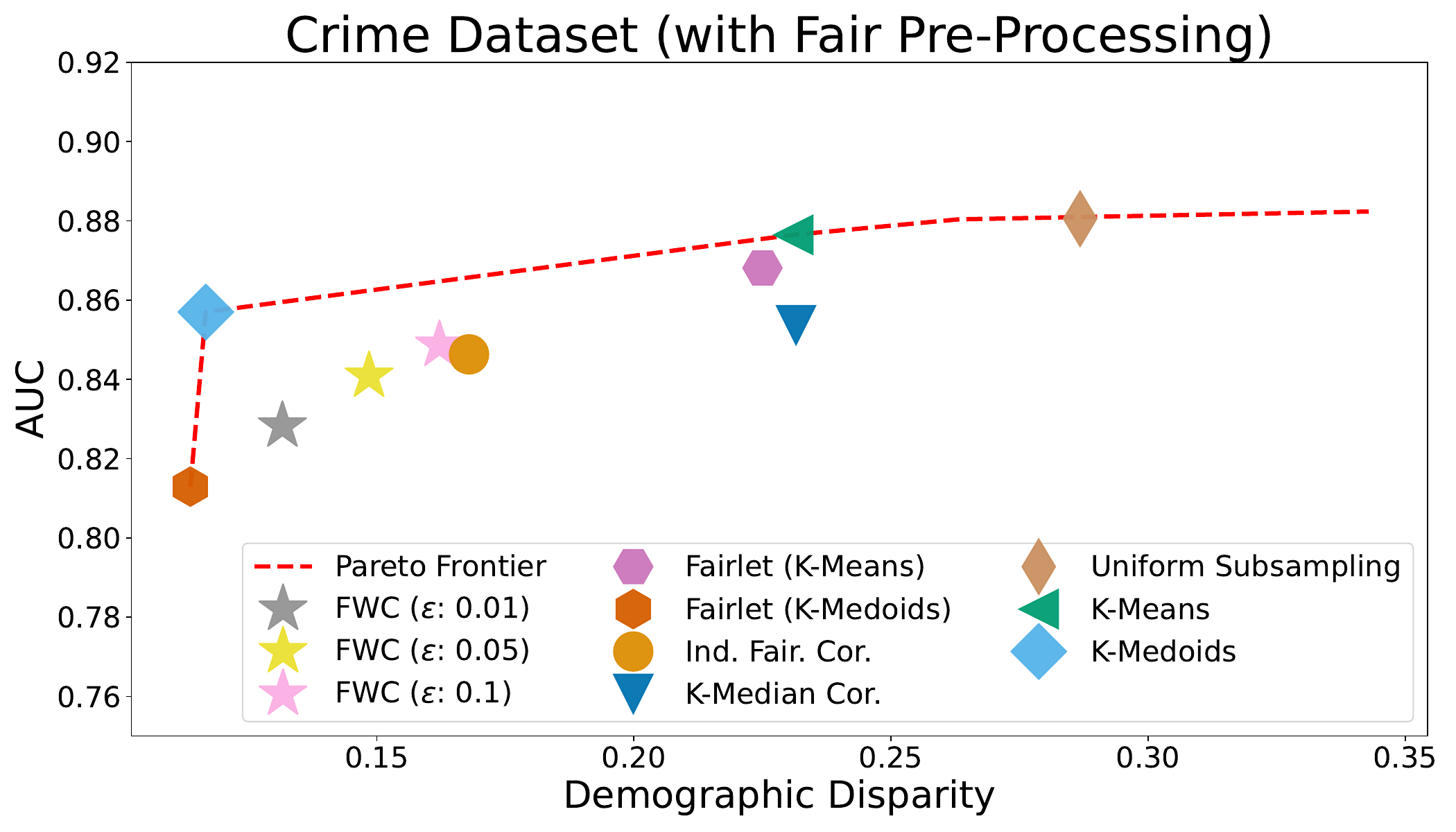}
    \includegraphics[width=0.49\linewidth]{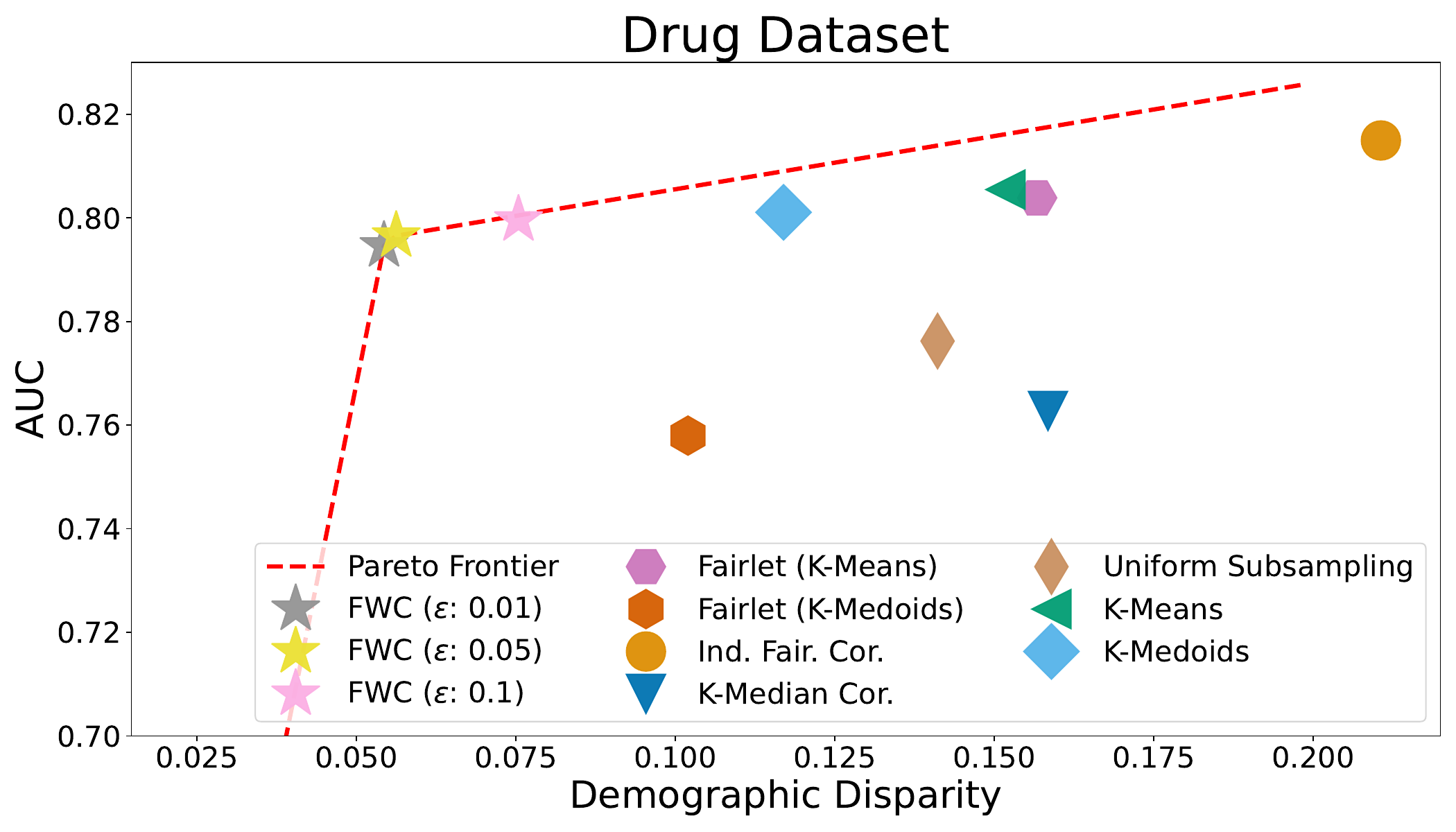}
    \includegraphics[width=0.49\linewidth]{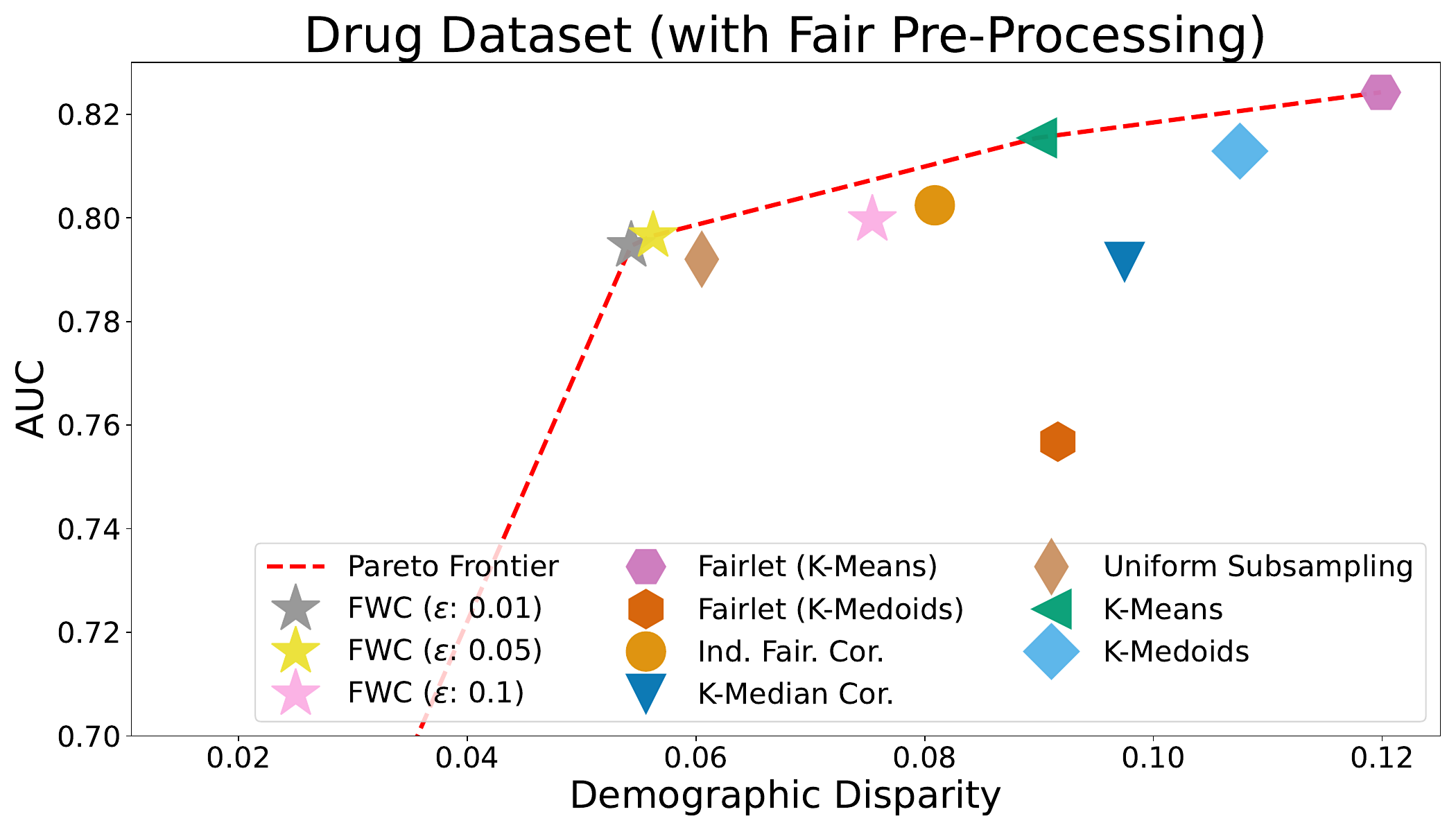}
    \vspace{0.1cm}
    \caption{Fairness-utility tradeoff of all methods, indicated by AUC and demographic disparity of a downstream MLP classifier across all datasets (rows) and without (left column) or with (right column) fair pre-processing \cite{kamiran2012data} (excluding \methodname\ , to which no fairness modification is applied after coresets have been generated). The Pareto frontier, indicated with a dashed red-line, is computed across all models and coreset sizes. We report the averages over 10 separate train/test splits.}
    \label{fig: supp-all-results}
\end{figure}

\begin{figure}[!htpb]
    \centering
    \includegraphics[width=0.49\linewidth]{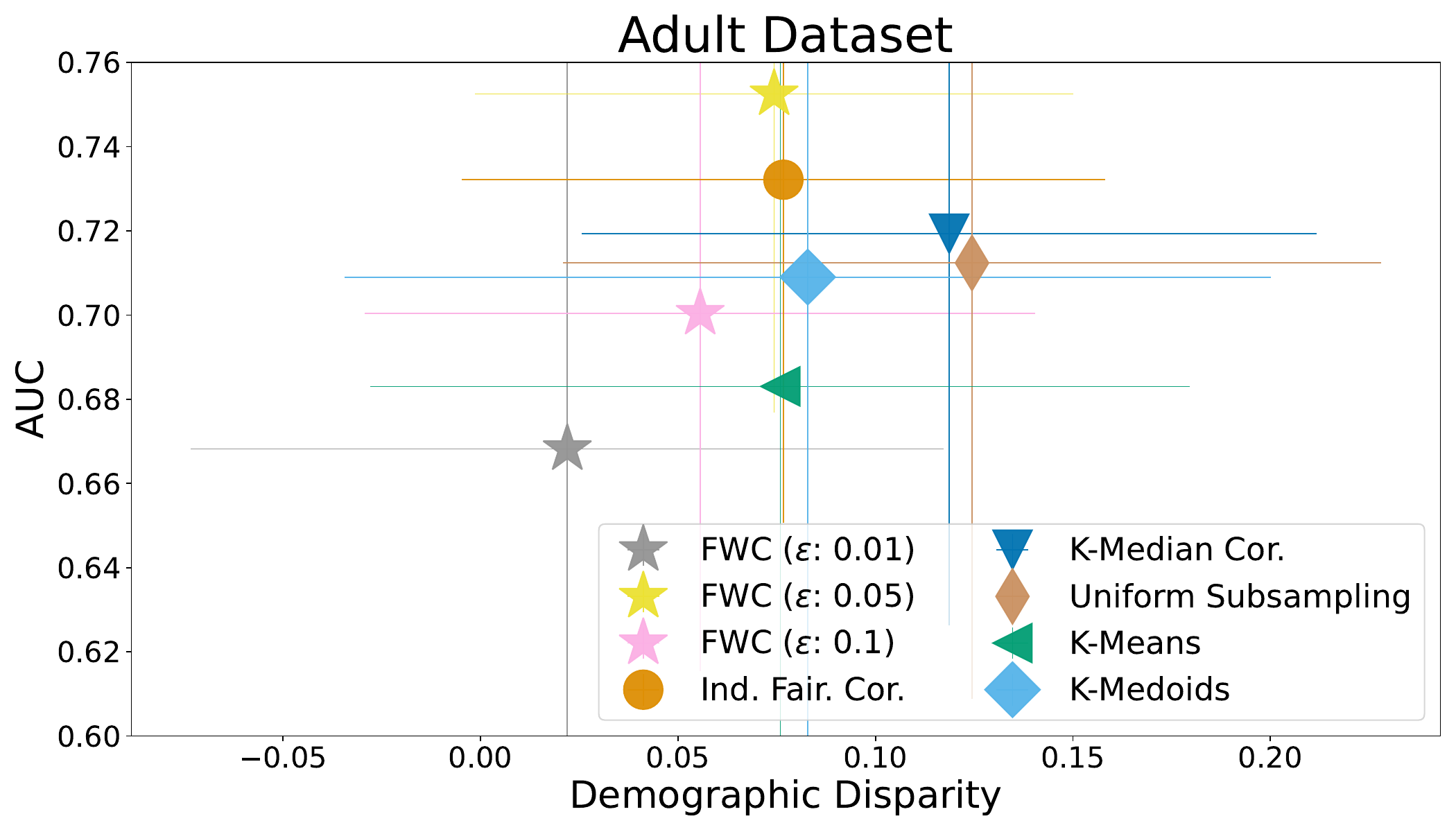}
    \includegraphics[width=0.49\linewidth]{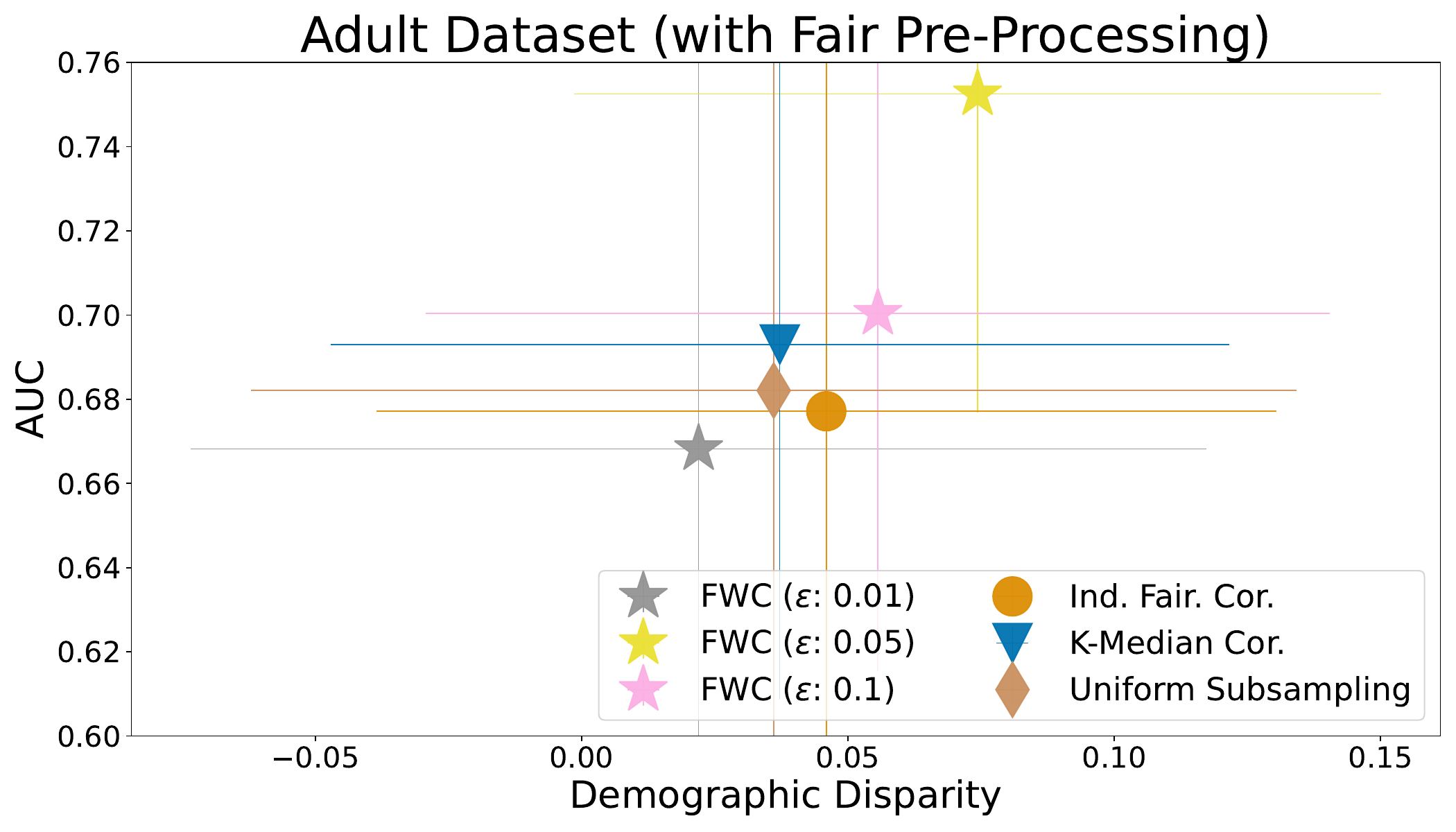}
    \includegraphics[width=0.49\linewidth]{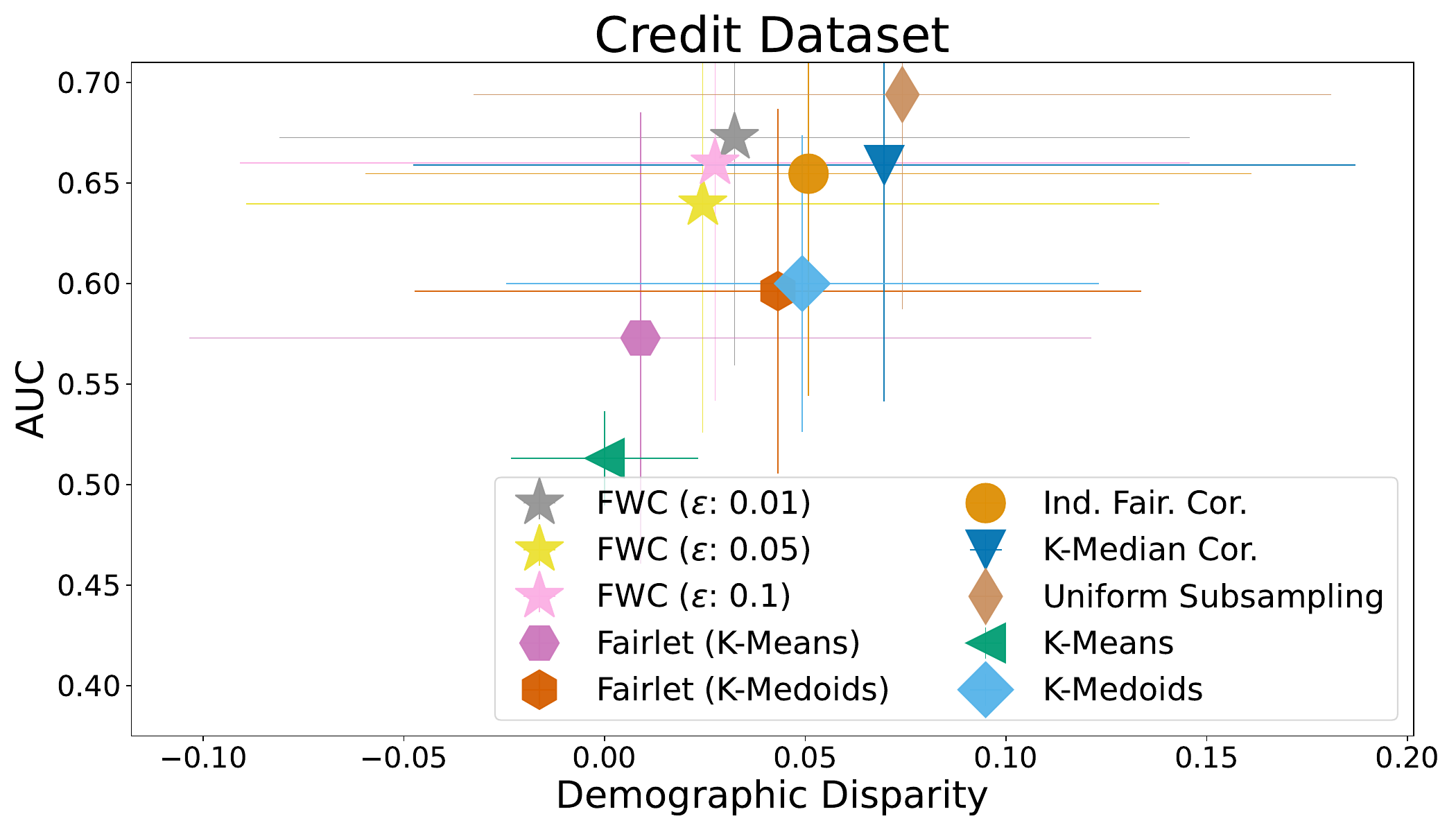}
    \includegraphics[width=0.49\linewidth]{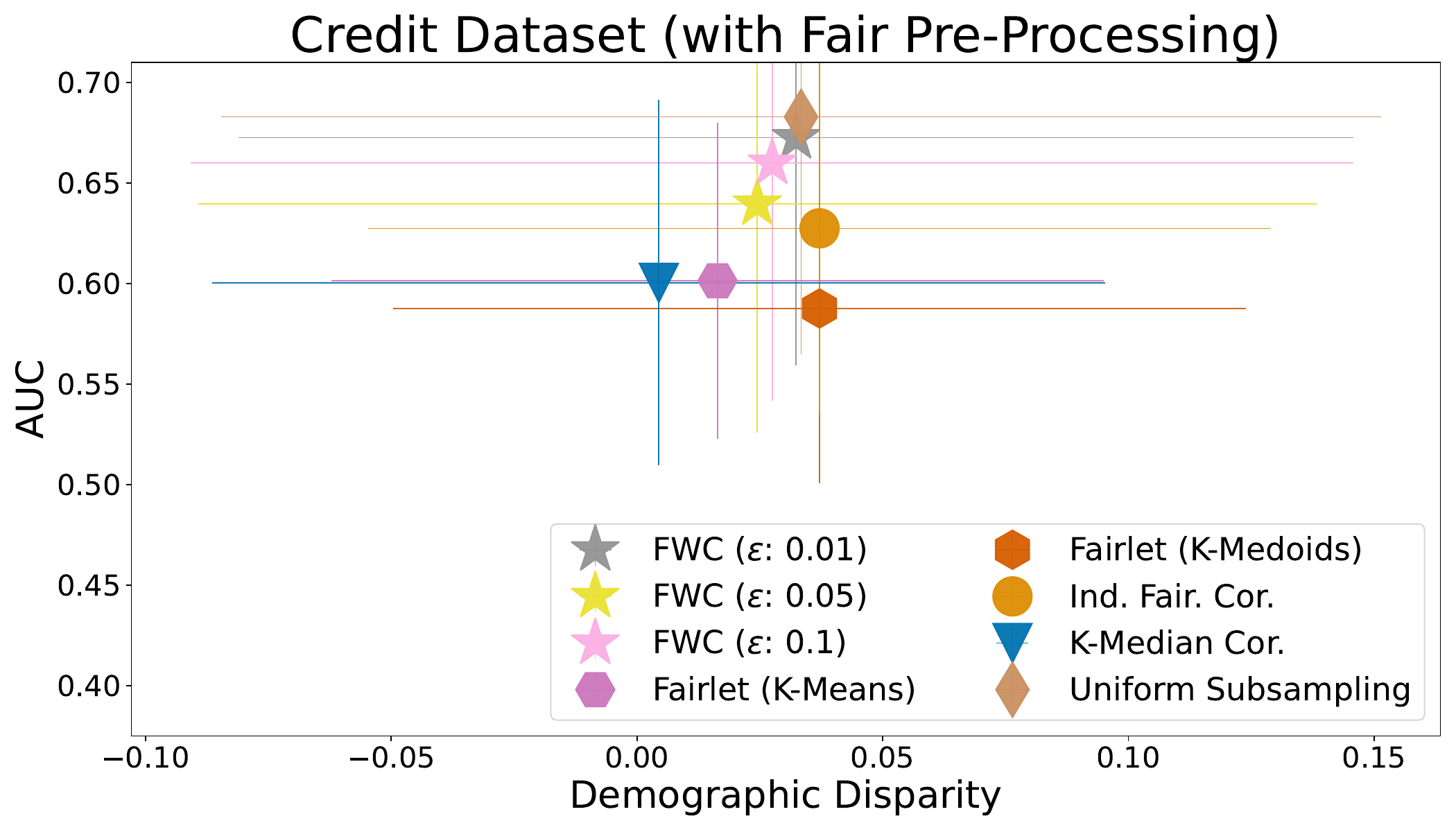}
    \includegraphics[width=0.49\linewidth]{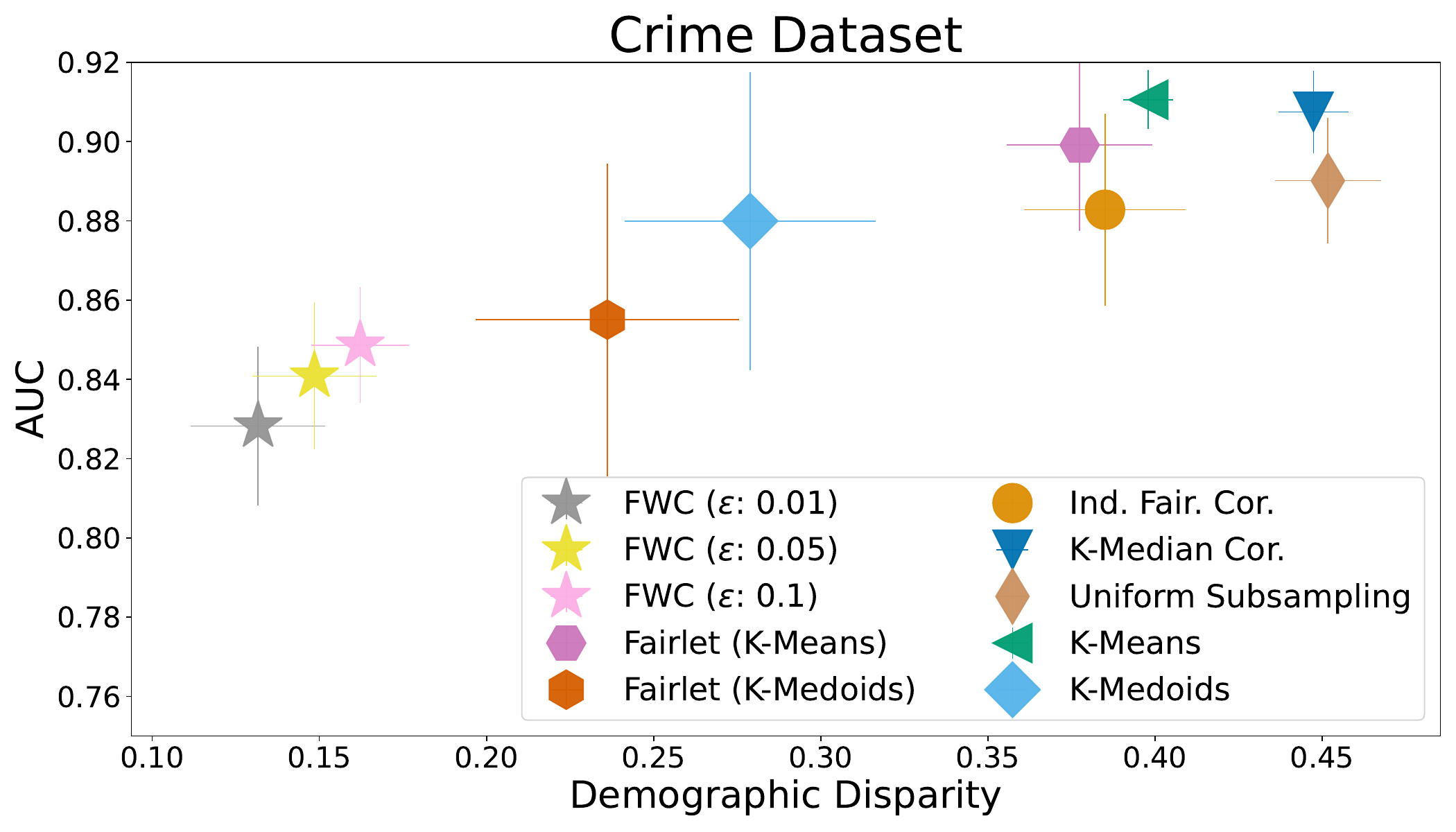}
    \includegraphics[width=0.49\linewidth]{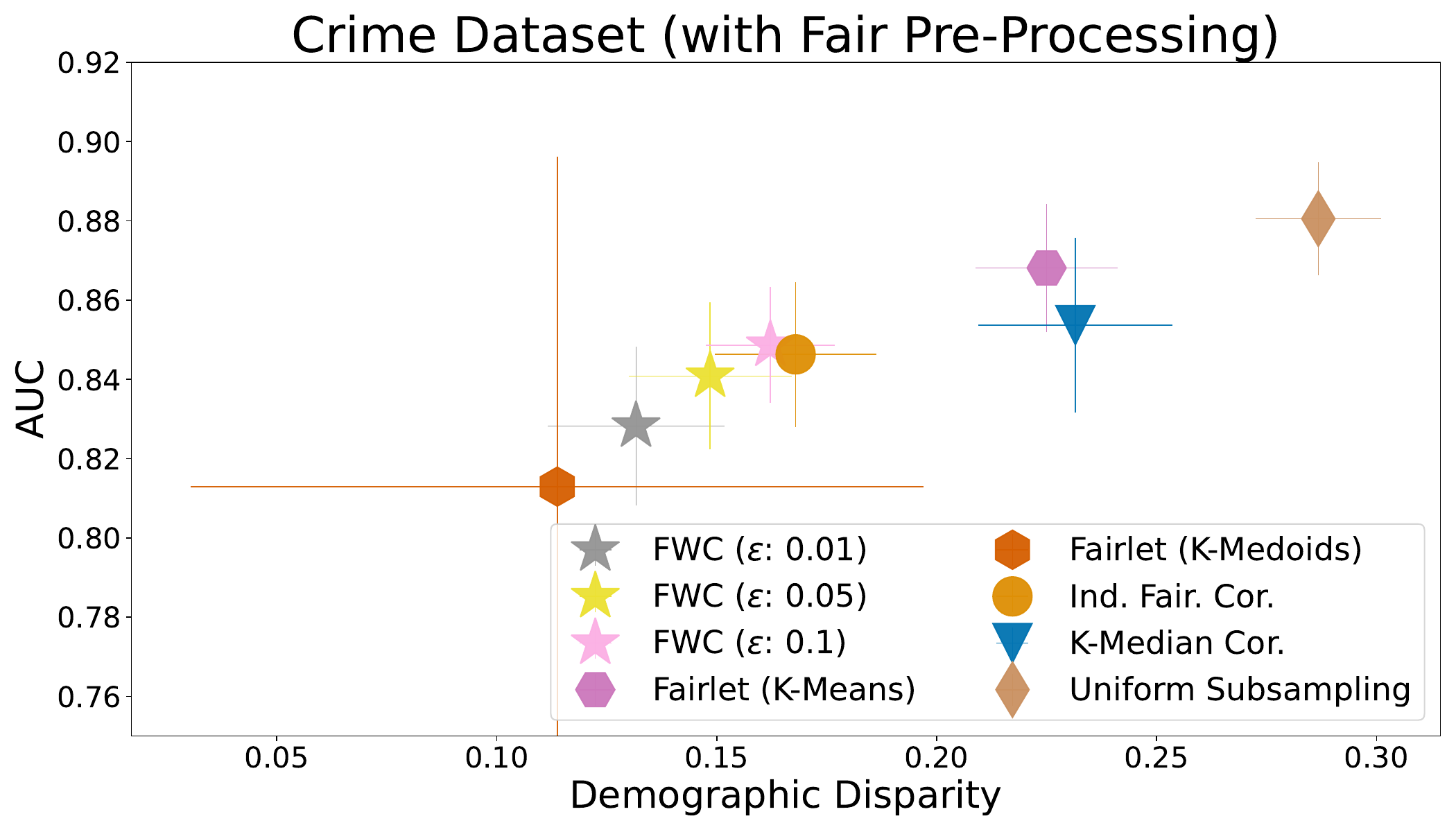}
    \includegraphics[width=0.49\linewidth]{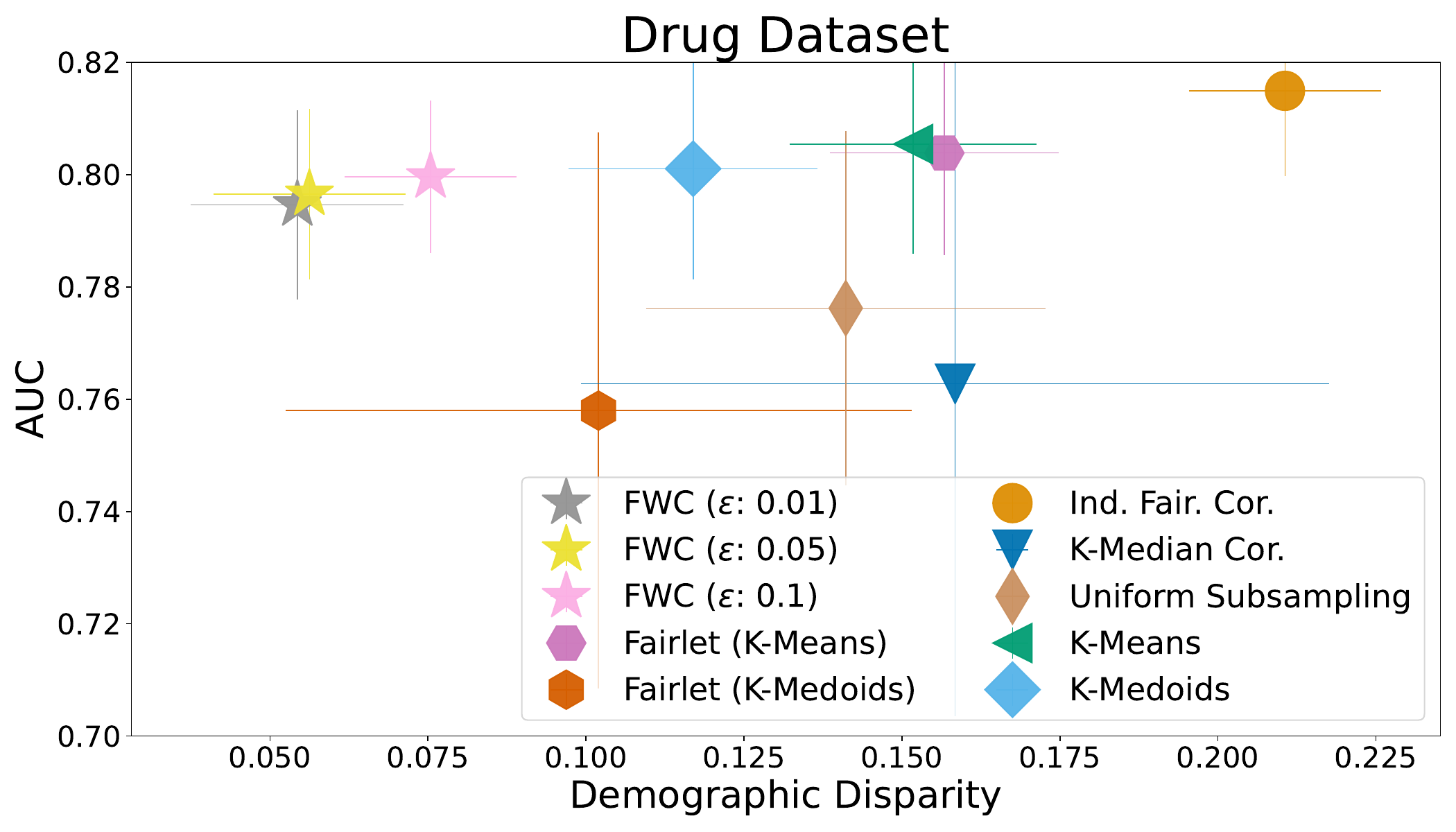}
    \includegraphics[width=0.49\linewidth]{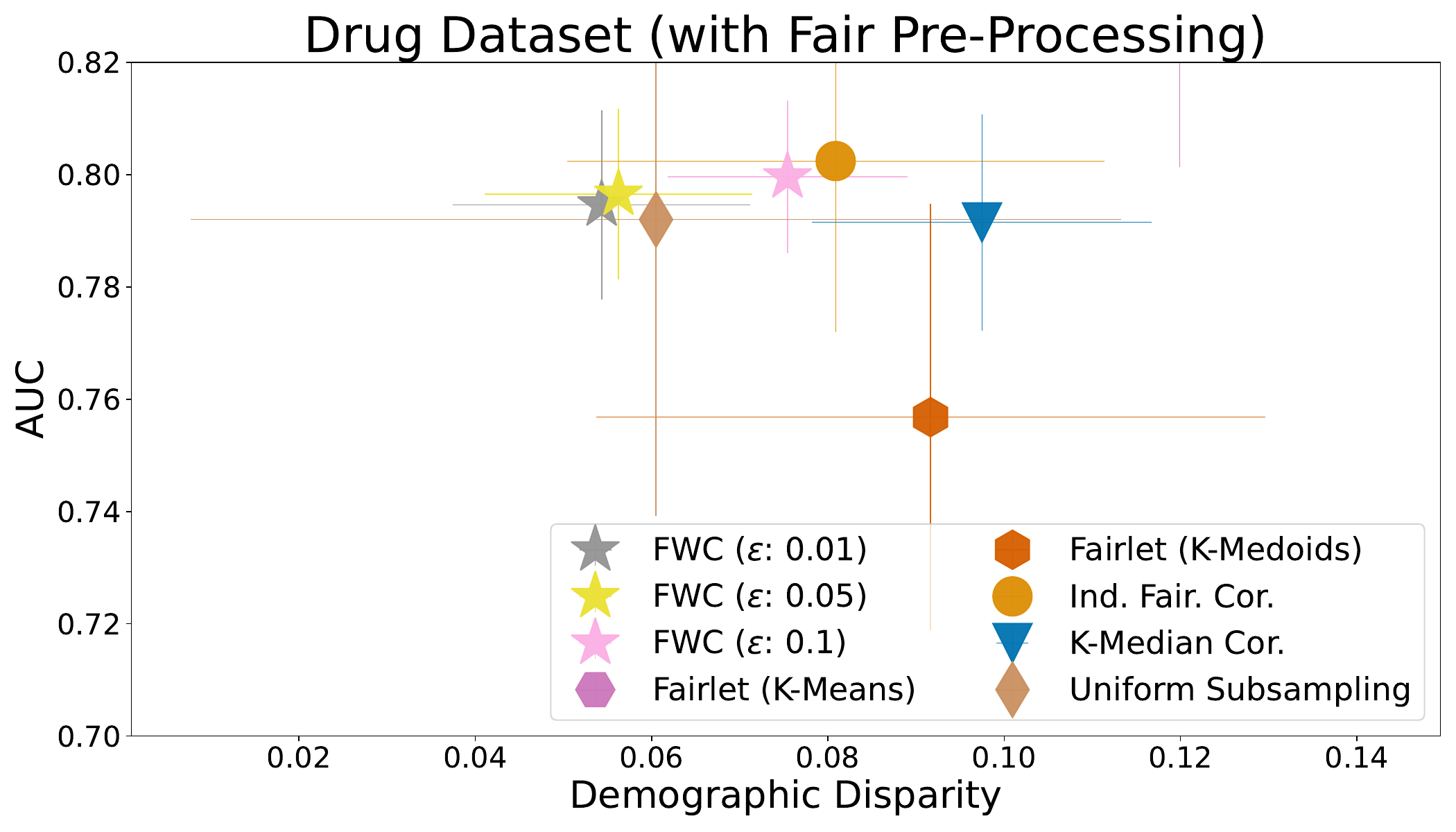}
    \vspace{0.1cm}
    \caption{Similarly to Figure~\ref{fig: supp-all-results}, we report the means and standard deviations over 10 runs of the fairness-utility tradeoff of all methods, indicated by AUC and demographic disparity of a downstream MLP classifier across all datasets (rows) and without (left column) or with (right column) fair pre-processing \cite{kamiran2012data} (excluding \methodname\ , to which no fairness modification is applied after coresets have been generated).}
    \label{fig: supp-all-results-standard-deviations}
\end{figure}

\end{document}